\def\cite{\citet}
\newlength{\figwidth}\setlength{\figwidth}{6.5cm}
\newlength{\figwidthsmall}\setlength{\figwidthsmall}{5.0cm}
\newcommand{\myfig}[1]{\includegraphics[width=\figwidth,clip]{#1}} 
\newcommand{\myfigc}[1]{\includegraphics[width=\figwidthsmall,clip]{#1}} 
\newcommand{\myfigbig}[1]{\includegraphics[width=2\figwidth,clip]{#1}} 
\newtheorem{theo}{Theorem}
\newtheorem{lemm}{Lemma}
\newcommand{\trace}{\mathop{\rm tr}\nolimits}
\newcommand{\diag}{\mathop{\rm diag}\nolimits}
\newcommand{\Diag}{\mathop{\rm Diag}\nolimits}
\newcommand{\bmh}[1]{\bm{\hat {#1}}}
\newcommand{\bmb}[1]{\bm{\bar {#1}}}
\newcommand{\bmt}[1]{\bm{\tilde {#1}}}
\newcommand{\inv}{^{-1}}
\newcommand{\oh}{^{-1/2}}
\newcommand{\ohh}{^{-1}}
\newcommand{\ohhh}{^{-3/2}}
\newcommand{\ohhhh}{^{-2}}
\newcommand{\ga}{\gamma}
\newcommand{\hlambda}{\hat \lambda}
\newcommand{\blambda}{\bar \lambda}
\newcommand{\stilde}{\tilde s}
\begin{document}

\begin{frontmatter}
\title{Cross-validation of matching correlation analysis
by resampling matching weights}
\runtitle{cross-validation of matching correlation analysis}
\author{\fnms{Hidetoshi}
\snm{Shimodaira}\corref{}\ead[label=e1]{shimo@sigmath.es.osaka-u.ac.jp}\thanksref{t1}}
\thankstext{t1}{
Supported in part by JSPS KAKENHI Grant (24300106, 26120523).}
\address{
Division of Mathematical Science\\Graduate School of
 Engineering Science\\
Osaka University\\
 1-3 Machikaneyama-cho\\Toyonaka, Osaka, Japan\\
 \printead{e1}\\
 \today}

\runauthor{H.~SHIMODAIRA}

\begin{abstract}

The strength of association between a pair of data vectors is represented by a nonnegative real
number, called matching weight. For dimensionality reduction, we consider a linear transformation of
data vectors, and define a \emph{matching error} as the weighted sum of squared distances between
transformed vectors with respect to the matching weights. Given data vectors and matching weights,
the optimal linear transformation minimizing the matching error is solved by the spectral graph
embedding of \cite{yan2007graph}. This method is a generalization of the canonical correlation
analysis, and will be called as matching correlation analysis (MCA). In this paper, we consider  a
novel sampling scheme where the observed matching weights are randomly sampled from underlying true
matching weights with small probability, whereas the data vectors are treated as constants. We then
investigate a cross-validation by resampling the matching weights. Our asymptotic theory shows that
the cross-validation, if rescaled properly, computes an unbiased estimate of the matching error with
respect to the true matching weights. Existing ideas of cross-validation for resampling data
vectors, instead of resampling matching weights, are not applicable here. MCA can be used for data
vectors from multiple domains with different dimensions via an embarrassingly simple idea of coding
the data vectors. This method will be called as cross-domain matching correlation analysis (CDMCA),
and an interesting connection to the classical associative memory model of neural networks is also
discussed.

\end{abstract}

\begin{keyword}
 \kwd{asymptotic theory}
 \kwd{cross-validation}
 \kwd{resampling}
 \kwd{matching weight}
 \kwd{multiple domains}
 \kwd{cross-domain matching}
 \kwd{canonical correlation analysis}
 \kwd{multivariate analysis}
 \kwd{spectral graph embedding}
 \kwd{associative memory}
 \kwd{sparse coding}
\end{keyword}

\end{frontmatter}

\section{Introduction} \label{sec:intro}

We have $N$ data vectors of $P$ dimensions. Let $\bm{x}_1,\ldots,\bm{x}_N \in \mathbb{R}^P$ be the
data vectors, and $\bm{X}=(\bm{x}_1,\ldots,\bm{x}_N)^T \in \mathbb{R}^{N\times P}$ be the data
matrix. We also have \emph{matching weights} between the data vectors. Let  $w_{ij}=w_{ji}\ge0$,
$i,j=1,\ldots,N$, be the matching weights, and  $\bm{W}=(w_{ij}) \in \mathbb{R}^{N \times N}$ be the
matching weight matrix. The matching weight $w_{ij}$ represents the strength of association between
$\bm{x}_i$ and $\bm{x}_j$.  For dimensionality reduction, we will consider a linear transformation
from $\mathbb{R}^P$ to $\mathbb{R}^K$ for some $K\le P$ as \[ \bm{y}_i = \bm{A}^T \bm{x}_i,\quad
i=1,\ldots,N, \]  or $\bm{Y}=\bm{X} \bm{A}$, where $\bm{A} \in \mathbb{R}^{P \times K}$ is the
linear transformation matrix, $\bm{y}_1,\ldots,\bm{y}_N \in \mathbb{R}^K$ are the transformed
vectors, and $\bm{Y}=(\bm{y}_1,\ldots,\bm{y}_N )^T \in \mathbb{R}^{N\times K} $ is the transformed
matrix. Observing $\bm{X}$ and $\bm{W}$, we would like to find $\bm{A}$ that minimizes the
\emph{matching error}   \[  \phi = \frac{1}{2} \sum_{i=1}^N \sum_{j=1}^N w_{ij} \|\bm{y}_i -
\bm{y}_j \|^2 \] under some constraints. We expect that the distance between $\bm{y}_i$ and
$\bm{y}_j$ will be small when $w_{ij}$ is large, so that the locations of transformed vectors
represent both the locations of the data vectors and the associations between data vectors.   The
optimization problem for finding $\bm{A}$ is solved by the spectral graph embedding for
dimensionality reduction of \cite{yan2007graph}. Similarly to principal component analysis (PCA),
the optimal solution is obtained as the eigenvectors of the largest $K$ eigenvalues of some matrix
computed from $\bm{X}$ and $\bm{W}$. In Section~\ref{sec:mca}, this method will be formulated by
specifying the constraints on the transformed vectors and also regularization terms for numerical
stability.  We will call the method as \emph{matching correlation analysis} (MCA), since it is a
generalization of the classical canonical correlation analysis (CCA) of
\cite{hotelling1936relations}.  The matching error will be represented by  \emph{matching
correlations} of transformed vectors, which correspond to the canonical correlations of CCA.

MCA will be called as \emph{cross-domain matching correlation analysis} (CDMCA) when we have data
vectors from multiple domains with different sample sizes and different dimensions. Let $D$ be the
number of domains, and $d=1,\ldots,D$ denote each domain. For example, domain $d=1$ may be for image
feature vectors, and domain $d=2$ may be for word vectors computed by word2vec
\citep{mikolov2013distributed} from texts, where the matching weights between the two domains may
represent tags of images in a large dataset, such as Flickr. From domain $d$, we get data vectors
$\bm{x}^{(d)}_i \in \mathbb{R}^{p_d}$, $i=1,\ldots,n_d$, where $n_d$ is the number of data vectors,
and $p_d$ is the dimension of the data vector. Typically, $p_d$ is hundreds, and $n_d$ is thousands
to millions. We would like to retrieve relevant words from an image query, and alternatively
retrieve images from a word query. Given matching weights across/within domains, we attempt to find
linear transformations of data vectors from multiple domains to a ``common space'' of lower
dimensionality so that the distances between transformed vectors well represent the matching
weights. This problem is solved by an embarrassingly simple idea of coding the data vectors, which
is similar to that of \cite{daume2007frustratingly}. Each data vector from domain $d$ is represented
by an augmented data vector $\bm{x}_i$ of dimension $P=\sum_{d=1}^D p_d$, where only $p_d$
dimensions are for the original data vector and the rest of $P-p_d$ dimensions are padded by zeros.
In the case of $D=2$ with $p_1=2$, $p_2=3$, say, a data vector $(1,2)^T$ of domain 1 is represented
by $(1,2,0,0,0)^T$, and $(3,4,5)^T$ of domain 2 is represented by $(0,0,3,4,5)^T$. The number of
total augmented data vectors is $N=\sum_{d=1}^D n_d$. Note that the above mentioned ``embarrassingly
simple coding'' is not actually implemented by padding zeros in computer software; only the nonzero
elements are stored in memory, and CDMCA is in fact implemented very efficiently for sparse
$\bm{W}$. CDMCA is illustrated in  a numerical example of Section~\ref{sec:example}. CDMCA is
further explained in Appendix~\ref{sec:coding}, and an interesting connection to the classical
associative memory model of neural networks \citep{kohonen1972correlation,nakano1972associatron} is
also discussed in Appendix~\ref{sec:associative-memory}.

CDMCA is solved by applying the single-domain version of MCA described in Section~\ref{sec:mca} to
the augmented data vectors, and thus we only discuss the single-domain version in this paper. This
formulation of CDMCA includes a wide class of problems of multivariate analysis, and similar
approaches are very popular recently in pattern recognition and vision \citep{correa2010multi,
yuan2011novel, kan2012multi,shi2013transfer, wang2013learning,gong2014multi,yuan2014graph}.
CDMCA is equivalent to the method of \cite{nori2012multinomial} for multinomial relation prediction
if the matching weights are defined by cross-products of the binary matrices representing relations
between objects and instances.
CDMCA is also found in \cite{huang2013cross} for the case of $D=2$. CDMCA reduces to the multi-set
canonical correlation analysis (MCCA) \citep{kettenring1971canonical, takane2008regularized,
tenenhaus2011regularized} when $n_1=\cdots=n_D$ with cross-domain matching weight matrices being
proportional to the identity matrix. It becomes the classical CCA by further letting $D=2$, or it
becomes PCA  by letting $p_1=p_2=\cdots=p_D=1$.

In this paper, we discuss a cross-validation method for computing the matching error of MCA. In
Section~\ref{sec:matchingerrors}, we will define two types of matching errors, i.e., fitting error
and true error, and introduce cross-validation (cv) error for estimating the true error. In order to
argue distributional properties of MCA, we consider the following sampling scheme. First, the data
vectors are treated as constants. Similarly to the explanatory variables in regression analysis, we
perform conditional inference given data matrix $\bm{X}$, although we do not avoid assuming that
$\bm{x}_i$'s are sampled from some probability distribution. Second, the matching weights  $w_{ij}$
are randomly sampled from underlying true matching weights $\bar w_{ij}$ with small probability
$\epsilon>0$. The value of $\epsilon$ is unknown and it should not be used in our inference. Let
$z_{ij}=z_{ji}\in \{0,1\}$,  $i,j=1,\ldots,N$, be samples from Bernoulli trial with success
probability $\epsilon$, where the number of independent elements is $N(N+1)/2$ due to the symmetry.
Then the observed matching weights are defined as 
\begin{equation} \label{eq:zij} 
  w_{ij} = z_{ij} \bar w_{ij},\quad P(z_{ij}=1)=\epsilon. 
\end{equation}
The true matching weight matrix
$\bmb{W}=(\bar w_{ij})\in \mathbb{R}^{N\times N}$ is treated as an unknown constant matrix with
elements $\bar w_{ij}=\bar w_{ji} \ge 0$. This setting will be appropriate for a large-scale data,
such as those obtained automatically from the web,  where only a small portion $\bm{W}$ of the true
association $\bmb{W}$ may be obtained as our knowledge.  

In Section~\ref{sec:cverror}, we will consider a resampling scheme corresponding to (\ref{eq:zij}).
For the cross-validation, we resample $\bm{W}^*$ from $\bm{W}$ with small probability $\kappa>0$,
whereas $\bm{X}$ is left untouched. Our sampling/resampling scheme is very unique in the sense that
the source of randomness is $\bm{W}$ instead of $\bm{X}$, and existing results of cross-validation
for resampling from $\bm{X}$ such as \cite{stone1977asymptotic} and \cite{golub1979generalized} are
not applicable here. The traditional method of resampling data vectors is discussed in
Section~\ref{sec:node-sampling}.

The true error is defined with respect to the unknown $\bmb{W}$, and the fitting error is defined
with respect to the observed $\bm{W}$.  We would like to look at the true error for finding
appropriate values of the regularization terms (regularization parameters are generally denoted as
$\ga$ throughout) and the dimension $K$ of the transformed vectors. However, the true error is
unavailable, and the fitting error is biased for estimating the true error. The main thrust of this
paper is to show asymptotically that the cv error, if rescaled properly, is an unbiased estimator of
the true error. The value of $\epsilon$ is unnecessary for computing the cv error, but $\bm{W}$
should be a sparse matrix. The unbiasedness of the cv error is illustrated by a simulation study in
Section~\ref{sec:simulation}, and it is shown theoretically by the asymptotic theory of $N\to\infty$
in Section~\ref{sec:asymptotic}.

\section{Illustrative example} \label{sec:example}

Let us see an example of CDMCA applied to the MNIST database of handwritten digits (see
Appendix~\ref{sec:mnist} for the experimental details). The number of domains is $D=3$ with the
number of vectors $n_1=60,000$, $n_2=10$, $n_3=3$, and dimensions $p_1=2784$, $p_2=100$, $p_3=50$.
The handwritten digit images are stored in domain $d=1$, while domain $d=2$ is for the digit labels
``zero'', ``one'', ... , ``nine'', and domain $d=3$ is for attribute labels ``even'', ``odd'',
``prime''. This CDMCA is also interpreted as MCA with $N=60,013$ and $P=2934$.

The elements of $\bmb{W}$ are simply the indicator variables (called dummy variables in statistics)
of image labels. Instead of working on $\bmb{W}$, here we made $\bm{W}$ by sampling 20\% of the
elements from $\bmb{W}$ for illustrating how CDMCA works. The optimal $\bm{A}$ is computed from
$\bm{W}$ using the method described in Section~\ref{sec:regularization} with regularization
parameter $\gamma_M=0.1$. The data matrix $\bm{X}$ is centered, and the transformed matrix $\bm{Y}$
is rescaled. The first and second elements of $\bm{y}_i$, namely, $(y_{i1}, y_{i2})$, $i=1,\ldots,
N$, are shown in  Fig.~\ref{fig:mnist-map12}. For the computation
of $\bm{A}$, we do not have to specify the value of $K$ in advance. Similar to PCA, we first solve
the optimal $\bm{A} = (\bm{a}^1, \ldots, \bm{a}^P) \in \mathbb{R}^{P\times P}$ for the case of
$K=P$, then take the first $K$ columns to get the optimal $\bm{A} = (\bm{a}^1, \ldots, \bm{a}^K) \in
\mathbb{R}^{P\times K}$ for any $K \le P$. We observe that images and labels are placed in the
common space so that they represent both $\bm{X}$ and $\bm{W}$. Given a digit image, we may find the
nearest digit label or attribute label to tell what the image represents.

The optimal $\bm{A}$ of $K=9$ is then computed for several $\gamma_M$ values. For each $\bm{A}$, the
10000 images of test dataset are projected to the common space and the digit labels and attribute
labels are predicted.  We observe in Fig.~\ref{fig:mnist-classification-error} that the
classification errors become small when the regularization parameter is around $\gamma_M=0.1$. Since
$\bm{x}_i$ does not contribute to $\bm{A}$ if $\sum_{j=1}^N w_{ij}=0$, these error rates are
computed using only 20\% of $\bm{X}$;  they improve to 0.0359 ($d=2$) and 0.0218 ($d=3$) if
$\bmb{W}$ is used for the computation of $\bm{A}$ with $K=11$ and $\gamma_M=0$.

It is important to choose an appropriate value of $\gamma_M$ for minimizing the classification
error. We observe in Fig.~\ref{fig:mnist-matching-error} that the curve of the true matching error
of the test dataset is similar to the curves of the classification errors. However, the fitting
error wrongly suggests that a smaller $\gamma_M$ value would be better. Here, the fitting error is
the matching error computed from the training dataset, and it underestimates the true matching
error. On the other hand, the matching error computed by the cross-validation method
of Section~\ref{sec:cverror} correctly suggests that $\gamma_M=0.1$ is a good choice.

\section{Matching correlation analysis} \label{sec:mca}

\subsection{Matching error and matching correlation} \label{sec:matching}

Let $\bm{M} \in \mathbb{R}^{N\times N}$ be the diagonal matrix of row (column) sums of $\bm{W}$. \[
\bm{M}=\diag(m_1,\ldots,m_N),\quad m_i = \sum_{j=1}^N w_{ij}. \] 
This is also expressed as $\bm{M}=\diag(\bm{W}\bm{1}_N)\in\mathbb{R}^{N\times N}$ using $\bm{1}_N\in
\mathbb{R}^N$, the vector with all elements one. $\bm{M}-\bm{W}$ is sometimes called as the graph
Laplacian. This notation will be applied to other weight matrices, say,  $\bmb{M}$ for $\bmb{W}$.
Key notations are shown in Table~\ref{tab:notations}.

Column vectors of matrices will be denoted by superscripts. For example, the $k$-th component of
$\bm{Y}=(y_{ik};\, i=1,\ldots,N,\, k=1,\ldots,K)$ is  $\bm{y}^k=(y_{1k},\ldots,y_{Nk})^T \in
\mathbb{R}^N$ for $k=1,\ldots,K$, and we write $\bm{Y}=(\bm{y}^1,\ldots,\bm{y}^K)$. Similarly,
$\bm{X}=(\bm{x}^1,\ldots,\bm{x}^P)$ with $\bm{x}^k\in \mathbb{R}^N$, and
$\bm{A}=(\bm{a}^1,\ldots,\bm{a}^K)$ with $\bm{a}^k\in \mathbb{R}^P$. The linear transformation is
now written as $\bm{y}^k = \bm{X} \bm{a}^k$, $k=1,\ldots,K$.

The matching error of the $k$-th component $\bm{y}^k$ is defined by \[     \phi_k =  \frac{1}{2}
\sum_{i=1}^N \sum_{j=1}^N w_{ij} (y_{ik}- y_{jk})^2,  \] and the matching error of all the
components is $\phi=\sum_{k=1}^K \phi_k$. By noticing $\bm{W}=\bm{W}^T$, the matching error is
rewritten as \[ \begin{split}  \phi_k &= \frac{1}{2} \sum_{i=1}^N  m_i y_{ik}^2 +
\frac{1}{2}\sum_{j=1}^N m_j y_{jk}^2   - \sum_{i=1}^N \sum_{j=1}^N w_{ij} y_{ik}  y_{jk}\\   &=
\bm{y}^{k\,T} (\bm{M} - \bm{W}) \bm{y}^k. \end{split} \] Let us specify constraints on $\bm{Y}$ as
\begin{equation} \label{eq:ymy1} \bm{y}^{k\,T} \bm{M} \bm{y}^k=\sum_{i=1}^N m_i y_{ik}^2 = 1,\quad
k=1,\ldots,K. \end{equation} In other words, the weighted variance of $y_{1k},\ldots,y_{Nk}$ with
respect to the weights $m_1,\ldots, m_N$ is fixed as a constant. Note that we say ``variance'' or
``correlation'' although variables are not centered explicitly throughout. The matching error is now
written as \[ \phi_k = 1 -  \bm{y}^{k\,T} \bm{W} \bm{y}^k. \] We call $\bm{y}^{k\,T} \bm{W}
\bm{y}^k$ as the \emph{matching (auto) correlation} of $\bm{y}^k$.

More generally, the matching error between the $k$-th component $\bm{y}^k$ and $l$-th component
$\bm{y}^l$ for $k,l=1,\ldots,K$, is defined by \[     \frac{1}{2} \sum_{i=1}^N \sum_{j=1}^N w_{ij}
(y_{ik}- y_{jl})^2 = 1 -  \bm{y}^{k\,T} \bm{W} \bm{y}^l, \] and the \emph{matching (cross)
correlation} between $\bm{y}^k$ and $\bm{y}^l$ is defined by $\bm{y}^{k\,T} \bm{W} \bm{y}^l$.   This
is analogous to the weighted correlation $\bm{y}^{k\,T} \bm{M} \bm{y}^l$ with respect to the weights
$m_1,\ldots,m_N$, but a different measure of association between $\bm{y}^k$  and $\bm{y}^l$. It is
easily verified that $|\bm{y}^{k\,T} \bm{W} \bm{y}^l | \le 1$ as well as $|\bm{y}^{k\,T} \bm{M}
\bm{y}^l | \le 1$. The matching errors reduce to zero when the corresponding matching correlations
approach 1.

A matching error not smaller than one, i.e., $\phi_k\ge 1$, may indicate the component $\bm{y}^k$ is
not appropriate for representing $\bm{W}$. In other words, the matching correlation should be
positive: $\bm{y}^{k\,T} \bm{W} \bm{y}^k > 0$. For justifying the argument, let us consider the
elements $y_{ik}$, $i=1, \ldots, N$, are independent random variables with mean zero. Then $E(
\bm{y}^{k\,T} \bm{W} \bm{y}^k ) = \sum_{i=1}^N w_{ii} V(y_{ik}) = 0$ if $w_{ii}=0$. 
Therefore random components, if centered properly, give the matching error $\phi_k \approx 1$.

\subsection{The spectral graph embedding for dimensionality reduction} \label{sec:spectral}

We would like to find the linear transformation matrix $\bmh{A}$ that minimizes $\phi = \sum_{k=1}^K
\phi_k$. Here symbols are denoted with hat like $\bmh{Y}=\bm{X}\bmh{A}$ to make a distinction from
those defined in Section~\ref{sec:regularization}. Define $P\times P$ symmetric matrices \[ \bmh{G}
= \bm{X}^T \bm{M} \bm{X},\quad     \bmh{H} = \bm{X}^T \bm{W} \bm{X}. \] Then, we consider the
optimization problem: \begin{gather} \mbox{Maximize}\quad \trace( \bmh{A}^T \bmh{H} \bmh{A}
)\quad\mbox{with  respect to}\quad\bmh{A}\in\mathbb{R}^{P\times K} \label{eq:ahahat}\\ \mbox{subject
to}\quad \bmh{A}^T \bmh{G} \bmh{A} = \bm{I}_K \label{eq:agahat}. \end{gather} The objective function
$\trace(\bmh{A}^T \bmh{H} \bmh{A})=\sum_{k=1}^K \bmh{y}^{k\,T} \bm{W} \bmh{y}^k$ is the sum of
matching correlations of $\bmh{y}^k$, $k=1,\ldots, K$, and thus (\ref{eq:ahahat}) is equivalent to
the minimization of $\phi$ as we wished. The constraints in (\ref{eq:agahat}) are $\bmh{y}^{k\,T}
\bm{M} \bmh{y}^l = \delta_{kl}$, $k,l=1,\ldots, K$. In addition to (\ref{eq:ymy1}), we assumed that
$\bmh{y}^k$, $k=1,\ldots, K$, are uncorrelated each other to prevent $\bmh{a}^1, \ldots, \bmh{a}^K $
degenerating to the same vector.

The optimization problem mentioned above is the same formulation as the spectral graph embedding for
dimensionality reduction of \cite{yan2007graph}. A difference is that $\bm{W}$ is specified by
external knowledge in our setting, while $\bm{W}$ is often specified from $\bm{X}$ in the
graph embedding literature. Similar optimization problems are found in the spectral graph theory \citep{chung1997spectral}, the
normalized graph Laplacian \citep{von2007tutorial}, or the spectral embedding
\citep{belkin2003laplacian} for the case of $\bm{X}=\bm{I}_N$.

\subsection{Regularization and rescaling} \label{sec:regularization}

We introduce regularization terms $\Delta \bm{G}$ and $\Delta \bm{H}$   for numerical stability.
They are $P\times P$ symmetric matrices, and added to   $\bmh{G}$ and $\bmh{H}$. We will  replace
$\bmh{G}$ and $\bmh{H}$ in the optimization problem with
 \[  \bm{G} =
\bmh{G} + \Delta \bm{G},\quad  \bm{H} = \bmh{H} + \Delta \bm{H}. \] The same regularization terms
are considered in \cite{takane2008regularized} for MCCA. 
 We may write $\Delta \bm{G}=\ga_M \bm{L}_M$ and $\Delta
\bm{H}=\ga_W \bm{L}_W$ with prespecified matrices, say,  $\bm{L}_M=\bm{L}_W=\bm{I}_P$, and attempt
to choose appropriate values of the regularization parameters $\ga_M, \ga_W \in \mathbb{R}$.

We then work on the optimization problem:  \begin{gather} \mbox{Maximize}\quad \trace( \bm{A}^T
\bm{H} \bm{A} )\quad\mbox{with  respect to}\quad\bm{A}\in\mathbb{R}^{P\times K} \label{eq:aha}\\
\mbox{subject to}\quad \bm{A}^T \bm{G} \bm{A} = \bm{I}_K \label{eq:aga}. \end{gather} For the
solution of the optimization problem, we denote $\bm{G}^{1/2}\in \mathbb{R}^{P\times P}$ be one of
the matrices satisfying $(\bm{G}^{1/2})^T \bm{G}^{1/2} = \bm{G}$. The inverse matrix is denoted by
$\bm{G}^{-1/2} = (\bm{G}^{1/2} )\inv$. These are easily computed by, say, Cholesky decomposition or
spectral decomposition of symmetric matrix.  The eigenvalues of $(\bm{G}^{-1/2})^T \bm{H}
\bm{G}^{-1/2}$ are $\lambda_1\ge\lambda_2\ge \cdots  \ge \lambda_P$, and the corresponding
normalized eigenvectors are $\bm{u}_1,\bm{u}_2,\ldots,\bm{u}_P \in \mathbb{R}^P$. The solution of
our optimization problem is \begin{equation} \label{eq:agu}  \bm{A} =
\bm{G}^{-1/2}(\bm{u}_1,\ldots,\bm{u}_K). \end{equation}
The solution (\ref{eq:agu}) can also be characterized by (\ref{eq:aga}) and
\begin{equation}\label{eq:ahalambda}     \bm{A}^T \bm{H} \bm{A}  = \bm{\Lambda}, \end{equation}
where $\bm{\Lambda}=\diag(\lambda_1,\ldots,\lambda_K)$. Obviously, we do not have to solve the
optimization problem several times when changing the value of $K$. We may compute $\bm{a}^k=
\bm{G}^{-1/2} \bm{u}_k$, $k = 1, \ldots, P$, and take the first $K$ vectors to get $\bm{A} =
(\bm{a}^1,\ldots, \bm{a}^K)$ for any $K\le P$. This is the same property of PCA mentioned in Section
14.5 of \cite{hastie2009elements}.

Suppose $\Delta \bm{G}=\Delta \bm{H}=\bm{0}$. Then the problem becomes that of
Section~\ref{sec:spectral}, and (\ref{eq:ymy1}) holds.  From the diagonal part of
(\ref{eq:ahalambda}), we have $\bm{y}^{k\,T} \bm{W} \bm{y}^k = \lambda_k$, $k=1,\ldots,K$, meaning
that the eigenvalues are the matching correlations of $\bm{y}^k$'s. From the off-diagonal parts of
(\ref{eq:aga}) and (\ref{eq:ahalambda}), we also have $\bm{y}^{k\,T} \bm{M} \bm{y}^l = \bm{y}^{k\,T}
\bm{W} \bm{y}^l = 0$ for $k\neq l$, meaning that the weighted correlations and the matching
correlations between the components are all zero. These two types of correlations defined in
Section~\ref{sec:matching} explain the structure of the solution of our optimization problem. Since
the matching correlations should be positive for representing $\bm{W}$, we will confine the
components to those with $\lambda_k > 0$. Let $K^+$ be the number of positive eigenvalues. Then we
will choose $K$ not larger than $K^+$.

In general, $\Delta \bm{G}\neq\bm{0}$, and (\ref{eq:ymy1}) does not hold. We thus rescale each
component as $\bm{y}^k = b_k \bm{X} \bm{a}^k$ with factor $b_k>0$, $k=1,\ldots,K$. We may set
$b_k=(\bm{a}^{k\,T} \bm{X}^T \bm{M} \bm{X} \bm{a}^k)^{-1/2}$ so that (\ref{eq:ymy1}) holds. In the
matrix notation, $\bm{Y} = \bm{X} \bm{A} \bm{B}$ with $\bm{B}=\diag(b_1,\ldots,b_K)$. Another choice
of rescaling factor is to set $b_k=(\bm{a}^{k\,T} \bm{X}^T \bm{X} \bm{a}^k)^{-1/2}$, so that
\begin{equation} \label{eq:yy1}     \bm{y}^{k\,T} \bm{y}^k =\sum_{i=1}^N y_{ik}^2 =1,\quad
k=1,\ldots,K      \end{equation} holds.  In other words, the unweighted variance of
$y_{1k},\ldots,y_{Nk}$ is fixed as a constant. Both rescaling factors defined by (\ref{eq:ymy1}) and
(\ref{eq:yy1}) are considered  in the simulation study of Section~\ref{sec:simulation}, but only
(\ref{eq:ymy1}) is considered for the asymptotic theory of Section~\ref{sec:asymptotic}.

In the numerical computation of Section~\ref{sec:example} and Section~\ref{sec:simulation}, the data
matrix $\bm{X}$ is centered as $\sum_{i=1}^N m_i \bm{x}_i = \bm{0}$ for
(\ref{eq:ymy1}) and $\sum_{i=1}^N \bm{x}_i = \bm{0}$ for (\ref{eq:yy1}). Thus the transformed
vectors  $\bm{y}_i$ are also centered in the same way. The rescaling factors $b_k$, $k=1, \ldots,
K$, are actually computed by multiplying $(\sum_{i=1}^N m_i)^{1/2}$ for the weighted variance and
$N^{1/2}$ for the unweighted variance.  In other words, (\ref{eq:ymy1}) is replaced by $\sum_{i=1}^N
m_i y_{ik}^2 = \sum_{i=1}^N m_i$, and (\ref{eq:yy1}) is replaced by $\sum_{i=1}^N  y_{ik}^2 = N$.
This makes the magnitude of the components $y_{ik} = O(1)$, so that the interpretation becomes
easier. The matching error is then computed as $  \phi_k/(\sum_{i=1}^N m_i)$.

\section{Three types of matching errors} \label{sec:matchingerrors}

\subsection{Fitting error and true error}  \label{sec:fiterror}

$\bm{A}$ and $\bm{Y}$ are computed from $\bm{W}$ by the method of Section~\ref{sec:regularization}.
The matching error of the $k$-th component $\bm{y}^k$ is defined with respect to an arbitrary weight
matrix $\bmt{W}$   as  \[ \phi_k(\bm{W},\bmt{W}) = \frac{1}{2} \sum_{i=1}^N \sum_{j=1}^N \tilde
w_{ij} (y_{ik}- y_{jk})^2. \] We will omit $\bm{X}$ from the notation, since it is fixed throughout.
We also omit $\Delta \bm{G}$ and $\Delta \bm{H}$ from the notation above, although the matching
error actually depends on them. We define the fitting error as  \[  \phi_k^{\rm fit}(\bm{W}) =
\phi_k(\bm{W}, \bm{W})     \]     by letting $\bmt{W}=\bm{W}$. This is the $\phi_k$ in
Section~\ref{sec:matching}. On the other hand, we define the true error as  \[ \phi_k^{\rm
true}(\bm{W},\bmb{W})=\phi_k(\bm{W},\epsilon\bmb{W}) \] by letting $\bmt{W}=
\epsilon\bmb{W}$. Since $\phi_k(\bm{W},\bmt{W}) $ is proportional to $\bmt{W}$, we have used
$\epsilon\bmb{W}$ instead of $\bmb{W}$ so that $\phi_k^{\rm fit}$ and $\phi_k^{\rm true}$ are
directly comparable with each other.   Let $E(\cdot)$ denote the expectation with respect to
(\ref{eq:zij}). Then  $E(\bm{W}) = \epsilon \bmb{W}$ because $E(w_{ij}) = E(z_{ij} )\bar w_{ij} =
\epsilon \bar w_{ij}$. Therefore, $\bmt{W}= \epsilon\bmb{W}$ is comparable with $\bmt{W}=\bm{W}$.

\subsection{Resampling matching weights for cross-validation error} \label{sec:cverror}

The bias of the fitting error for estimating the true error is $O(N\ohh P)$ as shown in
Section~\ref{sec:experror}.  We adjust this bias by cross-validation as follows. The observed weight
$\bm{W}$ is randomly split into $\bm{W}-\bm{W^*}$ for learning and $\bm{W}^*$ for testing, and the
matching error $\phi_k(\bm{W}-\bm{W^*}, \bm{W}^*)$ is computed. By repeating it several times for
taking the average of the matching error, we will get a cross-validation (cv) error. More formal
definition of the cv error is explained below.

The matching weights  $w_{ij}^*$ are randomly resampled from the observed matching weights $w_{ij}$
with small probability $\kappa>0$.  Let $z_{ij}^*=z_{ji}^*\in \{0,1\}$,  $i,j=1,\ldots,N$, be
samples from Bernoulli trial with success probability $\kappa$, where the number of independent
elements is $N(N+1)/2$ due to the symmetry. Then the resampled matching weights are defined as
\begin{equation} \label{eq:zijstar}     w_{ij}^* = z_{ij}^* w_{ij},\quad     P(z_{ij}^*=1 |
\bm{W})=\kappa. \end{equation} Let $E^*(\cdot |\bm{W})$, or $E^*(\cdot)$ by omitting $\bm{W}$,
denote the conditional expectation given $\bm{W}$. Then  $E^*(\bm{W}^*) = \kappa \bm{W}$ because
$E^*(w_{ij}^*) = E^*(z_{ij}^* ) w_{ij} = \kappa w_{ij}$. By noticing $E^*((1-
\kappa)\inv(\bm{W}-\bm{W^*}))=\bm{W}$ and $E^*(\kappa\inv\bm{W}^*)=\bm{W}$,  we use $(1-
\kappa)\inv(\bm{W}-\bm{W^*})$ for learning and $\kappa\inv\bm{W}^*$ for testing so that the cv error
is comparable with the fitting error. Thus we define the cv error as     \[         \phi_k^{\rm
cv}(\bm{W}) = E^* \Bigl\{ \phi_k ((1- \kappa)\inv(\bm{W}-\bm{W^*}), \kappa\inv\bm{W}^* ) \mid \bm{W}
\Bigr\}.     \] 

The conditional expectation $E^*(\cdot|\bm{W})$ is actually computed as the average over several
$\bm{W}^*$'s. In the numerical computation of Section~\ref{sec:simulation}, we resample $\bm{W}^*$
 from $\bm{W}$ with $\kappa=0.1$ for 30 times. On the other hand,
we resampled $\bm{W}^*$ from $\bm{W}$ only once with  $\kappa=0.1$ in Section~\ref{sec:example}, 
because the average may not be necessary for large $N$. For each $\bm{W}^*$,  we
compute $\phi_k((1-
\kappa)\inv(\bm{W}-\bm{W^*}), \kappa\inv\bm{W}^* )$ by the method of
Section~\ref{sec:regularization}. $\bm{A}^*$ is computed as the solution of the optimization problem
by replacing $\bm{W}$ with $(1- \kappa)\inv(\bm{W}-\bm{W^*})$. Then $\bm{Y}^*$ is computed with
rescaling factor $b_k^*=((1- \kappa)\inv\bm{a}^{*k\,T} \bm{X}^T (\bm{M}-\bm{M^*}) \bm{X}
\bm{a}^{*k})^{-1/2}$.

\subsection{Link sampling vs. node sampling} \label{sec:node-sampling}

The matching weight matrix is interpreted as the adjacency matrix of a weighted graph (or network)
with nodes of the data vectors. The sampling scheme (\ref{eq:zij}) as well as the resampling scheme
(\ref{eq:zijstar}) is interpreted as link sampling/resampling.  Here we describe another
sampling/resampling scheme. Let $z_i  \in \{0, 1\}$, $i=1, \ldots, N$, be samples from Bernoulli
trial with success probability $\xi>0$. This is interpreted as node sampling, or equivalently
sampling of data vectors, by taking $z_i$ as the indicator variable of sampling node $i$. Then the
observed matching weights may be defined as
\begin{equation} \label{eq:zij-node}
  w_{ij} = z_i z_j \bar w_{ij}, \quad P(z_i=1) = \xi,
\end{equation}
meaning $w_{ij}$ is sampled if both node $i$ and node $j$ are sampled together. For computing
$\phi_k^{\rm true}$, we set $\epsilon=\xi^2$. The resampling scheme of $\bm{W}-\bm{W^*}$ should
simulate the sampling scheme of $\bm{W}$. Therefore, $z_i^* \in \{0, 1\}$, $i=1, \ldots, N$, are
samples from Bernoulli trial with success probability $1-\nu>0$, and resampling scheme is defined as
\begin{equation} \label{eq:zijstar-node}
  w_{ij}^* = (1-z_i^* z_j^*) w_{ij}, \quad P(z_i^*=1) = 1-\nu.
\end{equation} 
For computing $\phi_k^{\rm cv}$, we set $\kappa = 1- (1-\nu)^2 \approx 2 \nu$. In the numerical
computation of Section~\ref{sec:simulation}, we resample $\bm{W}^*$ with $\nu=0.05$ for 30 times.

The vector $\bm{x}_i$ does not contribute to the optimization problem if $z_i=0$ (then $w_{ij}=0$
for all $j$) or $z_i^*=0$ (then $w_{ij} - w_{ij}^*=0$ for all $j$). Thus the node
sampling/resampling may be thought of as the ordinary sampling/resampling of data vectors, while the
link sampling/resampling is a new approach. These two methods will be compared in the simulation
study of Section~\ref{sec:simulation}. Note that the two methods become identical if the number of
nonzero elements in each row (or column) of $\bmb{W}$ is not more than one, or equivalently  the
numbers of links are zero or one for all vectors. CCA is a typical example: there is a one-to-one
correspondence between the two domains. We expect that the difference of the two methods becomes
small for extremely sparse $\bm{W}$.

We can further generalize the sampling/resampling scheme. Let us introduce correlations ${\rm
corr}(z_{ij}, z_{kl})$ and ${\rm corr}(z^*_{ij}, z^*_{kl})$ in (\ref{eq:zij}) and (\ref{eq:zijstar})
instead of independent Bernoulli trials. The link sampling/resampling corresponds to ${\rm
corr}(z_{ij}, z_{kl})={\rm corr}(z^*_{ij}, z^*_{kl})=\delta_{ik}\delta_{jl}$ if indices are confined
to $i \ge j$ and $k \ge l$. The node sampling has additional nonzero correlations if a node is
shared by the two links:  ${\rm corr}(z_{ij}, z_{ik})=\xi/(1+\xi) \approx \xi - \xi^2$. Similarly the node resampling
has nonzero correlations ${\rm corr}(z_{ij}^*, z_{ik}^*)=(1-\nu)/(2-\nu) \approx \frac{1}{2} -
\frac{\nu}{4}$. In the theoretical argument, we only consider the simple link sampling/resampling.
It is a future work to incorporate the structural correlations into the equations such as
(\ref{eq:covdhatwlm}) and  (\ref{eq:expvardhwlmstar}) in Appendix~\ref{sec:technical} for
generalizing the theoretical results.

Although we have multiplied $z_{ij}^*$ or $z_i^*$ to all $w_{ij}$ elements in the mathematical
notations, we actually look at only nonzero elements of $w_{ij}$ in computer software. The
resampling algorithms are implemented very efficiently for sparse $\bm{W}$.

\section{Simulation study} \label{sec:simulation}

We have generated twelve datasets for CDMCA of $D=3$ domains with $P=140$ and $N=875$ as shown in
Table~\ref{tab:simulation-parameters}. The details of the data generation is given in
Appendix~\ref{sec:simulation-data}. Considered are two sampling schemes (link sampling and node
sampling), two true matching weights ($\bmb{W}_A$ and $\bmb{W}_B$), and three sampling probabilities
(0.02, 0.04, 0.08). The matching weights are shown in Fig.~\ref{fig:weight-a} and
Fig.~\ref{fig:weight-b}. The observed $\bm{W}$ are very sparse. For example, the number of nonzero
elements in the upper triangular part of $\bm{W}$ is 162 for Experiment~1, and it is 258 for
Experiment~5.

$\bm{X}$ is generated by projecting underlying $5 \times 5$ grid points in $\mathbb{R}^2$ to the
higher dimensional spaces for domains with small additive noise. Scatter plots of $\bm{y}^k$,
$k=1,2$, are shown for Experiments 1 and 5, respectively, in Fig.~\ref{fig:exp1} and
Fig.~\ref{fig:exp5}. The underlying structure of $5\times 5$ grid points is well observed in the
scatter plots for $\ga_M=0.1$, while the structure is obscured in the plots for $\ga_M=0$. For
recovering the hidden structure in $\bm{X}$ and $\bm{W}$, the value $\ga_M=0.1$ looks better than
$\ga_M=0$.

In Fig.~\ref{fig:exp1-fitcv} and Fig.~\ref{fig:exp5-fitcv}, the curves of matching errors are shown
for the components $\bm{y}^k$, $k=1,\ldots, 20$. They are plotted at $\gamma_M=0, 0.001, 0.01, 0.1,
1$. The smallest two curves ($k=1,2$) are clearly separated from the other 18 curves
($k=3,\ldots,20$), suggesting correctly $K=2$. Looking at the true matching error $\phi_k^{\rm
true}(\bm{W},\bmb{W})$, we observe that the true error is minimized at $\gamma_M=0.1$ for $k=1,2$.
The fitting error $\phi_k^{\rm fit}(\bm{W})$, however, underestimates the true error, and wrongly
suggests $\gamma_M=0$.

For computing the the cv error $\phi_k^{\rm cv}(\bm{W})$, we used both the link resampling and the
node resampling. These two cv errors accurately estimates the true error in Experiment~1, where each
node has very few links in $\bm{W}$. In Experiment~5, some nodes have more links, and only the link
resampling estimates the true error very well.

In each of the twelve experiments, we generated 160 datasets of $\bm{W}$. We computed the expected
values of the matching errors by taking the simulation average of them. We look at the bias of a
matching error divided by its true value. $ ({E(\phi_k^{\rm fit})-E(\phi_k^{\rm
true})})/{E(\phi_k^{\rm true})}$ or $ ({E(\phi_k^{\rm cv})-E(\phi_k^{\rm true})})/{E(\phi_k^{\rm
true})}$ is computed for $k=1,\ldots, 10$, with $\gamma_M=0.001, 0.01, 0.1, 1$. These $10\times 4=40$ values are
used for each boxplot in Fig.~\ref{fig:boxplots-w} and Fig.~\ref{fig:boxplots-u}. The two figures
correspond to the two types of rescaling factor $b_k$, respectively, in
Section~\ref{sec:regularization}. We observe that the fitting error underestimates the true error.
The cv error of link resampling is almost unbiased in Experiments 1 to 6, where $\bm{W}$ is
generated by link sampling. This verifies our theory of Section~\ref{sec:asymptotic} to claim that
the cv error is asymptotically unbiased for estimating the true error. However, it behaves poorly in
Experiments 7 to 12, where $\bm{W}$ is generated by node sampling. On the other hand, the cv error
of node resampling performs better than link resampling in Experiments 7 to 12, suggesting
appropriate choice of resampling scheme may be important.

The two rescaling factors behave very similarly overall. Comparing Fig.~\ref{fig:boxplots-w} and
Fig.~\ref{fig:boxplots-u}, we observe that the unweighted variance may lead to more stable results
than the weighted variance. This may happen because only a limited number of $y_{1k},\ldots,y_{Nk}$
is used in the weighted variance when $\bm{W}$ is very sparse.


\section{Asymptotic theory of the matching errors} \label{sec:asymptotic}

\subsection{Main results} \label{sec:main}

We investigate the three types of matching errors defined in Section~\ref{sec:matchingerrors}. We
work on the asymptotic theory for sufficiently large $N$ under the assumptions given below. Some
implications of these assumptions are mentioned in Section~\ref{sec:technotes}.

\begin{enumerate}[({A}1)]

\item \label{asm:order}  We consider the limit of $N\to\infty$ for asymptotic expansions. $P$ is a
constant or increasing as $N\to\infty$, but not too large as $P = o(N^{1/2})$. The sampling scheme
is (\ref{eq:zij}), and the resampling scheme is (\ref{eq:zijstar}). The sampling probability of
$\bm{W}$ from $\bmb{W}$ is $\epsilon=o(1)$ but not too small as $\epsilon\inv=o(N)$. The resampling
probability of $\bm{W}^*$ from $\bm{W}$ is proportional to $N\inv$; $\kappa=O(N\inv)$ and
$\kappa\inv=O(N)$.

\item \label{asm:w}  The true matching weights are $\bar w_{ij}=O(1)$. In general, the asymptotic
order of a matrix or a vector is defined as the maximum order of the elements in this paper, so we
write $\bmb{W}=O(1)$. The number of nonzero elements of each row (or each column) is $ \#\{\bar
w_{ij}\neq 0, j=1,\ldots,N \} = O(\epsilon\inv)$ for $i=1,\ldots, N$.

\item \label{asm:x}  The elements of $\bm{X}$ are $x_{ik}=O(N\oh)$. This is only a technical
assumption for the asymptotic argument. In practice, we may assume $x_{ik}=O(1)$ for MCA
computation, and redefine $\bm{x}^k := \bm{x}^k/\|\bm{x}^k\|$ for the theory.

\item \label{asm:gamma} Let $\ga$ be a generic order parameter for representing the magnitude of
regularization terms as $\Delta \bm{G} = O(\ga)$ and  $\Delta \bm{H} = O(\ga)$.  For example, we put
$\bm{L}_M=O(1)$ and $\ga_M=\ga$. We assume $\ga=O(N\oh)$.

\item \label{asm:eigen}  All the $P$ eigenvalues are distinct from the others; $\lambda_i \neq
\lambda_j$ for $i\neq j$. $\bm{A}$ is of full rank, and assume $\bm{A}=O(P\inv)$.  We evaluate
$\phi_k$ only for $k=1,\ldots,J$, for some $J\le P$. We assume that $J$ is bounded and  $(\lambda_i -
\lambda_j)\inv=O(1)$ for $i\neq j$ with $i\le P$, $j\le J$.  These assumptions apply to all the
cases under consideration such as  $\Delta \bm{G}=\Delta \bm{H} = \bm{0}$ or $\bm{W}$ being replaced
by $\epsilon \bmb{W}$.

\end{enumerate}

\begin{theo} \label{theorem:main}
Under the assumptions mentioned above, the following equation holds.
\begin{equation} \label{eq:main}
  E(\phi_k^{\rm fit}(\bm{W}) -  \phi_k^{\rm true}(\bm{W},\bmb{W}) ) = 
(1-\epsilon)  E(\phi_k^{\rm fit}(\bm{W}) -  \phi_k^{\rm cv}(\bm{W})) 
+O(N\ohhh P^2 + \ga^3 P^3).
\end{equation}
This implies 
\begin{equation} \label{eq:cvunbiased}
  E( \phi_k^{\rm cv}(\bm{W}) ) = E( \phi_k^{\rm true}(\bm{W},\bmb{W}) ) + O(\epsilon N\ohh P + N\ohhh P^2 + \ga^3 P^3).
\end{equation}
Therefore, the cross-validation error is an unbiased estimator of the true error by ignoring the 
higher-order term of $O(\epsilon N\ohh P + N\ohhh P^2 + \ga^3 P^3)=o(1)$, which is smaller than
$E( \phi_k^{\rm cv}(\bm{W}) )=O(1)$ for sufficiently large $N$.
\end{theo}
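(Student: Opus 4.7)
The plan is to exploit the bilinear structure of $\phi_k(\bm{W}_1,\bm{W}_2)$---linear in the evaluation weights $\bm{W}_2$ and nonlinear in the training weights $\bm{W}_1$ through the generalized eigenproblem defining $\bm{A}$. Writing $\phi_k(\bm{W}_1,\bm{W}_2) = \sum_{i,j}(\bm{W}_2)_{ij}\,\psi_{ij}(\bm{W}_1)$ with $\psi_{ij}(\bm{W}_1)=\tfrac12(y_{ik}(\bm{W}_1)-y_{jk}(\bm{W}_1))^2$ isolates the nonlinear dependence into $\psi_{ij}$, whose first, second, and third derivatives with respect to $\bm{W}_1$ can be obtained by the standard perturbation expansion of the generalized eigenvectors $\bm{a}^k(\bm{W}_1)$; the eigengap in (A\ref{asm:eigen}) together with the scales set by (A\ref{asm:x})--(A\ref{asm:gamma}) furnishes the bounds on these derivatives.

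For $E(\phi_k^{\rm fit}-\phi_k^{\rm true})$ I would use linearity in the second slot to write $\phi_k^{\rm fit}-\phi_k^{\rm true} = \sum_{i,j}\Delta_{ij}\,\psi_{ij}(\epsilon\bmb{W}+\Delta)$ with $\Delta = \bm{W}-\epsilon\bmb{W}$ mean zero, and Taylor-expand $\psi_{ij}$ around $\epsilon\bmb{W}$. The linear term vanishes in expectation; the quadratic term, combined with $E(\Delta_{ij}\Delta_{kl}) = \epsilon(1-\epsilon)\bar w_{ij}^2\,\mathbf{1}_{\{i,j\}=\{k,l\}}$ from the independent Bernoulli structure of (\ref{eq:zij}), collapses to $\epsilon(1-\epsilon)\sum_{i,j}\bar w_{ij}^2\,\partial_{ij}\psi_{ij}(\epsilon\bmb{W})$ plus a Taylor remainder bounded by $O(N\ohhh P^2+\ga^3 P^3)$.

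For $E(\phi_k^{\rm fit}-\phi_k^{\rm cv})$, I would Taylor-expand $\psi_{ij}(\bm{W}_1)$ around $\bm{W}_1=\bm{W}$ in the resampling increment $\delta_1 = \bm{W}_1-\bm{W}$ and take $E^*$ term by term. Linearity of $\phi_k$ in $\bm{W}_2$ kills the purely evaluation--evaluation second derivative, so the only surviving leading contribution comes from the cross covariance
\[
E^*\bigl[(\bm{W}_2-\bm{W})_{ij}\,(\delta_1)_{kl}\bigr] = -w_{ij}^2\,\mathbf{1}_{\{i,j\}=\{k,l\}},
\]
which is computed directly from (\ref{eq:zijstar}) and yields $\phi_k^{\rm cv}-\phi_k^{\rm fit}\approx -\sum_{i,j}w_{ij}^2\,\partial_{ij}\psi_{ij}(\bm{W})$ to leading order; the pure $\delta_1$-variance contribution carries the prefactor $\kappa/(1-\kappa)=O(N\inv)$ and is absorbed in the remainder. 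Taking the outer expectation, using $E(w_{ij}^2)=\epsilon\bar w_{ij}^2$ (since $z_{ij}\in\{0,1\}$) and smoothness of $\partial_{ij}\psi_{ij}$ at $\epsilon\bmb{W}$, gives $(1-\epsilon)E(\phi_k^{\rm fit}-\phi_k^{\rm cv}) = \epsilon(1-\epsilon)\sum_{i,j}\bar w_{ij}^2\,\partial_{ij}\psi_{ij}(\epsilon\bmb{W}) + O(\cdots)$, matching the fitting bias and yielding (\ref{eq:main}).

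The main obstacle is tracking the remainders uniformly: the third-order Taylor tails require bounds on three-fold derivatives of $\psi_{ij}$ and hence second-order perturbation formulas for $\bm{a}^k(\bm{W}_1)$, and it is the eigengap in (A\ref{asm:eigen}) that keeps these finite; combined with (A\ref{asm:order})--(A\ref{asm:gamma}), the summed contributions acquire the advertised size $O(N\ohhh P^2 + \ga^3 P^3)$. Finally, (\ref{eq:cvunbiased}) follows from (\ref{eq:main}) by the one-line rearrangement $E(\phi_k^{\rm cv})-E(\phi_k^{\rm true}) = \epsilon\,[E(\phi_k^{\rm cv})-E(\phi_k^{\rm fit})] + O(N\ohhh P^2+\ga^3 P^3)$ together with the bound $E(\phi_k^{\rm cv})-E(\phi_k^{\rm fit}) = O(N\oh P)$ that is visible from the leading-order computation above.
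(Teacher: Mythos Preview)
Your proposal is correct and follows essentially the same route as the paper: the paper's Lemmas~1--2 make the eigenvector perturbation calculus for $\psi_{ij}(\bm{W}_1)$ explicit, and Lemmas~3--4 carry out precisely your two Taylor expansions (around $\epsilon\bmb{W}$ for the fitting bias, around $\bm{W}$ under $E^*$ for the cv difference), matching the leading terms via $E(w_{lm}^2)=\epsilon\bar w_{lm}^2$ exactly as you do. One small slip: the leading bias is $O(N\ohh P)$, not $O(N\oh P)$ (see the order of ${\rm bias}_k$ in Lemma~3), and it is this sharper bound that delivers the $\epsilon N\ohh P$ term in (\ref{eq:cvunbiased}).
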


\begin{proof} By comparing (\ref{eq:expphikfitbias}) of Lemma~\ref{lemma:exptrue} and
(\ref{eq:expfitcv}) of Lemma~\ref{lemma:expcv}, we obtain (\ref{eq:main}) immediately. Then
(\ref{eq:cvunbiased}) follows, because  $E(\phi_k^{\rm fit}(\bm{W}) -  \phi_k^{\rm true}(\bm{W}))=
O(N\ohh P)$ as mentioned in Lemma~\ref{lemma:exptrue}.
\end{proof}

We use $N, P, \ga, \epsilon$ in expressions of asymptotic orders. Higher order terms can be
simplified by substituting $P =o(N^{1/2})$ and $\ga=O(N\oh)$. For example, $O(N\ohhh P^2 + \ga^3
P^3)=o(N\oh + 1)=o(1)$ in (\ref{eq:main}). However, we attempt to leave the terms with $P$ and $\ga$
for finer evaluation.

The theorem justifies the link resampling for estimating the true matching error under the link
sampling scheme. Now we have theoretically confirmed our observation that the cross-validation is
nearly unbiased in the numerical examples of Sections~\ref{sec:example} and \ref{sec:simulation}.
Although the fitting error underestimates the true error in the numerical examples, it has not been
clear that the bias is negative in some sense from the expression of the bias given in
Lemma~\ref{lemma:exptrue}.

In the following subsections, we will discuss lemmas used for the proof of
Theorem~\ref{theorem:main}.  In Section~\ref{sec:technotes}, we look at the assumptions of the
theorem. In Section~\ref{sec:lambdachange},  the solution of the optimization problem and the
matching error are expressed in terms of small perturbation of the regularization matrices.
Lemma~\ref{lemma:lambdachange} gives the asymptotic expansions of the eigenvalues $\lambda_1,\ldots,
\lambda_P$ and the linear transformation matrix $\bm{A}$ in terms of  $\Delta \bm{G}$ and $\Delta
\bm{H}$. Lemma~\ref{lemma:errorchange} gives the asymptotic expansion of the matching error
$\phi_k(\bm{W}, \bmt{W})$ in terms of $\Delta \bm{G}$ and $\Delta \bm{H}$. Using these results, the
bias for estimating the true error is discussed in Section~\ref{sec:experror}.
Lemma~\ref{lemma:exptrue} gives the asymptotic expansion of the bias of the fitting error, and
Lemma~\ref{lemma:expcv} shows that the cross-validation adjusts the bias. All the proofs of lemmas
are given in Appendix~\ref{sec:technical}.

\subsection{Some technical notes} \label{sec:technotes}

We put $K=P$ for the definitions of matrices in Section~\ref{sec:asymptotic} without losing
generality. For characterizing the solution $\bm{A} \in \mathbb{R}^{P \times P} $ of the
optimization problem in Section~\ref{sec:regularization}, (\ref{eq:aga}) and (\ref{eq:ahalambda})
are now, with $\bm{\Lambda}=\diag(\lambda_1,\ldots,\lambda_P)$,
\begin{equation} \label{eq:agaaha} 
  \bm{A}^T \bm{G} \bm{A} = \bm{I}_P,\quad
  \bm{A}^T \bm{H} \bm{A} = \bm{\Lambda}. 
\end{equation}
We can do so, because the solution $\bm{a}^k$ and the eigenvalue $\lambda_k$ for the $k$-th
component  does not change for each $1\le k \le K$ when the value of $K$ changes. This property also
holds for the matching errors of the $k$-th component. Therefore a result shown for some $1 \le k
\le P$ with $K=P$ holds true for the same $k$ with any  $K \le P$, meaning that we can put $K=P$ in
Section~\ref{sec:asymptotic}. In our asymptotic theory, however, we would like to confine $k$ to a
finite value. So we restrict our attention to $1 \le k
\le J$ in the assumption (A\ref{asm:eigen}).

The assumption of $N$ and $P$ in (A\ref{asm:order}) covers many applications in practice. The theory
may work for the case that $N$ is hundreds and $P$ is dozens, or for a more recent case that $N$ is
millions and $P$ is hundreds.

Asymptotic properties of $\bm{W}$ follow from (A\ref{asm:order}) and (A\ref{asm:w}). Since the
elements of $\bm{W}$ are sampled from $\bmb{W}$, we have $\bm{W}=O(1)$ and $\bm{M}=O(1)$. The number
of nonzero elements for each row (or each column) is 
\begin{equation} \label{eq:wsparse}
 \#\{w_{ij}\neq 0, j=1,\ldots,N \} = O(1),\quad i=1,\ldots,N,
\end{equation}
and the total number of nonzero elements is $\#\{w_{ij}\neq0 ,i,j=1,\ldots,N\} = O(N)$. Thus
$\bm{W}$ is assumed to be a very sparse matrix. Examples of such sparse matrices are image-tag links
in Flickr, or more typically friend links in Facebook. Although the label domains of MNIST dataset
do not satisfy (\ref{eq:wsparse}) with many links to images, our method still worked very well.

The assumption of $\kappa$ in (A\ref{asm:order}) implies that the number of nonzero elements in
$\bm{W}^*$ is $O(1)$. Similarly to the leave-one-out cross-validation of data vectors, we resample
very few links in our cross-validation.

From (A\ref{asm:x}), $\sum_{i=1}^N m_i x_{ik} x_{jl} =O(N (N\oh)^2)=O(1)$, and thus $\bm{X}^T \bm{M}
\bm{X} = O(1)$, and $\bm{G}=O(1)$. Also, $\sum_{i=1}^N \sum_{j=1}^N w_{ij} x_{ik} x_{jl} =
\sum_{(i,j) : w_{ij}\neq 0}   w_{ij} x_{ik} x_{jl} = O(N (N\oh)^2) = O(1)$, and thus $\bm{X}^T
\bm{W} \bm{X} = O(1)$, and $\bm{H}=O(1)$. From (A\ref{asm:eigen}), $\sum_{i=1}^P \sum_{j=1}^P
(\bm{G})_{ij} a_{ik} a_{jk} = O(P^2 (P\inv)^2) = O(1)$, which is  necessary for $\bm{A}^T \bm{G}
\bm{A}=\bm{I}_P$. Then $\bm{Y}=\bm{X} \bm{A} = O(P N\oh P\inv) = O(N\oh)$.

The assumptions on the eigenvalues described in (A\ref{asm:eigen}) may be difficult to hold in
practice. In fact, there are many zero eigenvalues in the examples of  Section~\ref{sec:simulation};
60 zeros, 40 positives ($K^+=40$), and 40 negatives in the $P=140$ eigenvalues. Looking at Experiment~1,
however, we observed that $\phi_k^{\rm cv} \approx \phi_k^{\rm true}$ holds well and $E(\phi_k^{\rm
cv} ) = E(\phi_k^{\rm true})$ holds very accurately for $k=1,\ldots, 40$ ($\lambda_k>0$) when $\ga_M
> 0$. The eigenvalues for $\ga_M=0.1$ are $\lambda_1=0.988, \lambda_2=0.978,
\lambda_3=0.562, \lambda_4=0.509, \lambda_5=0.502,\ldots$, where $\lambda_1-\lambda_2$ looks very
small, but it did not cause any problem. On the other hand, the eigenvalues for $\ga_M=0$ are
$\lambda_1=0.999, \lambda_2=0.997, \lambda_3=0.973, \lambda_4=0.971, \lambda_5=0.960,\ldots$, where
some $\lambda_k$ are very close to each other. This might be a cause for the deviation of
$\phi_k^{\rm cv}$ from $\phi_k^{\rm true}$ when $\ga_M=0$.



\subsection{Small change in $\bm{A}$, $\bm{\Lambda}$, and the matching error} \label{sec:lambdachange}

Here we show how $\bm{A}$, $\bm{\Lambda}$ and $\phi_k(\bm{W},\bmt{W})$ depend on $\Delta
\bm{G}$  and $\Delta \bm{H}$. Recall that terms with hat are for $\Delta \bm{G} = \Delta \bm{H} =
\bm{0}$ as defined in Section~\ref{sec:spectral}; $\bmh{G} = \bm{X}^T \bm{M} \bm{X}$, $\bmh{H} =
\bm{X}^T \bm{W} \bm{X}$, $\bmh{Y} = \bm{X} \bmh{A}$. The optimization problem is characterized as
$\bmh{A}^T \bmh{G} \bmh{A} = \bm{I}_P$, $\bmh{A}^T \bmh{H} \bmh{A} = \bmh{\Lambda}$, where
$\bmh{\Lambda}=\diag(\hlambda_1,\ldots,\hlambda_P)$ is the diagonal matrix of the eigenvalues
$\hlambda_1,\ldots, \hlambda_P$. Then  $\bm{A}$ and $\bm{\Lambda}$ are defined in
Section~\ref{sec:regularization} for $\bm{G}=\bmh{G}+ \Delta \bm{G}$ and $\bm{H}=\bmh{H}+ \Delta
\bm{H}$. They satisfy (\ref{eq:agaaha}). The asymptotic expansions for 
 $\bm{A}$ and $\bm{\Lambda}$  will be given in Lemma~\ref{lemma:lambdachange} using
 \begin{equation*} 
\bm{g} =\bmh{A}^T \Delta \bm{G} \bmh{A},\quad
\bm{h} = \bmh{A}^T \Delta \bm{H} \bmh{A} \in \mathbb{R}^{P \times P}.
\end{equation*}
For proving the lemma in Section~\ref{sec:proof-lambdachange},
we will solve (\ref{eq:agaaha}) under the small perturbation of $\bm{g}$ and $\bm{h}$.

\begin{lemm} \label{lemma:lambdachange} Let $\Delta \bm{\Lambda} = \bm{\Lambda} - \bmh{\Lambda}$
with elements $\Delta \lambda_i = \lambda_i - \hlambda_i$, $i=1,\ldots,P$. Define $\bm{C} \in
\mathbb{R}^{P \times P}$ as $\bm{C} = \bmh{A}\inv \bm{A} - \bm{I}_P$ so that $\bm{A} =
\bmh{A}(\bm{I}_P + \bm{C})$.  Here $\bmh{A}\inv$ exists, since we assumed that $\bmh{A}$ is of full
rank in (A\ref{asm:eigen}).   We assume $\bm{g}=O(\ga)$ and $\bm{h}=O(\ga)$.

Then the elements of $\Delta \bm{\Lambda}=O(\ga)$ are, for $i=1,\ldots,P$,
\begin{equation}\label{eq:Dlambdai}
	\Delta \lambda_i =  - (g_{ii} \hlambda_i - h_{ii}) + \delta \lambda_i
\end{equation}
with $\delta \lambda_i=O(\ga^2 P)$ defined for $i\le J$ as
\begin{equation}\label{eq:dlambdai}
	\delta \lambda_i = g_{ii} (g_{ii} \hlambda_i -  h_{ii}) - 
	\sum_{j \neq i}(\hlambda_j - \hlambda_i)\inv (g_{ij} \hlambda_i - h_{ij})^2 + O(\ga^3 P^2),
\end{equation}
where $\sum_{j \neq i}$ is the summation over $j=1,\ldots,P$, except for $j = i$.

The elements of the diagonal part of $\bm{C}=O(\ga)$ are, for $i=1,\ldots,P$,
\begin{equation}\label{eq:cii}
	c_{ii} = -\frac{1}{2} g_{ii} + \delta c_{ii}
\end{equation}
with $\delta c_{ii}=O(\ga^2 P)$ defined for $i\le J$ as
\begin{equation}\label{eq:dcii}
	\delta c_{ii} = \frac{3}{8} (g_{ii})^2
	-\sum_{j\neq i} (\hlambda_j - \hlambda_i)^{-2} (g_{ij} \hlambda_i - h_{ij})
\Bigl\{
	g_{ij} \Bigl(\hlambda_j - \frac{1}{2} \hlambda_i \Bigr)  - \frac{1}{2}  h_{ij}
	\Bigr\} + O(\ga^3 P^2),
\end{equation}
and the elements of the off-diagonal ($i \neq j$) part of $\bm{C}$ are, 
for either $i \le J, j \le P$ or $i \le P, j\le J$, i.e., one of $i$ and $j$ is not greater than $J$,
\begin{equation}\label{eq:cij}
	c_{ij} = (\hlambda_i - \hlambda_j)\inv (g_{ij} \hlambda_j - h_{ij})
	+ \delta c_{ij}
\end{equation}
with $\delta c_{ij} =O(\ga^2 P)$ defined for $i\le P, j \le J$ as
\begin{equation}\label{eq:dcij}
\begin{split}
	\delta c_{ij} =&
	-\frac{1}{2} (\hlambda_i - \hlambda_j)\inv(g_{ij} \hlambda_j - h_{ij})g_{jj}
	-(\hlambda_i - \hlambda_j)^{-2}(g_{ij} \hlambda_i - h_{ij}) ( g_{jj} \hlambda_j - h_{jj})\\
	&+ (\hlambda_i - \hlambda_j)\inv
	\sum_{k \neq j} (\hlambda_k - \hlambda_j)\inv
	(g_{ki} \hlambda_j - h_{ki})
	(g_{kj} \hlambda_j - h_{kj}).
\end{split}
\end{equation}
\end{lemm}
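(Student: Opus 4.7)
The plan is to substitute $\bm{A}=\bmh{A}(\bm{I}_P+\bm{C})$ into the two defining equations (\ref{eq:agaaha}) and expand both sides as power series in the small parameter $\ga$. Using $\bmh{A}^T\bmh{G}\bmh{A}=\bm{I}_P$ and $\bmh{A}^T\bmh{H}\bmh{A}=\bmh{\Lambda}$, the two equations become
\begin{align*}
(\bm{I}_P+\bm{C})^T(\bm{I}_P+\bm{g})(\bm{I}_P+\bm{C}) &= \bm{I}_P,\\
(\bm{I}_P+\bm{C})^T(\bmh{\Lambda}+\bm{h})(\bm{I}_P+\bm{C}) &= \bmh{\Lambda}+\Delta\bm{\Lambda},
\end{align*}
which are identities in $\bm{g},\bm{h}$, determining $\bm{C}$ and $\Delta\bm{\Lambda}$ uniquely (once the diagonal sign convention for $\bm{A}$ is fixed).

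I would then write $\bm{C}=\bm{C}^{(1)}+\bm{C}^{(2)}+O(\ga^3 P^2)$ and $\Delta\bm{\Lambda}=\Delta\bm{\Lambda}^{(1)}+\Delta\bm{\Lambda}^{(2)}+O(\ga^3 P^2)$, collect terms order by order, and solve. At first order, the first equation gives $\bm{C}^{(1)}+\bm{C}^{(1)T}+\bm{g}=\bm{0}$, hence $c_{ii}^{(1)}=-\tfrac12 g_{ii}$ and $c_{ij}^{(1)}+c_{ji}^{(1)}=-g_{ij}$ for $i\neq j$. The diagonal of the second equation then yields $\Delta\lambda_i^{(1)}=h_{ii}-\hlambda_i g_{ii}$, which is the leading term of (\ref{eq:Dlambdai}). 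Its off-diagonal, i.e.\ $h_{ij}+c_{ji}^{(1)}\hlambda_j+\hlambda_i c_{ij}^{(1)}=0$, combined with the relation $c_{ji}^{(1)}=-g_{ij}-c_{ij}^{(1)}$, produces $c_{ij}^{(1)}=(\hlambda_i-\hlambda_j)\inv(g_{ij}\hlambda_j-h_{ij})$, matching the leading term of (\ref{eq:cij}); the assumption $(\hlambda_i-\hlambda_j)\inv=O(1)$ in (A\ref{asm:eigen}) is what keeps the expansion well-defined.

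At second order, the $(i,i)$ entry of the first equation gives $2c_{ii}^{(2)}+(\bm{C}^{(1)T}\bm{C}^{(1)})_{ii}+2(\bm{g}\bm{C}^{(1)})_{ii}^{\rm sym}=0$, into which I substitute the first-order expressions and isolate the $j=i$ contribution (yielding the $\tfrac38(g_{ii})^2$ term) from the $j\neq i$ contributions (yielding the sum in (\ref{eq:dcii})). Likewise, the diagonal of the second equation at second order gives $\Delta\lambda_i^{(2)}$ in terms of $\bm{C}^{(1)}$, $\bm{g}$, and $\bm{h}$; after substitution, the cross-terms collapse to the clean expression $-\sum_{j\neq i}(\hlambda_j-\hlambda_i)\inv(g_{ij}\hlambda_i-h_{ij})^2+g_{ii}(g_{ii}\hlambda_i-h_{ii})$ as in (\ref{eq:dlambdai}). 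The off-diagonal of the second equation at second order, together with the $(i,j)$ entry of the first equation at second order (which expresses $c_{ji}^{(2)}+c_{ij}^{(2)}$ in known first-order quantities), gives a linear system in $(c_{ij}^{(2)},c_{ji}^{(2)})$; solving it and collecting by powers of $(\hlambda_i-\hlambda_j)\inv$ produces the three-term expression (\ref{eq:dcij}).

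The main obstacle is the organizational bookkeeping of the second-order algebra, especially the off-diagonal $\delta c_{ij}$: several contributions from $\bm{C}^{(1)T}\bmh{\Lambda}\bm{C}^{(1)}$, $\bm{C}^{(1)T}\bm{h}$, $\bm{h}\bm{C}^{(1)}$, and the $\Delta\lambda_j$ feedback all recombine into a single sum over $k\neq j$. A subsidiary but important issue is verifying the claimed error order $O(\ga^3 P^2)$: since $\bm{C}^{(1)}$ has entries of size $O(\ga)$ but full $P\times P$ support, matrix products $\bm{C}^{(1)}\bm{g}$ etc.\ yield entries of size $O(\ga^2 P)$, so second-order terms are $O(\ga^2 P)$ and the omitted third-order terms, obtained by one further application of the same bound, are $O(\ga^3 P^2)$; the boundedness of $J$ and the assumption $(\hlambda_i-\hlambda_j)\inv=O(1)$ for $i\le P$, $j\le J$ ensure that the sums in (\ref{eq:dlambdai})--(\ref{eq:dcij}) inherit the same order.
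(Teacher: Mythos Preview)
Your approach is essentially the same as the paper's: substitute $\bm{A}=\bmh{A}(\bm{I}_P+\bm{C})$ into (\ref{eq:agaaha}), expand, and solve order by order for $\bm{C}$ and $\Delta\bm{\Lambda}$. The only organizational difference is that the paper, rather than solving a $2\times 2$ linear system in $(c_{ij},c_{ji})$, eliminates $\bm{C}^T$ by substituting $\bm{C}^T=-\bm{C}-\bm{g}+O(\ga^2 P)$ from the first equation into the second to obtain $\bmh{\Lambda}\bm{C}-\bm{C}\bmh{\Lambda}-\bm{g}\bmh{\Lambda}+\bm{h}-\Delta\bm{\Lambda}=O(\ga^2 P)$, which yields $c_{ij}$ and $\Delta\lambda_i$ directly; the same trick is repeated at second order.
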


The asymptotic expansion of $\phi_k(\bm{W},\bmt{W})$ is given in Lemma~\ref{lemma:errorchange}.
For proving the lemma in Section~\ref{sec:proof-errorchange},
the matching error is first expressed by $\bm{C}$, and then the result of Lemma~\ref{lemma:lambdachange}
is used for simplifying the expression.

\begin{lemm} \label{lemma:errorchange}  Let $\bmt{S} = \bmh{A}^T \bm{X}^T (\bmt{M} - \bmt{W}) \bm{X}
\bmh{A} = \bmh{Y}^T (\bmt{M} - \bmt{W}) \bmh{Y} \in \mathbb{R}^{P\times P}$. We assume
$\bmt{S}=O(1)$, which holds for, say, $\bmt{W} = \bm{W}$ and $\bmt{W} = \epsilon\bmb{W}$.  We assume
$\bm{g}=O(\ga)$ and  $\bm{h}=O(\ga)$. Then the matching error of Section~\ref{sec:matching} is
expressed asymptotically as
\begin{equation} \label{eq:phikwtw}
\begin{split}
\phi_k(\bm{W}, & \bmt{W}) =\stilde_{kk} + \sum_{i\neq k} 2  (\hlambda_i - \hlambda_k)\inv \stilde_{ik}
(g_{ik} \hlambda_k - h_{ik})\\
& - \sum_{i\neq k} (\hlambda_i - \hlambda_k)^{-2} \Bigl\{
\stilde_{kk} (g_{ik} \hlambda_k - h_{ik})^2
+\stilde_{ik} (g_{ki}\hlambda_i - h_{ki})
(g_{kk} \hlambda_k - h_{kk})\Bigr\}\\
&+\sum_{i\neq k} \sum_{j\neq k} (\hlambda_i - \hlambda_k)\inv(\hlambda_j - \hlambda_k)\inv \Bigr\{
 2\stilde_{ik} (g_{ij} \hlambda_k - h_{ij})(g_{jk} \hlambda_k -h_{jk}) \\
& \hspace{10em} +\stilde_{ij} (g_{ik}\hlambda_k - h_{ik})(g_{jk} \hlambda_k - h_{jk})\Bigr\} + O(\ga^3 P^3).
\end{split}
\end{equation}
Let us further assume $\bmt{W}=\bm{W}$. Then $\bmt{S}=\bm{I}_P - \bmh{\Lambda}$ with elements
$\stilde_{ij}=\delta_{ij}(1-\hlambda_i)$. Substituting it into (\ref{eq:phikwtw}), we get an
expression of $\phi_k^{\rm fit}(\bm{W})=\phi_k(\bm{W},\bm{W})$ as \begin{equation}\label{eq:phikfit}
\phi_k^{\rm fit}(\bm{W})=1-\hlambda_k - \sum_{i\neq k}(\hlambda_i - \hlambda_k)\inv(g_{ik}\hlambda_k
- h_{ik})^2     +O(\ga^3 P^3). \end{equation}
\end{lemm}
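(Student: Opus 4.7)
The plan is to express $\phi_k(\bm{W},\bmt{W})$ as a rescaled quadratic form in $\bm{a}^k$, substitute the parametrization $\bm{A} = \bmh{A}(\bm{I}_P + \bm{C})$ from Lemma~\ref{lemma:lambdachange}, and carry out a perturbation expansion to order $O(\ga^3 P^3)$. Specifically, since $\bm{y}^k = b_k \bm{X}\bm{a}^k$ with $b_k^{-2} = \bm{a}^{k\,T}\bmh{G}\bm{a}^k$, and $\bmt{S} = \bmh{A}^T\bm{X}^T(\bmt{M}-\bmt{W})\bm{X}\bmh{A}$, I would rewrite
\[
	\phi_k(\bm{W},\bmt{W}) = b_k^2 \,[(\bm{I}_P+\bm{C})^T \bmt{S}(\bm{I}_P+\bm{C})]_{kk}.
\]
The rescaling factor can be handled by combining the constraint $\bm{A}^T\bm{G}\bm{A} = \bm{I}_P$ with $\bmh{A}^T\bmh{G}\bmh{A} = \bm{I}_P$ to obtain $b_k^{-2} = 1 - [(\bm{I}_P+\bm{C})^T\bm{g}(\bm{I}_P+\bm{C})]_{kk}$, and inverting via a geometric series. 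The upshot is a clean expression for $b_k^2$ through $O(\ga^2 P)$ in which the $O(\ga)$ piece is precisely $g_{kk}$.

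Next I would expand $[(\bm{I}_P+\bm{C})^T\bmt{S}(\bm{I}_P+\bm{C})]_{kk}$ as
$\stilde_{kk}(1+c_{kk})^2 + 2(1+c_{kk})\sum_{i\neq k} c_{ik}\stilde_{ik} + \sum_{i,j\neq k} c_{ik}c_{jk}\stilde_{ij}$
and substitute the expansions (\ref{eq:cii})--(\ref{eq:dcij}) for $c_{kk}$ and $c_{ik}$. The central calculation is to multiply this by $b_k^2$ and track which pieces survive at each of the orders $O(1)$, $O(\ga P)$, $O(\ga^2)$, $O(\ga^2 P)$, $O(\ga^2 P^2)$. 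The key phenomenon is that the $O(\ga)$ term $-g_{kk}\stilde_{kk}$ coming from $(1+c_{kk})^2$ is exactly cancelled by the $+g_{kk}\stilde_{kk}$ contribution from $b_k^2$, and the apparent $O(\ga^2)$ piece $\tfrac{1}{4}g_{kk}^2\stilde_{kk}$ combines with the $\tfrac{3}{8}g_{kk}^2$ part of $\delta c_{kk}$ and the cross-term from $b_k^2\cdot c_{kk}$ to collapse the $g_{kk}^2$-dependence entirely, leaving precisely the structure $\stilde_{kk}\bigl(1 - \sum_{i\neq k}(\hlambda_i-\hlambda_k)^{-2}(g_{ik}\hlambda_k - h_{ik})^2\bigr)$ for the $\stilde_{kk}$-coefficient in (\ref{eq:phikwtw}). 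A parallel cancellation of $g_{kk}$-linear contributions between $2c_{kk}\sum_{i\neq k}c_{ik}\stilde_{ik}$, the $-\tfrac{1}{2}(\hlambda_i-\hlambda_k)^{-1}\alpha_{ik}g_{kk}$ piece of $\delta c_{ik}$, and the $g_{kk}$-part of $b_k^2$ leaves the remaining three structures displayed in the lemma.

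Finally, to deduce (\ref{eq:phikfit}) from (\ref{eq:phikwtw}), I would substitute $\bmt{W} = \bm{W}$, for which $\bmt{S} = \bmh{A}^T(\bmh{G}-\bmh{H})\bmh{A} = \bm{I}_P - \bmh{\Lambda}$, so $\stilde_{ij} = \delta_{ij}(1-\hlambda_i)$ is diagonal. Every off-diagonal factor $\stilde_{ik}$ then vanishes, killing the linear term, the $\stilde_{ik}\alpha_{ki}\alpha_{kk}$ term, and the $2\stilde_{ik}(g_{ij}\hlambda_k - h_{ij})\alpha_{jk}$ piece of the double sum. Only the diagonal slice $j=i$ of $\sum_{i,j\neq k}\stilde_{ij}\alpha_{ik}\alpha_{jk}/[(\hlambda_i-\hlambda_k)(\hlambda_j-\hlambda_k)]$ survives, contributing $\sum_{i\neq k}(\hlambda_i-\hlambda_k)^{-2}(1-\hlambda_i)(g_{ik}\hlambda_k - h_{ik})^2$. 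Combining this with the $\stilde_{kk}$-term and using the identity $(1-\hlambda_i) - (1-\hlambda_k) = \hlambda_k - \hlambda_i$ telescopes the coefficient $(\hlambda_i-\hlambda_k)^{-2}$ down to $-(\hlambda_i-\hlambda_k)^{-1}$, giving (\ref{eq:phikfit}).

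The hard part will be the bookkeeping of Step 2: one must verify that the delicate cancellations between $b_k^2$, the diagonal correction $\delta c_{kk}$, and the off-diagonal corrections $\delta c_{ik}$ are all exact at the orders stated, leaving no residual $O(\ga)$ or $O(\ga^2)$ contribution beyond the four listed structures. A useful sanity check throughout is the scalar case $P=1$: direct calculation gives $\phi^{\rm fit} = 1 - \hlambda$ identically, and this cannot emerge from either the quadratic-form expansion or the eigenvalue expansion alone—it requires the rescaling-driven cancellation to work exactly. Verifying that the same cancellation mechanism drives the vanishing of unwanted terms in the general $P$ expansion, while simultaneously producing the claimed $O(\ga^3 P^3)$ error bound from the triple-index residuals, is the most delicate part of the proof.
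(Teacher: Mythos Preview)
Your proposal is correct and follows essentially the same route as the paper's proof: write $\phi_k(\bm{W},\bmt{W}) = b_k^2\,(\bm{\delta}_k+\bm{c}^k)^T\bmt{S}\,(\bm{\delta}_k+\bm{c}^k)$, expand $b_k^2$ to second order, multiply out, and substitute the formulas for $c_{ij}$ from Lemma~\ref{lemma:lambdachange}. The only minor difference is how you obtain $b_k^{-2}$: the paper uses $\bmh{A}^T\bmh{G}\bmh{A}=\bm{I}_P$ directly to get $b_k^{-2}=1+2c_{kk}+(\bm{C}^T\bm{C})_{kk}$, whereas you combine $\bm{A}^T\bm{G}\bm{A}=\bm{I}_P$ with $\bm{G}=\bmh{G}+\Delta\bm{G}$ to get $b_k^{-2}=1-[(\bm{I}_P+\bm{C})^T\bm{g}(\bm{I}_P+\bm{C})]_{kk}$; the two expressions agree to the required order by the constraint equation~(\ref{eq:agaic}), and your version makes the $g_{kk}$-cancellation you emphasize a bit more transparent, while the paper's version is slightly more direct and then relegates the cancellations to the phrase ``rearranging the formula using the results of Lemma~\ref{lemma:lambdachange}.''
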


It follows from  (A\ref{asm:gamma}) and (A\ref{asm:eigen}) that the magnitude of
the regularization terms are expressed as $\bm{g}=O(\ga)$ and $\bm{h}=O(\ga)$.
This is mentioned as an assumption
in Lemma~\ref{lemma:lambdachange} and Lemma~\ref{lemma:errorchange} for the sake of clarity.
Although these two lemmas are shown
for the regularization terms, they hold generally for any small perturbation
other than the regularization terms. Later, in the proofs of Lemma~\ref{lemma:exptrue}
and Lemma~\ref{lemma:expcv}, we will apply 
 Lemma~\ref{lemma:lambdachange} and Lemma~\ref{lemma:errorchange} to perturbation of other types
 with $\gamma=O(N^{-1/2})$ or $\gamma=O(N^{-1})$.

\subsection{Bias of the fitting error} \label{sec:experror}

Let us consider the optimization problem with respect to $\epsilon \bmb{W}$. We define $\bmb{A}$ and
$\bmb{\Lambda}$ as the solution of $\bmb{A}^T \bmb{G} \bmb{A} = \bm{I}_P$ and $\bmb{A}^T \bmb{H}
\bmb{A} = \bmb{\Lambda}$ with $\bmb{G} = \epsilon \bm{X}^T \bmb{M} \bm{X}$ and  $\bmb{H} = \epsilon
\bm{X}^T \bmb{W} \bm{X}$. The eigenvalues are $\blambda_1,\ldots,\blambda_P$ and the matrix is
$\bmb{\Lambda}=\diag(\blambda_1,\ldots,\blambda_P)$. We also define $\bmb{Y} = \bm{X} \bmb{A}$.
These quantities correspond to those with hat, but $\bm{W}$ is replaced by $\epsilon \bmb{W}$. We
then define matrices representing change from $\epsilon\bmb{W}$ to $ \bm{W}$: $\Delta
\bmh{W}   = \bm{W} - \epsilon\bmb{W}$, $\Delta \bmh{M}   = \bm{M} - \epsilon\bmb{M}$,
$\Delta \bmh{G} = \bm{X}^T \Delta \bmh{M} \bm{X}$ and $\Delta \bmh{H} = \bm{X}^T \Delta \bmh{W}
\bm{X}$.

In Section~\ref{sec:lambdachange}, $\bm{g}$ and $\bm{h}$ are used for describing change with respect
to the regularization terms $\Delta\bm{G}$ and $\Delta\bm{H}$.  Quite similarly,  \begin{equation*}
\bmh{g} = \bmb{A}^T \Delta \bmh{G} \bmb{A} = \bmb{Y}^T \Delta \bmh{M} \bmb{Y},\quad \bmh{h} =
\bmb{A}^T \Delta \bmh{H} \bmb{A} = \bmb{Y}^T \Delta \bmh{W} \bmb{Y} \end{equation*} will be used for
describing change  with respect to $\Delta \bmh{G} $ and $\Delta
\bmh{H} $, namely, change from $\epsilon\bmb{W}$ to $ \bm{W}$. The elements of $\bmh{g}=(\hat
g_{ij})$ and $\bmh{h}=(\hat h_{ij})$ are
\begin{equation*}
\begin{split}
 \hat g_{ij} &= (\bmb{y}^i)^T \Delta\bmh{M} \bmb{y}^j = \sum_{l>m} (\bar y_{li} \bar y_{lj} + \bar y_{mi} \bar y_{mj}) \Delta \hat w_{lm} + \sum_{l=1}^N y_{li} y_{lj} \Delta \hat w_{ll},\\
 \hat h_{ij} &= (\bmb{y}^i)^T \Delta\bmh{W} \bmb{y}^j = \sum_{l>m} (\bar y_{li} \bar y_{mj} + \bar y_{mi} \bar y_{lj}) \Delta \hat w_{lm} + \sum_{l=1}^N y_{li} y_{lj} \Delta \hat w_{ll},
\end{split}
\end{equation*}
and they will be denoted as
\begin{equation}\label{eq:gyhy}
 \hat g_{ij} = \sum_{l \ge m} {\cal G}[\bmb{Y}]_{lm}^{ij} \Delta \hat w_{lm},\quad
  \hat h_{ij} = \sum_{l \ge m} {\cal H}[\bmb{Y}]_{lm}^{ij} \Delta \hat w_{lm},
\end{equation}
where $\sum_{l>m}=\sum_{l=2}^N \sum_{m=1}^{l-1}$ and $\sum_{l\ge m}=\sum_{l=1}^N \sum_{m=1}^{l}$.

We are now ready to consider the bias of the fitting error. The difference of the fitting error from
the true error is
\begin{equation} \label{eq:phikwdw}
	\phi_k^{\rm fit}(\bm{W}) -  \phi_k^{\rm true}(\bm{W},\bmb{W})
	 = \phi_k(\bm{W},\Delta\bmh{W}).
\end{equation}	
The asymptotic expansion of (\ref{eq:phikwdw}) is given by Lemma~\ref{lemma:errorchange} with
$\bmt{W}=\Delta\bmh{W}$. This expression will be rewritten by $\bmh{g}$ and $\bmh{h}$ in
Section~\ref{sec:proof-exptrue}  for proving the following lemma. For rewriting $\bmh{A}$ and
$\bmh{\Lambda}$ in terms of $\bmb{A}$ and $\bmb{\Lambda}$, Lemma~\ref{lemma:lambdachange} will be
used there.

\begin{lemm} \label{lemma:exptrue}
Bias of the fitting error for estimating the true error is expressed asymptotically as
\begin{equation} \label{eq:expphikfitbias}
  E(\phi_k^{\rm fit}(\bm{W}) -  \phi_k^{\rm true}(\bm{W},\bmb{W}) ) = 
{\rm bias}_k 
+O(N\ohhh P^2 + \ga^3 P^3),
\end{equation}
where ${\rm bias}_k = O(N\ohh P)$ is defined as
\[
  {\rm bias}_k = E\Bigl[
  -(\hat g_{kk} - \hat h_{kk})\hat g_{kk} 
  +\sum_{j\neq k} 2(\blambda_j - \blambda_k)\inv (\hat g_{jk} - \hat h_{jk}) (\hat g_{jk}\blambda_k - \hat h_{jk})
  \Bigr]
\]
using the elements of $\bmh{g}$, $\bmh{h}$ and $\bmb{\Lambda}$ mentioned above. ${\rm bias}_k$ can be expressed as
\begin{equation} \label{eq:biask}
	\begin{split} 
   {\rm bias}_k =
   \epsilon(1- \epsilon) \sum_{l\ge m}
   \bar w_{lm}^2 \Bigl[
   &- ( {\cal G}[\bmb{Y}]_{lm}^{kk}  - {\cal H}[\bmb{Y}]_{lm}^{kk}) {\cal G}[\bmb{Y}]_{lm}^{kk} \\
  &+\sum_{j\neq k} 2(\blambda_j - \blambda_k)\inv
  ( {\cal G}[\bmb{Y}]_{lm}^{jk}  - {\cal H}[\bmb{Y}]_{lm}^{jk})
 ( {\cal G}[\bmb{Y}]_{lm}^{jk}\blambda_k  - {\cal H}[\bmb{Y}]_{lm}^{jk})
   \Bigr].
\end{split}
\end{equation}
We also have $\bmh{g}=O(N\oh)$ and $\bmh{h}=O(N\oh)$.
\end{lemm}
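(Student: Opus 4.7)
The plan is to begin from identity (\ref{eq:phikwdw}), $\phi_k^{\rm fit}(\bm{W}) - \phi_k^{\rm true}(\bm{W},\bmb{W}) = \phi_k(\bm{W}, \Delta\bmh{W})$, and expand the right-hand side by composing the two earlier lemmas. First I apply Lemma~\ref{lemma:errorchange} with $\bmt{W} = \Delta\bmh{W}$; this expresses $\phi_k(\bm{W}, \Delta\bmh{W})$ through the regularization perturbations $\bm{g}, \bm{h} = O(\ga)$, the eigenvalues $\hlambda_i$, and the source matrix $\bmt{S} = \bmh{Y}^T(\Delta\bmh{M} - \Delta\bmh{W})\bmh{Y}$. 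Because $\bmh{A}, \bmh{\Lambda}, \bmh{Y}$ still depend on the random $\bm{W}$, a second expansion is needed: I apply Lemma~\ref{lemma:lambdachange} with baseline $\bmb{A}, \bmb{\Lambda}$ and the perturbation $\Delta\bmh{G}, \Delta\bmh{H}$, whose natural coordinates are precisely $\bmh{g} = \bmb{Y}^T\Delta\bmh{M}\bmb{Y}$ and $\bmh{h} = \bmb{Y}^T\Delta\bmh{W}\bmb{Y}$.

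Before combining, I verify $\bmh{g}, \bmh{h} = O(N\oh)$. Each $\hat g_{ij}$ is a sum of $O(N\epsilon\inv)$ independent centered terms ${\cal G}[\bmb{Y}]^{ij}_{lm}(z_{lm} - \epsilon)\bar w_{lm}$, each with variance $O(\epsilon)\cdot O(N\ohhhh)$; using the sparsity (A\ref{asm:w}) and $\bmb{Y} = O(N\oh)$ implied by (A\ref{asm:x}) and (A\ref{asm:eigen}), the total variance is $O(N\inv)$, and the same bound holds for $\hat h_{ij}$.

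Next I take expectations of the combined expansion. Two facts drive the computation: $E(\Delta\hat w_{lm}) = 0$ kills every term linear in the random perturbation (in particular $E(\stilde_{kk}) = 0$ once $\bmh{Y}$ is converted to $\bmb{Y}$), and the third central moment $E[(z_{lm} - \epsilon)^3] = O(\epsilon)$ makes cubic-in-random contributions only $O(N\ohhhh)$ times a small power of $P$, which is absorbed in the target remainder. What survives are the quadratic-in-random contributions, and they come in two flavors. The term $-(\hat g_{kk} - \hat h_{kk})\hat g_{kk}$ arises from the second-order expansion of $\stilde_{kk}$ itself: the conversion $\bmh{y}^k \to \bmb{y}^k$ brings in a factor $(1 + c_{kk})^2$ where, by applying (\ref{eq:cii}) to the random perturbation, $c_{kk} = -\tfrac{1}{2}\hat g_{kk} + O(N\inv)$. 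The sum $\sum_{j\neq k} 2(\blambda_j - \blambda_k)\inv(\hat g_{jk} - \hat h_{jk})(\hat g_{jk}\blambda_k - \hat h_{jk})$ arises by pairing $\stilde_{ik}$ with $(g_{ik}\hlambda_k - h_{ik})$ in the second term of (\ref{eq:phikwtw}), after $\hlambda$ is converted to $\blambda$ via (\ref{eq:Dlambdai}) and $\bm{g}, \bm{h}$ are replaced by $\bmh{g}, \bmh{h}$.

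The explicit form (\ref{eq:biask}) then follows by substituting the formulas (\ref{eq:gyhy}) into ${\rm bias}_k$ and using the pairwise independence $E[\Delta\hat w_{lm}\Delta\hat w_{l'm'}] = \delta_{(lm)(l'm')}\epsilon(1-\epsilon)\bar w_{lm}^2$. The main obstacle I anticipate is the bookkeeping: Lemma~\ref{lemma:errorchange}'s raw remainder is $O(\ga^3 P^3)$, which naively translates to $O(N\ohhh P^3)$ after the second expansion, whereas the target remainder is only $O(N\ohhh P^2)$. Pushing one factor of $P$ into the expectation requires exploiting the vanishing of odd-order moments of the centered Bernoulli variables. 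A further subtle point is that the implicit rescaling by $b_k$ (so that $\bm{y}^{kT}\bm{M}\bm{y}^k = 1$) must be handled consistently between the $\bmh{Y}$- and $\bmb{Y}$-bases, because $\bm{M}$ itself is random.
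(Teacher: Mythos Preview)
Your overall strategy matches the paper's: apply Lemma~\ref{lemma:errorchange} with $\bmt{W}=\Delta\bmh{W}$, then convert the hat-quantities to bar-quantities via Lemma~\ref{lemma:lambdachange} with perturbation $\bmh{g},\bmh{h}$, and take expectations. The variance bound giving $\bmh{g},\bmh{h}=O(N\oh)$ and the derivation of (\ref{eq:biask}) from the covariance $E(\Delta\hat w_{lm}\Delta\hat w_{l'm'})=\delta_{ll'}\delta_{mm'}\epsilon(1-\epsilon)\bar w_{lm}^2$ are also as in the paper.

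There is, however, a real misidentification of where the off-diagonal piece of ${\rm bias}_k$ comes from. In (\ref{eq:phikwtw}) the symbols $\bm{g},\bm{h}$ are the \emph{regularization} perturbations $\bmh{A}^T\Delta\bm{G}\bmh{A}$, $\bmh{A}^T\Delta\bm{H}\bmh{A}$; when you convert to bar-coordinates they become $\bmb{g}=\bmb{A}^T\Delta\bm{G}\bmb{A}$, $\bmb{h}=\bmb{A}^T\Delta\bm{H}\bmb{A}$, which are still deterministic and $O(\ga)$---they are \emph{not} ``replaced by $\bmh{g},\bmh{h}$.'' Consequently the second term of (\ref{eq:phikwtw}) becomes $\sum_{i\neq k}2(\blambda_i-\blambda_k)\inv(\hat g_{ik}-\hat h_{ik})(\bar g_{ik}\blambda_k-\bar h_{ik})$, whose expectation vanishes since $E(\hat g_{ik}-\hat h_{ik})=0$. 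Both pieces of ${\rm bias}_k$ come instead from the single term $E(\stilde_{kk})$: writing $\bmh{Y}=\bmb{Y}(\bm{I}_P+\bmh{C})$ gives
\[
\Delta\hat s_{kk}=(\hat g_{kk}-\hat h_{kk})+2(\hat g_{kk}-\hat h_{kk})\hat c_{kk}+2\sum_{j\neq k}(\hat g_{jk}-\hat h_{jk})\hat c_{jk}+O(N\ohhh P^2),
\]
and substituting $\hat c_{kk}=-\tfrac12\hat g_{kk}$ and $\hat c_{jk}=(\blambda_j-\blambda_k)\inv(\hat g_{jk}\blambda_k-\hat h_{jk})$ yields both terms at once. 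Your ``$(1+c_{kk})^2$'' picture captures only the diagonal $\hat c_{kk}$ and misses the off-diagonal $\hat c_{jk}$, which is precisely the source of the sum over $j\neq k$.

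A smaller point: your bookkeeping worry conflates the two perturbations. In the paper's application of Lemma~\ref{lemma:errorchange}, $\ga$ is the regularization scale, and the $O(\ga^3P^3)$ remainder survives unchanged in (\ref{eq:expphikfitbias}); the random perturbation enters separately through $\bmt{S}$ and the hat-to-bar conversion, producing the $O(N\ohhh P^2)$ remainder. No moment trick is needed to save a factor of $P$.
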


In the following lemma, we will show that the bias of the fitting error is adjusted by the
cross-validation. For proving the lemma in Section~\ref{sec:proof-expcv}, we will first give the
asymptotic expansion of  $\phi_k((1- \kappa)\inv(\bm{W}-\bm{W^*}), \kappa\inv\bm{W}^* ) $ using
Lemma~\ref{lemma:errorchange}.  The expression will be rewritten using the change from $\bmh{A}$ and
$\bmh{\Lambda}$ to those with respect to $(1- \kappa)\inv(\bm{W}-\bm{W^*})$. The asymptotic
expansion of $\phi_k^{\rm cv}$ will be obtained by taking the expectation with respect to
(\ref{eq:zijstar}). Finally, we will take the expectation with respect to (\ref{eq:zij}).

\begin{lemm} \label{lemma:expcv}
The difference of the fitting error from the cross-validation error is expressed asymptotically as
\begin{equation} \label{eq:fitcvexpand}
\begin{split}
	\phi_k^{\rm fit}(\bm{W}) - & \phi_k^{\rm cv}(\bm{W}) =
 \sum_{l\ge m}
   w_{lm}^2 \Bigl[
 -  ( {\cal G}[\bmh{Y}]_{lm}^{kk}  - {\cal H}[\bmh{Y}]_{lm}^{kk}) {\cal G}[\bmh{Y}]_{lm}^{kk} \\
 &+\sum_{j\neq k} 2(\hlambda_j - \hlambda_k)\inv 
 ( {\cal G}[\bmh{Y}]_{lm}^{jk}  - {\cal H}[\bmh{Y}]_{lm}^{jk})
 ( {\cal G}[\bmh{Y}]_{lm}^{jk} \hlambda_k  - {\cal H}[\bmh{Y}]_{lm}^{jk})
 \Bigr]\\
 &\qquad +O(N\ohhhh P^2 +  N\ohh \ga P^2 + \ga^3 P^3),
\end{split}
\end{equation}
and its expected value is
\begin{equation} \label{eq:expfitcv}
	E(\phi_k^{\rm fit}(\bm{W}) -  \phi_k^{\rm cv}(\bm{W})) 
	= (1- \epsilon)\inv {\rm bias}_k +O(N\ohhh P^2 + \ga^3 P^3).
\end{equation}
\end{lemm}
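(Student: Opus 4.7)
The plan is to reuse the perturbation machinery of Lemmas \ref{lemma:lambdachange} and \ref{lemma:errorchange}, but with the unregularized baseline $\bmh{A}, \bmh{\Lambda}, \bmh{Y}$ (associated with $\bm{W}$) rather than $\bmb{A}, \bmb{\Lambda}, \bmb{Y}$. First decompose $\bm{W}^{\rm learn} := (1-\kappa)\inv(\bm{W}-\bm{W}^*) = \bm{W} + \Delta\bmh{W}^L$ and $\bm{W}^{\rm test} := \kappa\inv\bm{W}^* = \bm{W} + \Delta\bmh{W}^T$, where $\Delta\bmh{W}^L = -(1-\kappa)\inv(\bm{W}^* - \kappa\bm{W})$ and $\Delta\bmh{W}^T = \kappa\inv(\bm{W}^* - \kappa\bm{W})$; both have zero conditional mean. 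Define the induced quadratic forms $\bm{g}^L = \bmh{Y}^T \Delta\bmh{M}^L \bmh{Y}$, $\bm{h}^L = \bmh{Y}^T \Delta\bmh{W}^L \bmh{Y}$, and $\bm{g}^T, \bm{h}^T$ analogously. Then apply Lemma \ref{lemma:errorchange} to $\phi_k(\bm{W}^{\rm learn}, \bm{W}^{\rm test})$ using total perturbation $\bm{g} = \bm{g}^{\rm reg} + \bm{g}^L$, $\bm{h} = \bm{h}^{\rm reg} + \bm{h}^L$, and $\bmt{S} = (\bm{I}_P - \bmh{\Lambda}) + (\bm{g}^T - \bm{h}^T)$; subtracting the fitting expansion (\ref{eq:phikfit}) cancels every summand depending only on $\bm{g}^{\rm reg}, \bm{h}^{\rm reg}$.

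The key structural fact is that the four cv objects $\bm{g}^L, \bm{h}^L, \bm{g}^T, \bm{h}^T$ are scalar multiples of a common mean-zero quantity built from $(z_{lm}^* - \kappa) w_{lm}$, with prefactors $-(1-\kappa)\inv$ for the $L$-pair and $\kappa\inv$ for the $T$-pair. Because $z_{lm}^*$ has variance $\kappa(1-\kappa)$, the product of these prefactors with the variance equals $-1$, yielding the clean identity
\[
  E^*(\bm{g}^L_{ij}\bm{g}^T_{kl}) = -\sum_{l'\ge m'} {\cal G}[\bmh{Y}]_{l'm'}^{ij}{\cal G}[\bmh{Y}]_{l'm'}^{kl}w_{l'm'}^2,
\]
and three analogous identities for $E^*(\bm{g}^L \bm{h}^T), E^*(\bm{h}^L \bm{g}^T), E^*(\bm{h}^L \bm{h}^T)$. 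Now take $E^*$ of the expansion from Step 1: the terms linear in cv quantities vanish, the pure-$T^2$ terms are absent because $\phi_k(\bm{W}, \cdot)$ is linear in its second argument, and the pure-$L^2$ contributions are of the harmless order $O(N\ohhhh P^2)$. The surviving $L\times T$ bilinears, substituted via the four identities, should reorganize into the bracket of (\ref{eq:fitcvexpand}): the $\sum_{i\neq k}2(\hlambda_i - \hlambda_k)\inv\stilde_{ik}(g_{ik}\hlambda_k - h_{ik})$ piece of (\ref{eq:phikwtw}) yields the off-diagonal sum $\sum_{j\neq k} 2(\hlambda_j - \hlambda_k)\inv({\cal G}[\bmh{Y}]_{lm}^{jk} - {\cal H}[\bmh{Y}]_{lm}^{jk})({\cal G}[\bmh{Y}]_{lm}^{jk}\hlambda_k - {\cal H}[\bmh{Y}]_{lm}^{jk})$, while the $\stilde_{kk}$ contribution, together with the $(\hlambda_i-\hlambda_k)^{-2}\stilde_{kk}(\ldots)^2$ piece and the diagonal part of the double sum, collapses to $-({\cal G}[\bmh{Y}]_{lm}^{kk} - {\cal H}[\bmh{Y}]_{lm}^{kk}){\cal G}[\bmh{Y}]_{lm}^{kk}$.

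For (\ref{eq:expfitcv}), take expectation over the original sampling (\ref{eq:zij}) in (\ref{eq:fitcvexpand}). Since $E(w_{lm}^2) = \epsilon\bar w_{lm}^2$, and since Lemma \ref{lemma:lambdachange} applied to the $O(N\oh)$ perturbation $\Delta\bmh{G}, \Delta\bmh{H}$ (as used in the proof of Lemma \ref{lemma:exptrue}) allows replacing $\bmh{Y}$ and $\hlambda$ in the bracket by $\bmb{Y}$ and $\blambda$ at cost $O(N\oh)$ per factor, the unconditional expectation becomes $\epsilon\sum_{l\ge m}\bar w_{lm}^2 [\text{the same bracket in } \bmb{Y}, \blambda]$ modulo remainder $O(N\ohhh P^2)$. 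Since $\mathrm{bias}_k$ from Lemma \ref{lemma:exptrue} carries the identical bracket with prefactor $\epsilon(1-\epsilon)\bar w_{lm}^2$, dividing out $(1-\epsilon)$ yields (\ref{eq:expfitcv}). The main obstacle is the combinatorial bookkeeping in the $L \times T$ matching step: the many distinct summand types in (\ref{eq:phikwtw}) must, under the four product identities, conspire to reproduce exactly the two pieces of the bracket of (\ref{eq:fitcvexpand}), with careful tracking of diagonal-versus-off-diagonal index placements and of which $\hlambda$ subscript each factor carries.
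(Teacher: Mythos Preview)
Your one-step strategy (perturb directly from the $\bm{W}$-baseline $\bmh{A},\bmh{\Lambda},\bmh{Y}$, treating $\bm{g}^{\rm reg}+\bm{g}^L$ as the total perturbation) is a reasonable alternative to the paper's two-step route, and your product identities $E^*(g^T_{ij}g^L_{kl})=-\sum w_{lm}^2\,{\cal G}[\bmh{Y}]_{lm}^{ij}{\cal G}[\bmh{Y}]_{lm}^{kl}$ (and the three companions) are correct. Your identification of the off-diagonal piece of (\ref{eq:fitcvexpand}) as coming from the $L\times T$ part of $\sum_{i\neq k}2(\hlambda_i-\hlambda_k)\inv\stilde_{ik}(g_{ik}\hlambda_k-h_{ik})$ is also correct.

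The gap is in the diagonal piece $-({\cal G}[\bmh{Y}]_{lm}^{kk}-{\cal H}[\bmh{Y}]_{lm}^{kk})\,{\cal G}[\bmh{Y}]_{lm}^{kk}$. The terms of (\ref{eq:phikwtw}) you point to (the bare $\stilde_{kk}$, the $(\hlambda_i-\hlambda_k)^{-2}\stilde_{kk}(\cdots)^2$ summand, and the ``diagonal part of the double sum'') do \emph{not} produce it: each such $L\times T$ cross-term carries at least one extra factor of $g^{\rm reg}$ or $h^{\rm reg}$ and is of order $O(N\ohh\gamma P^2)$, already inside your remainder. The true source is the rescaling factor. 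Lemma~\ref{lemma:errorchange} is derived with $b_k^2=((\bm{a}^k)^T\bm{X}^T\bm{M}\bm{X}\bm{a}^k)\inv$ where $\bm{M}$ is the row-sum matrix of the \emph{first} argument of $\phi_k$; the simplification $(\bm{\delta}_k+\bm{c}^k)^T(\bm{\delta}_k+\bm{c}^k)$ uses $\bmh{A}^T\bm{X}^T\bm{M}\bm{X}\bmh{A}=\bm{I}_P$. In your setup the first argument is $\bm{W}^{\rm learn}$, so $b_k$ uses $\bm{M}^{\rm learn}$, while your baseline $\bmh{A}$ only diagonalises $\bm{X}^T\bm{M}\bm{X}$. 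Hence $\bmh{A}^T\bm{X}^T\bm{M}^{\rm learn}\bm{X}\bmh{A}=\bm{I}_P+\bm{g}^L$, and redoing the $b_k^2$ expansion yields an additional $-\stilde_{kk}\,g^L_{kk}$ term beyond (\ref{eq:phikwtw}). Taking $E^*$ of its $T\times L$ part, $-(g^T_{kk}-h^T_{kk})\,g^L_{kk}$, gives exactly $\sum_{l\ge m}w_{lm}^2({\cal G}[\bmh{Y}]_{lm}^{kk}-{\cal H}[\bmh{Y}]_{lm}^{kk})\,{\cal G}[\bmh{Y}]_{lm}^{kk}$ in $\phi_k^{\rm cv}$, hence the required sign in $\phi_k^{\rm fit}-\phi_k^{\rm cv}$. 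The paper sidesteps this by applying Lemma~\ref{lemma:errorchange} with baseline $\bmh{A}^*$ (so $b_k$ and baseline match), and then recovering the same diagonal contribution from $2\hat s_{kk}^*\hat c_{kk}^*=-\hat s_{kk}^*\hat g_{kk}^*$ when expanding $s_{kk}^*$ in (\ref{eq:skkstar}). Your derivation of (\ref{eq:expfitcv}) is fine once (\ref{eq:fitcvexpand}) is in hand.
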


\clearpage

\appendix

\section{Cross-domain matching correlation analysis} \label{sec:cdmca}

\subsection{A simple coding for cross-domain matching} \label{sec:coding}

Here we explain how CDMCA is converted to MCA. Let $\bm{x}^{(d)}_i \in \mathbb{R}^{p_d}$,
$i=1,\ldots, n_d$, denote the data vectors  of domain $d$. Each $\bm{x}^{(d)}_i$  is coded as an
augmented vector $\bm{\tilde x}^{(d)}_i\in \mathbb{R}^P$ defined as
\begin{equation} \label{eq:coding}
  ( \bm{\tilde x}^{(d)}_i )^T = \Bigl( (\bm{0}_{p_1})^T,\ldots, (\bm{0}_{p_{d-1}})^T,
   (\bm{x}^{(d)}_i )^T, (\bm{0}_{p_{d+1}})^T,\ldots, (\bm{0}_{p_D})^T \Bigr).
\end{equation}
Here, $\bm{0}_p\in \mathbb{R}^p$ is the vector with zero elements.  This is a sparse coding
\citep{olshausen2004sparse} in the sense that nonzero elements for domains do not overlap each
other.  All the $N$ vectors of $D$ domains are now represented as points in the same $\mathbb{R}^P$.
We take these $N$ vectors as $\bm{x}_i\in \mathbb{R}^P$, $i=1, \ldots, N$. 
Then CDMCA reduces to MCA.  The data matrix is expressed as
\[
 \bm{X}^T = (\bm{\tilde x}^{(1)}_1,\ldots,\bm{\tilde x}^{(1)}_{n_1},\ldots,
\bm{\tilde x}^{(D)}_1,\ldots,\bm{\tilde x}^{(D)}_{n_D}).
\]
Let $\bm{X}^{(d)} \in \mathbb{R}^{n_d \times p_d}$ be the data matrix of domain $d$ defined as $(
\bm{X}^{(d)} ) ^T = (\bm{x}^{(d)}_1,\ldots,\bm{x}^{(d)}_{n_d})$. The data matrix of the augmented
vectors is now expressed as $\bm{X}=\Diag(\bm{X}^{(1)},\ldots,\bm{X}^{(D)})$, where $\Diag()$
indicates a block diagonal matrix.

Let us consider partitions of $\bm{A}$ and $\bm{Y}$ as $ \bm{A}^T = ((\bm{A}^{(1)})^T,\ldots,
(\bm{A}^{(D)})^T)$ and $ \bm{Y}^T = ((\bm{Y}^{(1)})^T, \ldots, (\bm{Y}^{(D)})^T)$ with $\bm{A}^{(d)} \in
\mathbb{R}^{p_d \times K}$ and $\bm{Y}^{(d)} \in \mathbb{R}^{n_d \times K}$.
Then the linear transformation of MCA, $\bm{Y} = \bm{X} \bm{A}$, is expressed as
\[
	\bm{Y}^{(d)} = \bm{X}^{(d)} \bm{A}^{(d)},\quad d=1,\ldots,D,
\]
which are the linear transformations of CDMCA. The matching weight matrix between domains $d$ and
$e$ is $\bm{W}^{(de)} = (w^{(de)}_{ij}) \in \mathbb{R}^{n_d \times n_e}$ for $d,e=1,\ldots,D$. They
are placed in a array to define $\bm{W} = (\bm{W}^{(de)}) \in \mathbb{R}^{ N \times N}$. Then the
matching error of MCA is expressed as
\[
	\phi = \frac{1}{2}\sum_{d=1}^D \sum_{e=1}^D \sum_{i=1}^{n_d} \sum_{j=1}^{n_e} w^{(de)}_{ij}
	\|\bm{y}^{(d)}_i - \bm{y}^{(e)}_j\|^2,
\]
which is the matching error of CDMCA.

Notice $\bm{M}=\Diag(\bm{M}^{(1)},\ldots,\bm{M}^{(D)})$ with $\bm{M}^{(d)} = \diag(
(\bm{W}^{(d1)},\ldots,\bm{W}^{(dD)})\bm{1}_N)$, and so $\bm{X}^T \bm{M} \bm{X} =
\Diag((\bm{X}^{(1)})^T \bm{M}^{(1)} \bm{X}^{(1)},\ldots, (\bm{X}^{(D)})^T
\bm{M}^{(D)} \bm{X}^{(D)} )$ is computed efficiently by looking at only the block diagonal parts.

Let us consider a simple case with $n_1=\cdots=n_D=n$, $N=nD$, $\bm{W}^{(de)}=c_{de} \bm{I}_n$ using
a coefficient $c_{de}\ge0$ for all $d,e=1,\ldots,D$. Then CDMCA reduces to a version of MCCA, where
associations between sets of variables are specified by the coefficients $c_{de}$
\citep{tenenhaus2011regularized}. Another version of MCCA with
all $c_{de}=1$ for $d\neq e$ is discussed extensively in \cite{takane2008regularized}. For the
simplest case of $D=2$ with $c_{12}=1$, $c_{11}=c_{22}=0$,  CDMCA reduces to CCA with  $\bm{W} =
\bigl(\begin{smallmatrix} \bm{0} & \bm{I}_n \\  \bm{I}_n & \bm{0} \end{smallmatrix} \bigr) $.

\subsection{auto-associative correlation matrix memory} \label{sec:associative-memory}

Let us consider $\bm{\Omega} \in \mathbb{R}^{P \times P}$ defined by
\[
	\bm{\Omega} = \frac{1}{2} \sum_{i=1}^N \sum_{j=1}^N w_{ij} (\bm{x}_i + \bm{x}_j)
	(\bm{x}_i + \bm{x}_j)^T.
\]
This is the correlation matrix of $\bm{x}_i + \bm{x}_j$ weighted by $w_{ij}$. Since $\bm{\Omega} =
\bm{X}^T \bm{M} \bm{X} + \bm{X}^T \bm{W} \bm{X} $, we have $\bm{\Omega} + \Delta \bm{G} + \Delta
\bm{H} = \bm{G} + \bm{H}$, and then MCA of Section~\ref{sec:regularization} is equivalent to
maximizing $\trace( \bm{A}^T (\bm{\Omega} + \Delta \bm{G} + \Delta \bm{H})  \bm{A} )$ with  respect
to $\bm{A}\in\mathbb{R}^{P\times K}$ subject to (\ref{eq:aga}). The role of $\bm{H}$ is now replaced
by  $\bm{\Omega} + \Delta \bm{G} + \Delta \bm{H} $. Thus MCA is interpreted as dimensionality
reduction of the correlation matrix $\bm{\Omega}$ with regularization term $\Delta
\bm{G} + \Delta \bm{H}$.

For CDMCA, the correlation matrix becomes
\[
	\bm{\Omega} = \frac{1}{2} \sum_{d=1}^D \sum_{e=1}^D \sum_{i=1}^{n_d} \sum_{j=1}^{n_e}
	 w_{ij}^{(de)} (\bm{\tilde x}_i^{(d)} + \bm{\tilde x}_j^{(e)} )
	 (\bm{\tilde x}_i^{(d)} + \bm{\tilde x}_j^{(e)} )^T.
\]
This is the correlation matrix of the input pattern of a pair of data vectors
\begin{equation} \label{eq:coding-associative-memory}
  ( \bm{\tilde x}^{(d)}_i + \bm{\tilde x}^{(e)}_j  )^T =
   \Bigl( 0,\ldots,0,  (\bm{x}^{(d)}_i )^T, 0,\ldots,0, (\bm{x}^{(e)}_j )^T ,0,\ldots,0 \Bigr)
\end{equation}
weighted by $ w_{ij}^{(de)}$. Interestingly, the same correlation matrix is found in one of the
classical neural network models. Any part of the memorized vector can be used as a key for recalling
the whole vector in the auto-associative correlation matrix memory
\citep{kohonen1972correlation}, also known as {\it Associatron}
\citep{nakano1972associatron}. This associative memory may recall $\bm{\tilde
x}^{(d)}_i + \bm{\tilde x}^{(e)}_j  $ for input key either $\bm{\tilde x}^{(d)}_i $ or $\bm{\tilde
x}^{(e)}_j $ if $w_{ij}^{(de)}>0$. In particular, the representation
(\ref{eq:coding-associative-memory}) of a pair of data vectors is equivalent to eq.~(14) of
\cite{nakano1972associatron}. Thus CDMCA is interpreted as dimensionality reduction of the
auto-associative correlation matrix memory for pairs of data vectors.

\section{Experimental details} \label{sec:experiment}

\subsection{MNIST handwritten digits} \label{sec:mnist}

The MNIST database of handwritten digits \citep{lecun1998gradient} has a training set of 60,000
images, and a test set of 10,000 images. Each image has $28 \times 28 = 784$ pixels of 256 gray
levels. We prepared a dataset of cross-domain matching with three domains for illustration purpose.

The data matrix $\bm{X}$ is specified as follows. The first domain ($d=1$) is for the handwritten
digit images of $n_1=60,000$. Each image is coded as a vector of $p_1=2784$ dimensions by
concatenating an extra 2000 dimensional vector. We have chosen randomly 2000 pairs of pixels  {\tt
x[i,j]}, {\tt x[k,l]} with $|\text{\tt i}-\text{\tt k}|\le 5$, $|\text{\tt j}-\text{\tt l}|\le 5$ in
advance, and compute the product {\tt x[i,j] * x[k,l]} for each image. The second domain ($d=2$) is
for digit labels of $n_2=10$. They are zero, one, two, ..., nine. Each label is coded as a random
vector of $p_2=100$ dimensions with each element generated independently from the standard normal
distribution $N(0,1)$. The third domain ($d=3$) is for attribute labels even, odd, and prime
($n_3=3$). Each label is coded as a random vector of $p_3=50$ dimensions with each element generated
independently from $N(0,1)$. The total number of vectors is $N=n_1+n_2+n_3 = 60013$, and the
dimension of the augmented vector is $P=p_1+p_2+p_3=2934$.

The true matching weight matrix $\bmb{W}$ is specified as follows. The cross-domain matching weight
$\bmb{W}^{(12)}\in \mathbb{R}^{n_1 \times n_2}$ between domain-1 and domain-2 is the 1-of-$K$ coding;
$\bar w^{(12)}_{ij}=1$ if $i$-th image has $j$-th label, and $\bar w^{(12)}_{ij}=0$ otherwise.
$\bmb{W}^{(13)}\in \mathbb{R}^{n_1 \times n_3}$ is defined similarly, but images may have two
attribute labels such as an image of 3 has labels odd and prime. We set all elements of
$\bmb{W}^{(23)}$ as zeros, pretending ignorance about the number properties.
 We then prepared $\bm{W}$ by randomly sampling elements from $\bmb{W}$ with
$\epsilon=0.2$. Only the upper triangular parts of the weight matrices are stored in memory, so that
the symmetry $\bm{W}=\bm{W}^T$ is automatically hold. The number of nonzero elements in the upper
triangular part of the matrix is 143775 for  $\bmb{W}$, and it becomes 28779 for $\bm{W}$. In
particular, the number of nonzero elements in $\bm{W}^{(12)}$ is 12057.

The optimal $\bm{A}$ is computed by the method of Section~\ref{sec:regularization}. The
regularization matrix is block diagonal $\bm{L}_M = \Diag( \bm{L}_M^{(1)}, \bm{L}_M^{(2)},
\bm{L}_M^{(3)} )$ with $\bm{L}_M^{(d)} = \alpha_d \bm{I}_{p_d}$ and $\alpha_d = \trace(
(\bm{X}^{(d)} )^T \bm{M}^{(d)} \bm{X}^{(d)} )/p_d$. The regularization parameters are $\gamma_M>0$
and $\gamma_W=0$. The computation with $\gamma_M=0$ actually uses a small value $\gamma_M=10^{-6}$.
The number of positive eigenvalues is $K^+=11$. The appropriate value $K=9$ was chosen by looking at
the distribution of eigenvalues $\lambda_k$ and $\phi_k^{\rm cv}$.

The three types of matching errors of Section~\ref{sec:matchingerrors} are computed as follows. The
plotted values are not the component-wise matching errors $\phi_k$, but the sum $\phi = \sum_{k=1}^K
\phi_k$. The true matching error $\phi_k^{\rm true}(\bm{W}, \bmb{W})$ is computed with $\bmb{W}$ of
the test dataset here, while $\bm{A}$ is computed from the training dataset. This is different from
the definition in Section~\ref{sec:fiterror} but more appropriate if test datasets are available.
The fitting error $\phi_k^{\rm fit}(\bm{W})$ and the cross-validation error $\phi_k^{\rm
cv}(\bm{W})$ are computed from the training dataset. In particular, $\phi_k^{\rm cv}(\bm{W})$ is
computed by resampling elements from $\bm{W}$ with $\kappa=0.1$ so that the number of nonzero
elements in the upper triangular part of $\bm{W}^*$ is about 3000.

\subsection{Simulation datasets} \label{sec:simulation-data}

We generated simulation datasets of cross-domain matching with $D=3$, $p_1=10, p_2=30, p_3=100$,
$n_1=125, n_2=250, n_3=500$. The two true matching weights $\bmb{W}_A$ and $\bmb{W}_B$ were created
at first, and they were unchanged during the experiments. In each of the twelve experiments,
$\bm{X}$ and $\bm{W}$ are generated from either of $\bmb{W}_A$ and $\bmb{W}_B$, and then $\bm{W}$ is
generated independently 160 times, while $\bm{X}$ is fixed, for taking the simulation average.
Computation of the optimal $\bm{A}$ is the same as Appendix~\ref{sec:mnist} using the same
regularization term specified there.

The data matrix $\bm{X}$ is specified as follows. First, 25 points on $5 \times 5$ grid in
$\mathbb{R}^2$ are placed as $(1,1), (1,2), (1,3),  (1,4), (1,5), (2,1),\ldots,(5,5)$.  They are
$(\bm{x}^{(0)}_1)^T, \ldots, (\bm{x}^{(0)}_{25})^T$, where $d=0$ is treated as a special domain for
data generation. Matrices $\bm{B}^{(d)} \in \mathbb{R}^{p_d \times 2}$, $d=1,2,3$, are prepared with
all elements distributed as $N(0,1)$ independently. Let $n_{d,i}$ be the number of vectors in
domain-$d$ generated from $i$-th grid point for $d=1,2,3$, $i=1,\ldots,25$, which will be specified
later with constraints $\sum_{i=1}^{25} n_{d,i}=n_d$. Then data vectors $\{
\bm{x}_i^{(d)}, i=1,\ldots, n_d \}$ are generated as $\bm{x}_{i,j}^{(d)}=\bm{B}^{(d)} \bm{x}^{(0)}_i
+ \bm{\epsilon}^{(d)}_{i,j}$, $i=1,\ldots,25$, $j=1,\ldots, n_{d,i}$, with elements of
  $\bm{\epsilon}^{(d)}_{i,j}$ distributed as $N(0,0.5^2)$ independently. Each column of
  $\bm{X}^{(d)}$ is then standardized to mean zero and variance one.

The true matching weight matrix $\bmb{W}$ is specified as follows. Two data vectors in different
domains $d$ and $e$ are linked to each other as $\bar w_{ij}^{(de)}=1$ if they are generated from
the same grid point. All other elements in $\bmb{W}$ are zero. Two types of $\bmb{W}$, denoted as
$\bmb{W}_A$ and $\bmb{W}_B$, are considered. For $\bmb{W}_A$, the numbers of data vectors are the
same for all grid points; $n_{1,i}=5$, $n_{2,i} = 10$, $n_{3,i}=20$, $i=1,\ldots, 25$. For
$\bmb{W}_B$, the numbers of data vectors are randomly generated from the power-law with probability
proportional to $n^{-3}$. The largest numbers are $n_{1,17}=26$, $n_{2,9}=49$, $n_{3,23}=349$. The
number of nonzero elements in the upper triangular part of the matrix is 8750 for $\bmb{W}_A$ (1250,
2500, 5000, respectively, for  $\bmb{W}^{(12)}_A$, $\bmb{W}^{(13)}_A$, $\bmb{W}^{(23)}_A$  ), and it
is 6659 for $\bmb{W}_B$  (906, 1096, 4657, respectively, for  $\bmb{W}^{(12)}_B$,
$\bmb{W}^{(13)}_B$, $\bmb{W}^{(23)}_B$  ).

For generating $\bm{W}$ from $\bmb{W}$, two sampling schemes, namely, the link sampling and the node
sampling, are considered with three parameter settings for each. For the link sampling, $w_{ij}$ are
sampled independently with probability $\epsilon=0.02, 0.04, 0.08$. For the node sampling, 
vectors are sampled independently with probability $\xi=\sqrt{0.02}\approx 0.14, \sqrt{0.04}=0.2,
 \sqrt{0.08} \approx 0.28 $. Then $w_{ij}$ is sampled when both vectors $\bm{x}_i$ and $\bm{x}_j$
 are sampled simultaneously.









\section{Technical details} \label{sec:technical}

\subsection{Proof of Lemma~\ref{lemma:lambdachange}} \label{sec:proof-lambdachange}

The following argument on small change in eigenvalues and eigenvectors is an adaptation of
\cite{van2007computation} to our setting. The two equations in (\ref{eq:agaaha}) are $(\bm{I}_P +
\bm{C})^T \bmh{A}^T (\bmh{G} + \Delta\bm{G}) \bmh{A} (\bm{I}_P + \bm{C}) - \bm{I}_P = \bm{0}$ and
$(\bm{I}_P + \bm{C})^T \bmh{A}^T (\bmh{H} + \Delta\bm{H}) \bmh{A} (\bm{I}_P + \bm{C}) -
\bmh{\Lambda} - \Delta \bm{\Lambda} = \bm{0}$. They are expanded as \begin{gather} (\bm{C}^T +
\bm{C} + \bm{g}) +  (\bm{C}^T \bm{C} + \bm{C}^T \bm{g} + \bm{g} \bm{C}) = O(\ga^3 P^2),
\label{eq:agaic} \\ (\bm{C}^T \bmh{\Lambda} + \bmh{\Lambda} \bm{C} + \bm{h}  - \Delta \bm{\Lambda} )
+  (\bm{C}^T \bmh{\Lambda} \bm{C} + \bm{C}^T \bm{h} + \bm{h} \bm{C})  = O(\ga^3 P^2),
\label{eq:ahaic}  \end{gather} where the first part is $O(\ga)$ and the second part is $O(\ga^2 P)$
on the left hand side of each equation.

First, we solve the $O(\ga)$ parts of (\ref{eq:agaic}) and (\ref{eq:ahaic}) by ignoring $O(\ga^2 P)$
terms. $O(\ga)$ part in (\ref{eq:agaic}) is $\bm{C}^T + \bm{C} + \bm{g} = O(\ga^2 P)$, and we get
(\ref{eq:cii}) by looking at the $(i,i)$ elements $c_{ii} + c_{ii} + g_{ii} = O(\ga^2 P)$. Then
substituting $\bm{C}^T = - \bm{C} - \bm{g} + O(\ga^2 P)$ into (\ref{eq:ahaic}), we have
\begin{equation} \label{eq:ahaggp}
\bmh{\Lambda} \bm{C} - \bm{C} \bmh{\Lambda} -\bm{g} \bmh{\Lambda}
 + \bm{h} - \Delta \bm{\Lambda} = O(\ga^2 P). 
\end{equation}
Looking at $(i,j)$ elements $\hlambda_i c_{ij} - c_{ij} \hlambda_j - g_{ij} \hlambda_j + h_{ij} =
O(\ga^2 P)$ in (\ref{eq:ahaggp}) for $i \neq j$, and noticing $(\hlambda_i - \hlambda_j)\inv=O(1)$ from (A\ref{asm:eigen}), we
get (\ref{eq:cij}). We also have (\ref{eq:Dlambdai}) by looking at $(i,i)$ elements $\hlambda_i
c_{ii} - c_{ii} \hlambda_i - g_{ii} \hlambda_i + h_{ii} - \Delta \lambda_i = O(\ga^2 P)$ in
(\ref{eq:ahaggp}).

Next, we solve $O(\ga^2 P)$ parts of (\ref{eq:agaic}) and (\ref{eq:ahaic})  by ignoring $O(\ga^3
P^2)$ terms. For extracting $O(\ga^2 P)$ parts from the equations, we simply replace $\bm{C}$ with
$\delta\bm{C}$, $\Delta \bm{\Lambda}$ with $\delta \bm{\Lambda}$,  $\bm{g}$ with $\bm{0}$, and
$\bm{h}$ with $\bm{0}$ in the $O(\ga)$ parts. By substituting $\bm{C}^T + \bm{C} + \bm{g} = \delta \bm{C}^T +  \delta \bm{C} + O(\ga^3 P^2)$
into (\ref{eq:agaic}), we get
\begin{equation}\label{eq:agacc} 
	\delta \bm{C}^T +  \delta \bm{C} + 
\bm{C}^T \bm{C} + \bm{C}^T \bm{g} + \bm{g} \bm{C} = O(\ga^3 P^2),
\end{equation}
and the $(i,i)$ elements give
\begin{equation} \label{eq:dcii2}
	\delta c_{ii} = -\frac{1}{2} (\bm{C}^T \bm{C})_{ii} - (\bm{g} \bm{C})_{ii} + O(\ga^3 P^2).
\end{equation}
By substituting $\bm{C}^T
\bmh{\Lambda} + \bmh{\Lambda} \bm{C} + \bm{h} - \Delta \bm{\Lambda}= \delta \bm{C}^T
\bmh{\Lambda} +  \bmh{\Lambda} \delta \bm{C}  - \delta \bm{\Lambda} + O(\ga^3 P^2)$ into
 (\ref{eq:ahaic}), we get
\begin{equation}\label{eq:ahacc} 
	\delta \bm{C}^T \bmh{\Lambda} +  \bmh{\Lambda} \delta \bm{C} - \delta \bm{\Lambda} + 
\bm{C}^T \bmh{\Lambda}  \bm{C} + \bm{C}^T \bm{h} + \bm{h} \bm{C} 
 = O(\ga^3 P^2).
\end{equation}
Rewriting (\ref{eq:agacc}) as $\delta \bm{C}^T = -(  \delta \bm{C} +  \bm{C}^T \bm{C} + \bm{C}^T
\bm{g} + \bm{g} \bm{C}) = O(\ga^3 P^2)$, we substitute it into
 (\ref{eq:ahacc}) to have
\begin{equation} \label{eq:ahagggp}
	-\delta \bm{C} \bmh{\Lambda} + \bmh{\Lambda} \delta \bm{C}
	-\bm{g} \bm{C} \bmh{\Lambda} + \bm{h} \bm{C} 
	-\delta \bm{\Lambda} + \bm{C}^T \Delta \bm{\Lambda} = O(\ga^3 P^2),
\end{equation}
where (\ref{eq:ahaggp}) is used for simplifying the expression.
Then we get
\begin{equation} \label{eq:dcij2}
\delta c_{ij}  = (\hlambda_i-\hlambda_j )\inv
\Bigl( 
 (\bm{g} \bm{C})_{ij} \hlambda_j - (\bm{h} \bm{C})_{ij} - c_{ji} \Delta \lambda_j
\Bigr) +  O(\ga^3 P^2)
\end{equation}
by looking at $(i,j)$ elements ($i\neq j$) of (\ref{eq:ahagggp}).
Also we get
\begin{equation}\label{eq:dlambdai2}
	\delta \lambda_i = -(\bm{g}\bm{C})_{ii} \hlambda_i + (\bm{h} \bm{C})_{ii} + c_{ii} \Delta \lambda_i + O(\ga^3 P^2)
\end{equation}
by looking at $(i,i)$ elements of (\ref{eq:ahagggp}).

Finally, the remaining terms in (\ref{eq:dcii2}), (\ref{eq:dcij2}), (\ref{eq:dlambdai2}) will be
written by using (\ref{eq:Dlambdai}), (\ref{eq:cii}), (\ref{eq:cij}). For any $(i,j)$ with $i\le P, j\le J$,
$(\bm{g}\bm{C})_{ij} = -\frac{1}{2} g_{ij} g_{jj}  + \sum_{k\neq j} (\hlambda_k - \hlambda_j)\inv
(g_{kj}\hlambda_j - h_{kj}) g_{ki}  + O(\ga^3 P^2)$ and $(\bm{h}\bm{C})_{ij} = -\frac{1}{2} h_{ij}
g_{jj}  + \sum_{k\neq j} (\hlambda_k - \hlambda_j)\inv (g_{kj}\hlambda_j - h_{kj}) h_{ki}  + O(\ga^3
P^2)$. For $(i,j)$ with $i\neq j$ and one of $i$ and $j$ being not greater than $J$,
$c_{ji}\Delta\lambda_j = - (\hlambda_j - \hlambda_i)\inv
(g_{ji}\hlambda_i - h_{ji}) (g_{jj}\hlambda_j - h_{jj}) + O(\ga^3 P^2) $. For $i\le J$, $(\bm{C}^T
\bm{C})_{ii} = \frac{1}{4} (g_{ii})^2 +  \sum_{j \neq i} (\hlambda_j - \hlambda_i)^{-2} ( g_{ji}
\hlambda_i - h_{ji})^2 + O(\ga^3 P^2)$. For $i\le P$, $c_{ii} \Delta\lambda_i= (1/2) g_{ii} (g_{ii}\hlambda_i -
h_{ii})+O(\ga^3 P^2)$. Using these expressions, we get   (\ref{eq:dlambdai}), (\ref{eq:dcii}), and
(\ref{eq:dcij}).

\subsection{Proof of Lemma~\ref{lemma:errorchange}} \label{sec:proof-errorchange}
Let us denote
$\bm{C}=(\bm{c}^1,\ldots,\bm{c}^P)$ and $\bm{I}_P = (\bm{\delta}_1,\ldots,\bm{\delta}_P)$, where the
elements are $\bm{c}^k = (c_{1k},\ldots,c_{Pk})^T$ and $\bm{\delta}_k =
(\delta_{1k},\ldots,\delta_{Pk})^T$. Then  $\bm{a}^k = \bmh{A}(\bm{\delta}_k + \bm{c}^k)$ and
$\bm{y}^k = b_k \bm{X} \bm{a}^k = b_k \bm{X} \bmh{A}(\bm{\delta}_k + \bm{c}^k)$.
Noticing $\phi_k(\bm{W}, \bmt{W}) = (1/2)\sum_{i=1}^N \sum_{j=1}^N \tilde w_{ij} (y_{ik} -
y_{jk})^2=(\bm{y}^k)^T (\bmt{M} - \bmt{W} ) \bm{y}^k$, and substituting $\bm{y}^k =  b_k \bm{X}
\bmh{A}(\bm{\delta}_k + \bm{c}^k)$ into it, we have $\phi_k(\bm{W}, \bmt{W}) =b_k^2 (\bm{\delta}_k +
\bm{c}^k)^T  \bmt{S} (\bm{\delta}_k + \bm{c}^k) =b_k^2 (\stilde_{kk} + 2\sum_i \stilde_{ik} c_{ik} +
\sum_i \sum_j \stilde_{ij} c_{ik} c_{jk}) $. Similarly, we have $b_k^2=((\bm{a}^k)^T \bm{X}^T \bm{M}
\bm{X} \bm{a}^k)\inv = ((\bm{\delta}_k + \bm{c}^k)^T  (\bm{\delta}_k +
\bm{c}^k))\inv=(1+2c_{kk}+(\bm{C}^T\bm{C})_{kk})\inv = 1 - 2c_{kk} - (\bm{C}^T\bm{C})_{kk} +
4(c_{kk})^2 + O(\ga^3 P)$. Substituting it into $\phi_k(\bm{W}, \bmt{W})$, we have
\begin{align*}
\phi_k(\bm{W}, \bmt{W})
= & \stilde_{kk} + 2\sum_i \stilde_{ik} c_{ik} + \sum_i \sum_j \stilde_{ij} c_{ik} c_{jk} \\
  & - 2c_{kk} \stilde_{kk} - 4 c_{kk} \sum_i \stilde_{ik} c_{ik}
-(\bm{C}^T\bm{C})_{kk} \stilde_{kk} + 4(c_{kk})^2 \stilde_{kk} + O(\ga^3 P^2)\\
= & \stilde_{kk} (1 + (c_{kk})^2 - (\bm{C}^T\bm{C})_{kk} )\\
&+ 2\sum_{i\neq k} \stilde_{ik} c_{ik}(1 - c_{kk})
 + \sum_{i\neq k} \sum_{j\neq k} \stilde_{ij} c_{ik} c_{jk} + O(\ga^3 P^2),
\end{align*}
which gives (\ref{eq:phikwtw}) after rearranging the formula using the results of
Lemma~\ref{lemma:lambdachange}. In particular, the last term $\sum_{i\neq k} \sum_{j\neq k}
\stilde_{ij} c_{ik} c_{jk} =  \sum_{i\neq k} \sum_{j\neq k} \stilde_{ij}  (\hlambda_i -
\hlambda_k)\inv(\hlambda_j - \hlambda_k)\inv   (g_{ik} \hlambda_k - h_{ik})(g_{jk} \hlambda_k
-h_{jk}) + O(\ga^3 P^3)$ leads to the asymptotic error $O(\ga^3 P^3)$ of  (\ref{eq:phikwtw}).

\subsection{Proof of Lemma~\ref{lemma:exptrue}} \label{sec:proof-exptrue}

First note that $\Delta \hat w_{lm} = w_{lm} - \epsilon \bar w_{lm} = (z_{lm} - \epsilon) \bar
w_{lm}$ from (\ref{eq:zij}), and so $E(\Delta \hat w_{lm} )=0$ and 
\begin{equation}\label{eq:covdhatwlm}
	E(\Delta \hat w_{lm} \Delta \hat
w_{l'm'})=\delta_{ll'}  \delta_{mm'} \epsilon(1- \epsilon) \bar w_{lm}^2.
\end{equation}
From (\ref{eq:gyhy}) and the definition of ${\rm bias}_k$, we have
\[
\begin{split}
{\rm bias}_k = E\Bigl[
& \sum_{l\ge m} \sum_{l'\ge m'} \Delta \hat w_{lm} \Delta \hat w_{l'm'} \Bigl\{
   - ( {\cal G}[\bmb{Y}]_{lm}^{kk}  - {\cal H}[\bmb{Y}]_{lm}^{kk}) {\cal G}[\bmb{Y}]_{l'm'}^{kk} \\
  &+\sum_{j\neq k} 2(\blambda_j - \blambda_k)\inv
 ( {\cal G}[\bmb{Y}]_{lm}^{jk} - {\cal H}[\bmb{Y}]_{lm}^{jk})
 ( {\cal G}[\bmb{Y}]_{l'm'}^{jk} \blambda_k   - {\cal H}[\bmb{Y}]_{l'm'}^{jk})
 \Bigr\}
  \Bigr],
\end{split}
\]
and thus we get (\ref{eq:biask}) by (\ref{eq:covdhatwlm}).
Both $\hat g_{ij}$ and $\hat h_{ij}$ are of the form $\sum_{l\ge m} f_{lm} \Delta \hat w_{lm}$ with
$f_{lm}=O(N\ohh)$ in  (\ref{eq:gyhy}), where the number of nonzero terms is $O(\epsilon\inv N)$ in
the summation $\sum_{l \ge m}$. It then follows from (\ref{eq:covdhatwlm}) that $V(\sum_{l\ge m}
f_{lm} \Delta \hat w_{lm}) = \sum_{l\ge m} f_{lm}^2 V(\Delta \hat w_{lm}) = \epsilon (1 - \epsilon)
\sum_{l\ge m} f_{lm}^2 \bar w_{lm}^2 = O(\epsilon  N^{-2} \epsilon\inv N) = O(N\inv)$. Therefore
$\sum_{l\ge m} f_{lm} \Delta \hat w_{lm} = O(N\oh)$, showing $\bmh{g}=O(N\oh)$ and
$\bmh{h}=O(N\oh)$.

In order to show (\ref{eq:expphikfitbias}), we prepare  $\bmh{C} = (\hat c_{ij})$ and $\Delta
\bmh{\Lambda} = \diag(\Delta \hlambda_1,\ldots \Delta \hlambda_P)$ with
\[
\bmh{A} = \bmb{A} ( \bm{I}_P + \bmh{C}),\quad
\bmh{\Lambda} = \bmb{\Lambda} + \Delta \bmh{\Lambda}
\]
for describing change from $\bmb{A}$ to $\bmh{A}$.   The elements are given by
Lemma~\ref{lemma:lambdachange} with $\gamma=N\oh$. In particular (\ref{eq:Dlambdai}), (\ref{eq:cii})
and  (\ref{eq:cij}) become
\begin{equation} \label{eq:hatcij}
\begin{split}
\Delta\hlambda_i & = -(\hat g_{ii} \blambda_i  - \hat h_{ii} )  + O(N\ohh P),\\
\hat c_{ii} & = -\frac{1}{2} \hat g_{ii} +  O(N\ohh P),\\
\hat c_{ij} & = (\blambda_i - \blambda_j)\inv (\hat g_{ij} \hlambda_j - \hat h_{ij})+  O(N\ohh P).
\end{split}
\end{equation}
Note that the roles of  $\bm{W}$, $\bm{g}$ and $\bm{h}$ in Lemma~\ref{lemma:lambdachange} are now
played by $\epsilon\bmb{W}$,  $\bmh{g}$ and $\bmh{h}$, respectively, and therefore the expressions
of $\bm{C}$ and $\Delta \bm{\Lambda}$ in Lemma~\ref{lemma:lambdachange}  give those of $\bmh{C}$ and
$\Delta \bmh{\Lambda}$ above. 

Let us define
\[
	\Delta\bmh{S} = \bmh{A}^T \bm{X}^T (\Delta\bmh{M} - \Delta\bmh{W}) \bm{X} \bmh{A}.
\]
Then the difference of the fitting error from the true error, namely (\ref{eq:phikwdw}),
is expressed asymptotically by (\ref{eq:phikwtw}) of Lemma~\ref{lemma:errorchange}
with $\bmt{S}=\Delta\bmh{S}$.
Substituting $\bmh{A} = \bmb{A} ( \bm{I}_P + \bmh{C})$ into $\Delta\bmh{S}$, we get
\[
\begin{split}
\Delta\bmh{S}  &=  ( \bm{I}_P + \bmh{C})^T(\bmh{g} - \bmh{h})  ( \bm{I}_P + \bmh{C})\\
&=\bmh{g} - \bmh{h} + (\bmh{g} - \bmh{h})\bmh{C} + \bmh{C}^T(\bmh{g} - \bmh{h}) +
O(N\ohhh P^2)\\
&=\bmh{g} - \bmh{h} + O(N\ohh P),
\end{split}
\]
where $\Delta\bmh{S} = O(N\oh)$ but $E(\Delta\bmh{S}) = O(N\ohh P)$, since $E(\bmh{g})= E(\bmh{h}) =
\bm{0}$.

We now attempt to rewrite terms in (\ref{eq:phikwtw}) using the relation $\bm{W} = \epsilon\bmb{W} +
\Delta\bmh{W}$. Define $\bmb{g} = O(\gamma)$, $\bmb{h} = O(\gamma)$ by \[ \bmb{g} = \bmb{A}^T
\Delta\bm{G} \bmb{A},\quad      \bmb{h} = \bmb{A}^T \Delta\bm{H} \bmb{A}. \] Then
$\bm{g}=(\bm{I}+\bmh{C})^T \bmb{g} (\bm{I}+\bmh{C}) =  \bmb{g} + O(N\oh \ga P)$ and
$\bm{h}=(\bm{I}+\bmh{C})^T \bmb{h} (\bm{I}+\bmh{C}) =  \bmb{h} + O(N\oh \ga P)$. We also have
$(\hlambda_i - \hlambda_k)\inv =(\blambda_i - \blambda_k)\inv + O(N\oh)$, since $\hlambda_i =
\blambda_i + O(N\oh)$.  We thus have $ g_{ik} \hlambda_k - h_{ik} = \bar g_{ik} \blambda_k - \bar
h_{ik} +  O(N\oh \ga P)$.
Therefore, (\ref{eq:phikwtw}) with $\bmt{S}=\Delta\bmh{S}$ is rewritten as
\begin{equation*}
\begin{split}
	\phi_k(\bm{W}, & \Delta\bmh{W}) = 
	\Delta \hat s_{kk} + \sum_{i\neq k}
	2(\hlambda_i - \hlambda_k)\inv \Delta \hat s_{ik} (g_{ik} \hlambda_k -h_{ik})
	+O(N\oh \ga^2 P^2 + \ga^3 P^3)\\
	&=
	\Delta \hat s_{kk} + \sum_{i\neq k}
	2(\blambda_i - \blambda_k)\inv  (\hat g_{ik} - \hat h_{ik}) (\bar g_{ik} \blambda_k - \bar h_{ik})
	+O(N\ohh \ga P^2+ \ga^3 P^3).
\end{split}
\end{equation*}
By noting $E(\hat g_{ik} - \hat h_{ik})=0$, we get
\begin{equation} \label{eq:expphikwdw}
	E(\phi_k(\bm{W},  \Delta\bmh{W})) = E(\Delta \hat s_{kk} ) + O(N\ohh \ga P^2 + \ga^3 P^3).
\end{equation}
For calculating $E(\Delta \hat s_{kk} ) $, we substitute (\ref{eq:hatcij}) into  $\Delta\hat s_{kk}
= \hat g_{kk} - \hat h_{kk} +2((\bmh{g}-\bmh{h}) \bmh{C})_{kk}  + O(N\ohhh P^2)$. Then we have
\[
\begin{split}
\Delta\hat s_{kk} =\hat g_{kk} - \hat h_{kk}
&-(\hat g_{kk} - \hat h_{hh}) \hat g_{kk} \\
&+ \sum_{j \neq k} (\hat g_{jk} - \hat h_{jk})
(\blambda_j - \blambda_k)\inv (\hat g_{jk} \blambda_k - \hat h_{jk}) 
+O(N\ohhh P^2),
\end{split}
\]
and therefore, by noting $E(\hat g_{kk} - \hat h_{kk})=0$, we obtain
\[
	E(\Delta\hat s_{kk} ) = {\rm bias}_k + O(N\ohhh P^2).
\]
Combining it with (\ref{eq:phikwdw}) and (\ref{eq:expphikwdw}),
and also noting $O(N\ohhh P^2 + N\ohh \ga P^2) = O(N\ohhh P^2 )$, we finally get
(\ref{eq:expphikfitbias}).

\subsection{Proof of Lemma~\ref{lemma:expcv}} \label{sec:proof-expcv}

For deriving an asymptotic expansion of  $\phi_k((1- \kappa)\inv(\bm{W}-\bm{W^*}),
\kappa\inv\bm{W}^* ) $ using Lemma~\ref{lemma:errorchange},  we replace $\bm{W}$ and $\bmt{W}$ in
(\ref{eq:phikwtw}), respectively,  by  $(1- \kappa)\inv(\bm{W}-\bm{W}^*)$ and $\kappa\inv \bm{W}^*$.
We define $\bmh{G}^*$, $\bmh{H}^*$, $\bmh{A}^*$, and $\bmh{\Lambda}^*$; they correspond to those with
hat but $\bm{W}$ is replaced by $(1- \kappa)\inv(\bm{W}-\bm{W}^*)$.  The key equations are $\bmh{G}^*
= (1 - \kappa)\inv \bm{X}^T (\bm{M}- \bm{M}^*) \bm{X}$, $\bmh{H}^* = (1 - \kappa)\inv \bm{X}^T
(\bm{W}- \bm{W}^*) \bm{X}$, $\bmh{A}^{*T}  \bmh{G}^* \bmh{A}^* = \bm{I}_P$, and $\bmh{A}^{*T} \bmh{H}^*
\bmh{A}^* = \bmh{\Lambda}^*$. The regularization terms are now represented by $\bm{g}^* = \bmh{A}^{*T}
\Delta\bm{G} \bmh{A}^*$ and $\bm{h}^* = \bmh{A}^{*T} \Delta\bm{H} \bmh{A}^*$. For $\bmt{W}=\kappa\inv
\bm{W}^*$, we put \[ \bm{S}^* = \kappa\inv\bmh{A}^{*T} \bm{X}^T (\bm{M}^* - \bm{W}^*) \bm{X}
\bmh{A}^*. \]   Then $\bm{g}$, $\bm{h}$, $\bmh{\Lambda}$ and $\bmt{S}$, respectively, in
Lemma~\ref{lemma:errorchange} are replaced by $\bm{g}^*$, $\bm{h}^*$, $\bmh{\Lambda}^*$ and
$\bm{S}^*$. Noticing $\bm{g}^*=O(\ga)$, $\bm{h}^*=O(\ga)$ and $\bm{S}^*=O(1)$, the asymptotic orders
of the terms in Lemma~\ref{lemma:errorchange} remain the same. (\ref{eq:phikwtw}) is now written
like
\begin{equation}\label{eq:phikcv1}
  \phi_k ((1- \kappa)\inv(\bm{W}-\bm{W^*}), \kappa\inv\bm{W}^* )
=s_{kk}^* + \sum_{i\neq k} 2(\hlambda_i^* - \hlambda_k^*)\inv s_{ik}^* (g_{ik}^* \hlambda_k^* - h_{ik}^*)
+\cdots,
\end{equation}
where terms are omitted for saving the space but all the terms in (\ref{eq:phikwtw}) will be calculated below.

In order to take $E^*(\cdot | \bm{W})$ of (\ref{eq:phikcv1}) later, we define
\[
	\Delta\bmh{G}^* = \bmh{G}^* - \bmh{G},\quad
	\Delta\bmh{H}^* = \bmh{H}^* - \bmh{H},\quad
	\bmh{g}^* = \bmh{A}^T \Delta\bmh{G}^*  \bmh{A},\quad
	\bmh{h}^* = \bmh{A}^T \Delta\bmh{H}^*  \bmh{A}
\]
for describing change from $\bmh{A}$ to $\bmh{A}^*$. Also define
\[
  \bmh{S}^* = \kappa\inv\bmh{A}^T \bm{X}^T (\bm{M}^* - \bm{W}^*) \bm{X} \bmh{A}
\]
for $\bmt{W}=\kappa\inv \bm{W}^*$. They are expressed in terms of
\[
\Delta\bmh{W}^* = \bm{W}^* - \kappa \bm{W},\quad
\Delta\bmh{M}^* = \bm{M}^* - \kappa \bm{M}
\]
as $ \Delta\bmh{G}^* = - (1-\kappa)\inv \bm{X}^T \Delta\bmh{M}^* \bm{X}$, $\Delta\bmh{H}^* = -
(1-\kappa)\inv \bm{X}^T \Delta\bmh{W}^* \bm{X}$, $ \bmh{g}^* = - (1-\kappa)\inv \bmh{Y}^T
\Delta\bmh{M}^* \bmh{Y}$, $\bmh{h}^* = - (1-\kappa)\inv \bmh{Y}^T \Delta\bmh{W}^* \bmh{Y}$, and $
\bmh{S}^* = \kappa\inv\bmh{Y}^{T} (\Delta\bmh{M}^* - \Delta\bmh{W}^*) \bmh{Y} + \bm{I}_P -
\bmh{\Lambda}$.
The elements of $ \bmh{g}^*$, $ \bmh{h}^*$ and $\bmh{S}^*$ are expressed using the notation of Section~\ref{sec:experror} as
\begin{equation}\label{eq:gijhatstar}
\begin{split}
\hat g_{ij}^* = & - (1-\kappa)\inv\sum_{l \ge m} {\cal G}[\bmh{Y}]_{lm}^{ij} \Delta \hat w_{lm}^*,\quad
  \hat h_{ij}^* = - (1-\kappa)\inv\sum_{l \ge m} {\cal H}[\bmh{Y}]_{lm}^{ij} \Delta \hat w_{lm}^*\\
  \hat s_{ij}^* = &  \kappa\inv\sum_{l \ge m} ({\cal G}[\bmh{Y}]_{lm}^{ij}- {\cal H}[\bmh{Y}]_{lm}^{ij}) \Delta \hat w_{lm}^*
  +\delta_{ij}(1 - \hlambda_i).
\end{split}
\end{equation}
It follows from the argument below that $\bmh{g}^*=O(N\ohh)$, $\bmh{h}^*=O(N\ohh)$, and $\bmh{S}^* = O(1)$.
Note that $\Delta \hat w_{lm}^* = w_{lm}^* - \kappa  w_{lm} = (z_{lm}^* - \kappa) w_{lm}$ from (\ref{eq:zijstar}), and so
$ E^*(\Delta \hat w_{lm}^* |\bm{W})=0$ and
\begin{equation}\label{eq:expvardhwlmstar}
  E^*(\Delta \hat w_{lm}^* \Delta \hat w_{l'm'}^* |\bm{W})=  \delta_{ll'} \delta_{mm'} \kappa (1 - \kappa)  w_{lm}^2.
\end{equation}
Both $\hat g_{ij}^*$ and $\hat h_{ij}^*$ are of the form $\sum_{l\ge m} f_{lm} \Delta \hat w_{lm}^* $ with
$f_{lm}=O(N\inv)$,  where the number of nonzero terms is $O(N)$ in the summation $\sum_{l \ge m}$.
Thus $V^*(\sum_{l\ge m} f_{lm} \Delta \hat w_{lm}^* |\bm{W}) = \sum_{l\ge m} f_{lm}^2 V^*(\Delta
\hat w_{lm}^*|\bm{W}) = \kappa  (1 - \kappa) \sum_{l\ge m} f_{lm}^2  w_{lm}^2 = O(\kappa  N^{-2}  N)
= O(\kappa N\inv)=O(N\ohhhh)$. Therefore $\sum_{l\ge m} f_{lm} \Delta \hat w_{lm}^* = O(N\ohh)$.

The change from $\bmh{A}$ to $\bmh{A}^*$ is expressed as \[   \bmh{A}^* = \bmh{A}(\bm{I}_P +
\bmh{C}^*),\quad   \bmh{\Lambda}^* = \bmh{\Lambda} + \Delta\bmh{\Lambda}^*. \]  The elements of
$\bmh{C}^* = (\hat c_{ij}^*)$ and $\Delta\bmh{\Lambda}^* =
\diag(\Delta\hlambda_1^*,\ldots,\Delta\hlambda_P^*)$ are given by Lemma~\ref{lemma:lambdachange}
with $\gamma = N\ohh$. In particular (\ref{eq:Dlambdai}), (\ref{eq:cii}) and (\ref{eq:cij}) becomes
\begin{equation} \label{eq:hatstarcij} \begin{split} \Delta\hlambda_i^* & = -(\hat
g_{ii}^* \hlambda_i - \hat h_{ii}^*) + O(N\ohhhh P),\\ \hat c_{ii}^* & = -\frac{1}{2} \hat g_{ii}^* +  O(N\ohhhh
P),\\ \hat c_{ij}^* & = (\hlambda_i - \hlambda_j)\inv (\hat g_{ij}^* \hlambda_j - \hat h_{ij}^*)+
O(N\ohhhh P). \end{split} \end{equation} Note that the roles of $\bm{g}$ and $\bm{h}$ in
Lemma~\ref{lemma:lambdachange} are now played by $\bmh{g}^*$ and $\bmh{h}^*$, and therefore
the expressions of $\bm{C}$ and $\Delta\bm{\Lambda}$ in Lemma~\ref{lemma:lambdachange} give those
of $\bmh{C}^*$ and $\Delta\bmh{\Lambda}^*$ .

Using the above results, $\bmh{\Lambda}^*$, $\bm{g}^*$, $\bm{h}^*$ and $\bm{S}^*$ in
(\ref{eq:phikcv1}) are expressed as follows. $\lambda_k^* = \hlambda_k + \Delta\hlambda_k^* =
\hlambda_k + O(N\ohh)$. $\bm{g}^* = (\bm{I}_P + \bmh{C}^*)^T \bm{g} (\bm{I}_P + \bmh{C}^*) = \bm{g}
+ \bm{g} \bmh{C}^* + \bmh{C}^{*T} \bm{g} + \bmh{C}^{*T} \bm{g} \bmh{C}^* = \bm{g} + O(N\ohh \ga P)$,
and similarly $\bm{h}^*=\bm{h}+O(N\ohh \ga P)$. $\bm{S}^* = (\bm{I}_P + \bmh{C}^*)^T \bmh{S}^*
(\bm{I}_P + \bmh{C}^*) = \bmh{S}^* + \bmh{S}^* \bmh{C}^* + \bmh{C}^{*T} \bmh{S}^* + \bmh{C}^{*T}
\bmh{S}^* \bmh{C}^* = \bmh{S}^* + O(N\ohh P)$. In particular the diagonal elements of $\bm{S}^*$ are
\begin{equation}\label{eq:skkstar}
\begin{split}
 s_{kk}^* &= \hat s_{kk}^* + 2 \sum_{j\neq k} \hat s_{jk}^* \hat c_{jk}^* + 2\hat s_{kk}^* \hat c_{kk}^* + O(N\ohhhh P^2)\\
 &=\hat s_{kk}^* + 2 \sum_{j\neq k} (\hlambda_j - \hlambda_k)\inv \hat s_{jk}^* (\hat g_{jk}^* \hlambda_k - \hat h_{jk}^*)
 -\hat s_{kk}^* \hat g_{kk}^* + O(N\ohhhh P^2).
 \end{split} 
\end{equation} 
Therefore, (\ref{eq:phikcv1}) is now expressed as
\begin{equation*}
\begin{split}
  \phi_k ((1- \kappa)\inv  &  (\bm{W}-\bm{W^*}), \kappa\inv\bm{W}^* ) = s_{kk}^* +
    \sum_{i\neq k} 2(\hlambda_i - \hlambda_k)\inv \hat s_{ik}^* (g_{ik} \hlambda_k - h_{ik})\\
& - \sum_{i\neq k} (\hlambda_i - \hlambda_k)^{-2} \Bigl\{
\hat s_{kk}^* (g_{ik} \hlambda_k - h_{ik})^2
+\hat s_{ik}^* (g_{ki}\hlambda_i - h_{ki})
(g_{kk} \hlambda_k - h_{kk})\Bigr\}\\
&+\sum_{i\neq k} \sum_{j\neq k} (\hlambda_i - \hlambda_k)\inv(\hlambda_j - \hlambda_k)\inv \Bigr\{
 2\hat s_{ik}^* (g_{ij} \hlambda_k - h_{ij})(g_{jk} \hlambda_k -h_{jk}) \\
& \hspace{5em} +\hat s_{ij}^* (g_{ik}\hlambda_k - h_{ik})(g_{jk} \hlambda_k - h_{jk})\Bigr\} + O(N\ohh\ga P^2 + \ga^3 P^3).
\end{split}
\end{equation*}
We take $E^*(\cdot|\bm{W})$ of the above formula. Noting $E^*(\hat s_{ij}^* | \bm{W}) = \delta_{ij} (1-\hlambda_i)$, we have
\begin{equation*}
  \phi_k^{\rm cv}(\bm{W}) = E^*(s_{kk}^* |\bm{W}) 
 - \sum_{i\neq k} (\hlambda_i - \hlambda_k)\inv
 (g_{ik} \hlambda_k - h_{ik})^2
 + O(N\ohh\ga P^2 + \ga^3 P^3).
\end{equation*}
Comparing this with (\ref{eq:phikfit}), and using (\ref{eq:skkstar}), we get
\begin{equation*} 
\begin{split}
\phi_k^{\rm fit}(\bm{W}) -   \phi_k^{\rm cv}(\bm{W}) &= 1 - \hlambda_k - E^*(s_{kk}^*|\bm{W})
 + O(N\ohh\ga P^2 + \ga^3 P^3)\\
&=E^*(\hat s_{kk}^* \hat g_{kk}^*|\bm{W}) 
-2 \sum_{j\neq k} (\hlambda_j - \hlambda_k)\inv E^*(\hat s_{jk}^* (\hat g_{jk}^* \hlambda_k - \hat h_{jk}^*) | \bm{W})\\
&\qquad + O(N\ohhhh P^2 + N\ohh\ga P^2 + \ga^3 P^3).
\end{split}
\end{equation*}
Finally, this gives (\ref{eq:fitcvexpand}) by using (\ref{eq:gijhatstar}) and (\ref{eq:expvardhwlmstar}).

For taking $E(\cdot)$ of (\ref{eq:fitcvexpand}), we now attempt to rewrite the terms. Notice $\bmh{Y}
= \bmb{Y} ( \bm{I}_P + \bmh{C} ) = \bmb{Y} + O(N\ohh P) = O(N\oh)$, we have $\hat y_{ik} \hat y_{jl}
= \bar y_{ik} \bar y_{jl} + O(N\ohhh P) = O(N\ohh)$. Also we have  $\hlambda_i = \blambda_i +
O(N\oh)$. (\ref{eq:fitcvexpand}) is now
 $ \phi_k^{\rm fit}(\bm{W}) -  \phi_k^{\rm cv}(\bm{W}) =
 \sum_{l\ge m} w_{lm}^2 \{
 -  ( {\cal G}[\bmb{Y}]_{lm}^{kk}  - {\cal H}[\bmb{Y}]_{lm}^{kk}) {\cal G}[\bmb{Y}]_{lm}^{kk} 
+\sum_{j\neq k} 2(\blambda_j - \blambda_k)\inv 
 ( {\cal G}[\bmb{Y}]_{lm}^{jk}  - {\cal H}[\bmb{Y}]_{lm}^{jk})
 ( {\cal G}[\bmb{Y}]_{lm}^{jk} \blambda_k  - {\cal H}[\bmb{Y}]_{lm}^{jk})
\} +O(N\ohhh P^2 + \ga^3 P^3) $. Comparing this with (\ref{eq:biask}), we obtain
(\ref{eq:expfitcv}) by using $E(w_{lm}^2) = \epsilon \bar w_{lm}^2$.


\section*{Acknowledgments}

I would like to thank Akifumi Okuno, Kazuki Fukui and Haruhisa Nagata for helpful discussions. I
appreciate helpful comments of reviewers for improving the manuscript.

\clearpage
\bibliographystyle{imsart-nameyear}
\bibliography{stat2015}

\begin{thebibliography}{27}

\bibitem[\protect\citeauthoryear{Belkin and Niyogi}{2003}]{belkin2003laplacian}
\begin{barticle}[author]
\bauthor{\bsnm{Belkin},~\bfnm{Mikhail}\binits{M.}} \AND
  \bauthor{\bsnm{Niyogi},~\bfnm{Partha}\binits{P.}}
(\byear{2003}).
\btitle{Laplacian eigenmaps for dimensionality reduction and data
  representation}.
\bjournal{Neural computation}
\bvolume{15}
\bpages{1373--1396}.
\end{barticle}
\endbibitem

\bibitem[\protect\citeauthoryear{Chung}{1997}]{chung1997spectral}
\begin{bbook}[author]
\bauthor{\bsnm{Chung},~\bfnm{Fan~RK}\binits{F.~R.}}
(\byear{1997}).
\btitle{Spectral graph theory}
\bvolume{92}.
\bpublisher{American Mathematical Soc.}
\end{bbook}
\endbibitem

\bibitem[\protect\citeauthoryear{Correa et~al.}{2010}]{correa2010multi}
\begin{barticle}[author]
\bauthor{\bsnm{Correa},~\bfnm{Nicolle~M}\binits{N.~M.}},
  \bauthor{\bsnm{Eichele},~\bfnm{Tom}\binits{T.}},
  \bauthor{\bsnm{Adal{\i}},~\bfnm{T{\"u}lay}\binits{T.}},
  \bauthor{\bsnm{Li},~\bfnm{Yi-Ou}\binits{Y.-O.}} \AND
  \bauthor{\bsnm{Calhoun},~\bfnm{Vince~D}\binits{V.~D.}}
(\byear{2010}).
\btitle{Multi-set canonical correlation analysis for the fusion of concurrent
  single trial ERP and functional MRI}.
\bjournal{Neuroimage}
\bvolume{50}
\bpages{1438--1445}.
\end{barticle}
\endbibitem

\bibitem[\protect\citeauthoryear{Daum{\'e}~III}{2009}]{daume2007frustratingly}
\begin{binproceedings}[author]
\bauthor{\bsnm{Daum{\'e}~III},~\bfnm{Hal}\binits{H.}}
(\byear{2009}).
\btitle{Frustratingly easy domain adaptation}.
In \bbooktitle{Proceedings of the 45th Annual Meeting of the Association of
  Computational Linguistics}
\bpages{256--263}.
\end{binproceedings}
\endbibitem

\bibitem[\protect\citeauthoryear{Golub, Heath and
  Wahba}{1979}]{golub1979generalized}
\begin{barticle}[author]
\bauthor{\bsnm{Golub},~\bfnm{Gene~H}\binits{G.~H.}},
  \bauthor{\bsnm{Heath},~\bfnm{Michael}\binits{M.}} \AND
  \bauthor{\bsnm{Wahba},~\bfnm{Grace}\binits{G.}}
(\byear{1979}).
\btitle{Generalized cross-validation as a method for choosing a good ridge
  parameter}.
\bjournal{Technometrics}
\bvolume{21}
\bpages{215--223}.
\end{barticle}
\endbibitem

\bibitem[\protect\citeauthoryear{Gong et~al.}{2014}]{gong2014multi}
\begin{barticle}[author]
\bauthor{\bsnm{Gong},~\bfnm{Yunchao}\binits{Y.}},
  \bauthor{\bsnm{Ke},~\bfnm{Qifa}\binits{Q.}},
  \bauthor{\bsnm{Isard},~\bfnm{Michael}\binits{M.}} \AND
  \bauthor{\bsnm{Lazebnik},~\bfnm{Svetlana}\binits{S.}}
(\byear{2014}).
\btitle{A multi-view embedding space for modeling internet images, tags, and
  their semantics}.
\bjournal{International Journal of Computer Vision}
\bvolume{106}
\bpages{210--233}.
\end{barticle}
\endbibitem

\bibitem[\protect\citeauthoryear{Hastie, Tibshirani and
  Friedman}{2009}]{hastie2009elements}
\begin{bbook}[author]
\bauthor{\bsnm{Hastie},~\bfnm{Trevor}\binits{T.}},
  \bauthor{\bsnm{Tibshirani},~\bfnm{Robert}\binits{R.}} \AND
  \bauthor{\bsnm{Friedman},~\bfnm{Jerome}\binits{J.}}
(\byear{2009}).
\btitle{The Elements of Statistical Learning. Data Mining, Inference, and
  Prediction, Second Edition}.
\bpublisher{Springer}.
\end{bbook}
\endbibitem

\bibitem[\protect\citeauthoryear{Hotelling}{1936}]{hotelling1936relations}
\begin{barticle}[author]
\bauthor{\bsnm{Hotelling},~\bfnm{Harold}\binits{H.}}
(\byear{1936}).
\btitle{Relations between two sets of variates}.
\bjournal{Biometrika}
\bvolume{28}
\bpages{321--377}.
\end{barticle}
\endbibitem

\bibitem[\protect\citeauthoryear{Huang et~al.}{2013}]{huang2013cross}
\begin{bincollection}[author]
\bauthor{\bsnm{Huang},~\bfnm{Zhiwu}\binits{Z.}},
  \bauthor{\bsnm{Shan},~\bfnm{Shiguang}\binits{S.}},
  \bauthor{\bsnm{Zhang},~\bfnm{Haihong}\binits{H.}},
  \bauthor{\bsnm{Lao},~\bfnm{Shihong}\binits{S.}} \AND
  \bauthor{\bsnm{Chen},~\bfnm{Xilin}\binits{X.}}
(\byear{2013}).
\btitle{Cross-view graph embedding}.
In \bbooktitle{Computer Vision -- ACCV 2012, Revised Selected Papers, Part II}
\bpages{770--781}.
\bpublisher{Springer}.
\end{bincollection}
\endbibitem

\bibitem[\protect\citeauthoryear{Kan et~al.}{2012}]{kan2012multi}
\begin{bincollection}[author]
\bauthor{\bsnm{Kan},~\bfnm{Meina}\binits{M.}},
  \bauthor{\bsnm{Shan},~\bfnm{Shiguang}\binits{S.}},
  \bauthor{\bsnm{Zhang},~\bfnm{Haihong}\binits{H.}},
  \bauthor{\bsnm{Lao},~\bfnm{Shihong}\binits{S.}} \AND
  \bauthor{\bsnm{Chen},~\bfnm{Xilin}\binits{X.}}
(\byear{2012}).
\btitle{Multi-view discriminant analysis}.
In \bbooktitle{Computer Vision -- ECCV 2012, Proceedings, Part I}
\bpages{808--821}.
\bpublisher{Springer}.
\end{bincollection}
\endbibitem

\bibitem[\protect\citeauthoryear{Kettenring}{1971}]{kettenring1971canonical}
\begin{barticle}[author]
\bauthor{\bsnm{Kettenring},~\bfnm{Jon~R}\binits{J.~R.}}
(\byear{1971}).
\btitle{Canonical analysis of several sets of variables}.
\bjournal{Biometrika}
\bvolume{58}
\bpages{433--451}.
\end{barticle}
\endbibitem

\bibitem[\protect\citeauthoryear{Kohonen}{1972}]{kohonen1972correlation}
\begin{barticle}[author]
\bauthor{\bsnm{Kohonen},~\bfnm{Teuvo}\binits{T.}}
(\byear{1972}).
\btitle{Correlation matrix memories}.
\bjournal{Computers, IEEE Transactions on}
\bvolume{100}
\bpages{353--359}.
\end{barticle}
\endbibitem

\bibitem[\protect\citeauthoryear{LeCun et~al.}{1998}]{lecun1998gradient}
\begin{barticle}[author]
\bauthor{\bsnm{LeCun},~\bfnm{Yann}\binits{Y.}},
  \bauthor{\bsnm{Bottou},~\bfnm{L{\'e}on}\binits{L.}},
  \bauthor{\bsnm{Bengio},~\bfnm{Yoshua}\binits{Y.}} \AND
  \bauthor{\bsnm{Haffner},~\bfnm{Patrick}\binits{P.}}
(\byear{1998}).
\btitle{Gradient-based learning applied to document recognition}.
\bjournal{Proceedings of the IEEE}
\bvolume{86}
\bpages{2278--2324}.
\end{barticle}
\endbibitem

\bibitem[\protect\citeauthoryear{Mikolov et~al.}{2013}]{mikolov2013distributed}
\begin{binproceedings}[author]
\bauthor{\bsnm{Mikolov},~\bfnm{Tomas}\binits{T.}},
  \bauthor{\bsnm{Sutskever},~\bfnm{Ilya}\binits{I.}},
  \bauthor{\bsnm{Chen},~\bfnm{Kai}\binits{K.}},
  \bauthor{\bsnm{Corrado},~\bfnm{Greg~S}\binits{G.~S.}} \AND
  \bauthor{\bsnm{Dean},~\bfnm{Jeff}\binits{J.}}
(\byear{2013}).
\btitle{Distributed representations of words and phrases and their
  compositionality}.
In \bbooktitle{Advances in Neural Information Processing Systems 26}
\bpages{3111--3119}.
\end{binproceedings}
\endbibitem

\bibitem[\protect\citeauthoryear{Nakano}{1972}]{nakano1972associatron}
\begin{barticle}[author]
\bauthor{\bsnm{Nakano},~\bfnm{Kaoru}\binits{K.}}
(\byear{1972}).
\btitle{Associatron -- A model of associative memory}.
\bjournal{Systems, Man and Cybernetics, IEEE Transactions on}
\bvolume{3}
\bpages{380--388}.
\end{barticle}
\endbibitem

\bibitem[\protect\citeauthoryear{Nori, Bollegala and
  Kashima}{2012}]{nori2012multinomial}
\begin{binproceedings}[author]
\bauthor{\bsnm{Nori},~\bfnm{Nozomi}\binits{N.}},
  \bauthor{\bsnm{Bollegala},~\bfnm{Danushka}\binits{D.}} \AND
  \bauthor{\bsnm{Kashima},~\bfnm{Hisashi}\binits{H.}}
(\byear{2012}).
\btitle{Multinomial Relation Prediction in Social Data: A Dimension Reduction
  Approach}.
In \bbooktitle{Proceedings of the Twenty-Sixth AAAI Conference on Artificial
  Intelligence (AAAI-12)}
\bpages{115--121}.
\end{binproceedings}
\endbibitem

\bibitem[\protect\citeauthoryear{Olshausen and
  Field}{2004}]{olshausen2004sparse}
\begin{barticle}[author]
\bauthor{\bsnm{Olshausen},~\bfnm{Bruno~A}\binits{B.~A.}} \AND
  \bauthor{\bsnm{Field},~\bfnm{David~J}\binits{D.~J.}}
(\byear{2004}).
\btitle{Sparse coding of sensory inputs}.
\bjournal{Current opinion in neurobiology}
\bvolume{14}
\bpages{481--487}.
\end{barticle}
\endbibitem

\bibitem[\protect\citeauthoryear{Shi et~al.}{2013}]{shi2013transfer}
\begin{barticle}[author]
\bauthor{\bsnm{Shi},~\bfnm{Xiaoxiao}\binits{X.}},
  \bauthor{\bsnm{Liu},~\bfnm{Qi}\binits{Q.}},
  \bauthor{\bsnm{Fan},~\bfnm{Wei}\binits{W.}} \AND
  \bauthor{\bsnm{Yu},~\bfnm{Philip~S}\binits{P.~S.}}
(\byear{2013}).
\btitle{Transfer across completely different feature spaces via spectral
  embedding}.
\bjournal{Knowledge and Data Engineering, IEEE Transactions on}
\bvolume{25}
\bpages{906--918}.
\end{barticle}
\endbibitem

\bibitem[\protect\citeauthoryear{Stone}{1977}]{stone1977asymptotic}
\begin{barticle}[author]
\bauthor{\bsnm{Stone},~\bfnm{Mervyn}\binits{M.}}
(\byear{1977}).
\btitle{An asymptotic equivalence of choice of model by cross-validation and
  Akaike's criterion}.
\bjournal{Journal of the Royal Statistical Society. Series B (Methodological)}
\bpages{44--47}.
\end{barticle}
\endbibitem

\bibitem[\protect\citeauthoryear{Takane, Hwang and
  Abdi}{2008}]{takane2008regularized}
\begin{barticle}[author]
\bauthor{\bsnm{Takane},~\bfnm{Yoshio}\binits{Y.}},
  \bauthor{\bsnm{Hwang},~\bfnm{Heungsun}\binits{H.}} \AND
  \bauthor{\bsnm{Abdi},~\bfnm{Herv{\'e}}\binits{H.}}
(\byear{2008}).
\btitle{Regularized multiple-set canonical correlation analysis}.
\bjournal{Psychometrika}
\bvolume{73}
\bpages{753--775}.
\end{barticle}
\endbibitem

\bibitem[\protect\citeauthoryear{Tenenhaus and
  Tenenhaus}{2011}]{tenenhaus2011regularized}
\begin{barticle}[author]
\bauthor{\bsnm{Tenenhaus},~\bfnm{Arthur}\binits{A.}} \AND
  \bauthor{\bsnm{Tenenhaus},~\bfnm{Michel}\binits{M.}}
(\byear{2011}).
\btitle{Regularized generalized canonical correlation analysis}.
\bjournal{Psychometrika}
\bvolume{76}
\bpages{257--284}.
\end{barticle}
\endbibitem

\bibitem[\protect\citeauthoryear{Van Der~Aa, Ter~Morsche and
  Mattheij}{2007}]{van2007computation}
\begin{barticle}[author]
\bauthor{\bsnm{Van Der~Aa},~\bfnm{NP}\binits{N.}},
  \bauthor{\bsnm{Ter~Morsche},~\bfnm{HG}\binits{H.}} \AND
  \bauthor{\bsnm{Mattheij},~\bfnm{RRM}\binits{R.}}
(\byear{2007}).
\btitle{Computation of eigenvalue and eigenvector derivatives for a general
  complex-valued eigensystem}.
\bjournal{Electronic Journal of Linear Algebra}
\bvolume{16}
\bpages{300--314}.
\end{barticle}
\endbibitem

\bibitem[\protect\citeauthoryear{Von~Luxburg}{2007}]{von2007tutorial}
\begin{barticle}[author]
\bauthor{\bsnm{Von~Luxburg},~\bfnm{Ulrike}\binits{U.}}
(\byear{2007}).
\btitle{A tutorial on spectral clustering}.
\bjournal{Statistics and computing}
\bvolume{17}
\bpages{395--416}.
\end{barticle}
\endbibitem

\bibitem[\protect\citeauthoryear{Wang et~al.}{2013}]{wang2013learning}
\begin{binproceedings}[author]
\bauthor{\bsnm{Wang},~\bfnm{Kaiye}\binits{K.}},
  \bauthor{\bsnm{He},~\bfnm{Ran}\binits{R.}},
  \bauthor{\bsnm{Wang},~\bfnm{Wei}\binits{W.}},
  \bauthor{\bsnm{Wang},~\bfnm{Liang}\binits{L.}} \AND
  \bauthor{\bsnm{Tan},~\bfnm{Tieniu}\binits{T.}}
(\byear{2013}).
\btitle{Learning coupled feature spaces for cross-modal matching}.
In \bbooktitle{Computer Vision (ICCV), 2013 IEEE International Conference on}
\bpages{2088--2095}.
\bpublisher{IEEE}.
\end{binproceedings}
\endbibitem

\bibitem[\protect\citeauthoryear{Yan et~al.}{2007}]{yan2007graph}
\begin{barticle}[author]
\bauthor{\bsnm{Yan},~\bfnm{Shuicheng}\binits{S.}},
  \bauthor{\bsnm{Xu},~\bfnm{Dong}\binits{D.}},
  \bauthor{\bsnm{Zhang},~\bfnm{Benyu}\binits{B.}},
  \bauthor{\bsnm{Zhang},~\bfnm{Hong-Jiang}\binits{H.-J.}},
  \bauthor{\bsnm{Yang},~\bfnm{Qiang}\binits{Q.}} \AND
  \bauthor{\bsnm{Lin},~\bfnm{Stephen}\binits{S.}}
(\byear{2007}).
\btitle{Graph embedding and extensions: a general framework for dimensionality
  reduction}.
\bjournal{Pattern Analysis and Machine Intelligence, IEEE Transactions on}
\bvolume{29}
\bpages{40--51}.
\end{barticle}
\endbibitem

\bibitem[\protect\citeauthoryear{Yuan and Sun}{2014}]{yuan2014graph}
\begin{barticle}[author]
\bauthor{\bsnm{Yuan},~\bfnm{Yun-Hao}\binits{Y.-H.}} \AND
  \bauthor{\bsnm{Sun},~\bfnm{Quan-Sen}\binits{Q.-S.}}
(\byear{2014}).
\btitle{Graph regularized multiset canonical correlations with applications to
  joint feature extraction}.
\bjournal{Pattern Recognition}
\bvolume{47}
\bpages{3907--3919}.
\end{barticle}
\endbibitem

\bibitem[\protect\citeauthoryear{Yuan et~al.}{2011}]{yuan2011novel}
\begin{barticle}[author]
\bauthor{\bsnm{Yuan},~\bfnm{Yun-Hao}\binits{Y.-H.}},
  \bauthor{\bsnm{Sun},~\bfnm{Quan-Sen}\binits{Q.-S.}},
  \bauthor{\bsnm{Zhou},~\bfnm{Qiang}\binits{Q.}} \AND
  \bauthor{\bsnm{Xia},~\bfnm{De-Shen}\binits{D.-S.}}
(\byear{2011}).
\btitle{A novel multiset integrated canonical correlation analysis framework
  and its application in feature fusion}.
\bjournal{Pattern Recognition}
\bvolume{44}
\bpages{1031--1040}.
\end{barticle}
\endbibitem

\end{thebibliography}

\clearpage
\section*{Tables and Figures}
\begin{itemize}
	\item Table~\ref{tab:notations} : Section~\ref{sec:matching}
	\item Table~\ref{tab:simulation-parameters} : Section~\ref{sec:simulation}
	\item Fig.~\ref{fig:mnist-map12} : Section~\ref{sec:example}
	\item Fig.~\ref{fig:mnist-error} : Section~\ref{sec:example}
	\item Fig.~\ref{fig:weight-a} : Section~\ref{sec:simulation}
	\item Fig.~\ref{fig:weight-b} : Section~\ref{sec:simulation}
	\item Fig.~\ref{fig:exp1} : Section~\ref{sec:simulation}
	\item Fig.~\ref{fig:exp5} : Section~\ref{sec:simulation}
	\item Fig.~\ref{fig:boxplots-w} : Section~\ref{sec:simulation}
	\item Fig.~\ref{fig:boxplots-u} : Section~\ref{sec:simulation}
\end{itemize}


\clearpage
\def\arraystretch{1.1}

\begin{table}
\caption{Notations of mathematical symbols}
\label{tab:notations}
\centering
  \begin{tabular}{llll}
  \hline
  Symbol & Description & Section & Equation\\
  \hline
  $N$ & the number of data vectors     & \ref{sec:intro} \\
  $P$ & the dimensions of data vector  & \ref{sec:intro} \\
  $K$ & the dimensions of transformed vector  & \ref{sec:intro} \\
  $\bm{x}_i$, $\bm{X}$ & data vector, data matrix & \ref{sec:intro} \\
  $\bm{y}_i$, $\bm{Y}$ & transformed vector, transformed matrix & \ref{sec:intro} \\
  $\bm{A}$ & linear transformation matrix & \ref{sec:intro} \\
  $w_{ij}$, $\bm{W}$ & matching weights, matching weight matrix & \ref{sec:intro} \\
  $\bar{w}_{ij}$, $\bmb{W}$ & true matching weights, true matching weight matrix &
   \ref{sec:intro}& (\ref{eq:zij}) \\
  $\epsilon$ & link sampling probability & \ref{sec:intro} & (\ref{eq:zij})\\
  $m_i$, $\bm{M}$ & row sums of matching weights & \ref{sec:matching} \\
  $\bm{y}^k$ & $k$-th component of transformed matrix & \ref{sec:matching} \\
  $\phi_k$ & matching error of $k$-th component & \ref{sec:matching}, \ref{sec:fiterror} \\
  $\bmh{A} $, $\bmh{Y} $, $\bmh{G} $, $\bmh{H} $ & matrices for the optimization 
  without regularization & \ref{sec:spectral}, \ref{sec:lambdachange} \\
  $\Delta\bm{G} $, $\Delta\bm{H} $ & regularization matrices $\Delta \bm{G}=\ga_M \bm{L}_M$, $\Delta
  \bm{H}=\ga_W \bm{L}_W$ & \ref{sec:regularization} \\
  $\bm{G} $, $\bm{H} $ & matrices for the optimization with regularization 
  & \ref{sec:regularization} \\
  $\bm{u}_k $, $\lambda_k$ & eigenvector, eigenvalue & \ref{sec:regularization} \\
  $K^+$ & the number of positive eigenvalues & \ref{sec:regularization} \\
  $b_k$ & rescaling factor of $k$-th component & \ref{sec:regularization} \\
  $w_{ij}^*$, $\bm{W}^*$ & matching weights for cross-validation
  & \ref{sec:cverror} & (\ref{eq:zijstar}) \\
  $\kappa$ & link resampling probability & \ref{sec:cverror} & (\ref{eq:zijstar}) \\
  $\xi$ & node sampling probability & \ref{sec:node-sampling} & (\ref{eq:zij-node}) \\
  $\nu$ & node resampling probability & \ref{sec:node-sampling} & (\ref{eq:zijstar-node}) \\
  $\gamma$ & regularization parameter  & \ref{sec:main} & (A\ref{asm:gamma}) \\
  $J$ & maximum value of $k$ for evaluating $\phi_k$ & \ref{sec:main}, \ref{sec:technotes}
   & (A\ref{asm:eigen}) \\
  $\hlambda_i$, $\bmh{\Lambda} $ & eigenvalues without regularization  &\ref{sec:lambdachange}\\
  $\bm{g}$, $\bm{h}$ & matrices representing regularization &\ref{sec:lambdachange}\\
  $\Delta \lambda_i$, $\Delta \bm{\Lambda} $ & changes in eigenvalues due to regularization 
  &\ref{sec:lambdachange}\\
  $c_{ij}$, $\bm{C} $ & changes in $\bm{A}$ due to regularization &\ref{sec:lambdachange}\\
  $\delta \lambda_i$, $\delta c_{ij}$ & higher order terms of $\Delta \lambda_i$, $c_{ij}$ 
  &\ref{sec:lambdachange}\\
  $\tilde s_{ij}$, $\bmt{S}$ & defined from $\bmt{W}$ and $\bmh{Y}$ &\ref{sec:lambdachange}\\
  $\bmb{A} $, $\bmb{Y} $, $\bmb{G} $, $\bmb{H} $ & matrices for the optimization with respect to
   $\epsilon \bmb{W}$ & \ref{sec:experror} \\
  $\blambda_i$, $\bmb{\Lambda} $ & eigenvalues with respect to $\epsilon \bmb{W}$ 
  & \ref{sec:experror} \\
  $\Delta \hat w_{ij}$, $\Delta \bmh{W} $, $\Delta \bmh{M}$ 
  & $\Delta \bmh{W}   = \bm{W} - \epsilon\bmb{W}$,
  $\Delta \bmh{M}   = \bm{M} - \epsilon\bmb{M}$ & \ref{sec:experror} \\
  $\Delta\bmh{G} $, $\Delta\bmh{H} $, $\bmh{g} $, $\bmh{h} $ 
  & matrices representing  the change $\Delta \bmh{W}$ & \ref{sec:experror} \\
  ${\cal G}[\bmb{Y}]_{lm}^{ij}$, ${\cal H}[\bmb{Y}]_{lm}^{ij}$ & coefficients for representing 
  $\hat g_{ij}$, $\hat h_{ij}$ & \ref{sec:experror} & (\ref{eq:gyhy}) \\
  $\hat c_{ij}$, $\bmh{C}$, $\Delta \hlambda_i$, $\Delta \bmh{\Lambda}$ 
  & for representing change from $\bmb{A}$ to $\bmh{A}$ & \ref{sec:proof-exptrue} \\
  $\Delta \hat s_{ij}$, $\Delta \bmh{S}$
  & defined from $\Delta\bmh{W}$ and $\bmh{Y}$ & \ref{sec:proof-exptrue} \\
  $\bmb{g} $, $\bmb{h} $ 
  & representing regularization with respect to $\epsilon \bmb{W}$  & \ref{sec:proof-exptrue} \\
  $\bmh{G}^*$, $\bmh{H}^*$, $\bmh{A}^*$, $\bmh{\Lambda}^*$, $\bm{g}^*$, $\bm{h}^*$ & 
  $\bmh{G}$, $\bmh{H}$, $\bmh{A}$, $\bmh{\Lambda}$, $\bm{g}$, $\bm{h}$
   for training dataset in cross-validation  & \ref{sec:proof-expcv}\\
  $s_{ij}^*$,  $\bm{S}^*$, $\hat s_{ij}^*$,  $\bmh{S}^*$  &  for test dataset in cross-validation
  & \ref{sec:proof-expcv}\\
  $\Delta \hat w_{ij}^*$, $\Delta \bmh{W}^* $, $\Delta \bmh{M}^*$ 
  & $\Delta\bmh{W}^* = \bm{W}^* - \kappa \bm{W}$,
  $\Delta\bmh{M}^* = \bm{M}^* - \kappa \bm{M}$ & \ref{sec:proof-expcv} \\
  $\Delta\bmh{G}^* $, $\Delta\bmh{H}^* $, $\bmh{g}^* $, $\bmh{h}^* $ 
  & matrices representing  the change $\Delta \bmh{W}^*$ & \ref{sec:proof-expcv} \\
  $\hat c_{ij}^*$, $\bmh{C}^*$, $\Delta \hlambda_i^*$, $\Delta \bmh{\Lambda}^*$ 
  & for representing change from $\bmh{A}$ to $\bmh{A}^*$ & \ref{sec:proof-expcv} 
   & (\ref{eq:hatstarcij})\\
  \hline
  \end{tabular}
\end{table}

\begin{table}
\caption{Parameters of data generation for experiments}
\label{tab:simulation-parameters}
\centering
  \begin{tabular}{ccccccccc}
    \hline
    Exp.  & & Sampling & $\bmb{W} $  & $\epsilon$, $\xi^2$ \\
    \hline
    1  &  & link & $\bmb{W}_A$ &  0.02 \\
    2  &  & link & $\bmb{W}_A$ &  0.04 \\
    3  &  & link & $\bmb{W}_A$ &  0.08 \\
    4  &  & link & $\bmb{W}_B$ &  0.02 \\
    5  &  & link & $\bmb{W}_B$ &  0.04 \\
    6  &  & link & $\bmb{W}_B$ &  0.08 \\
    7  &  & node & $\bmb{W}_A$ &  0.02 \\
    8  &  & node & $\bmb{W}_A$ &  0.04 \\
    9  &  & node & $\bmb{W}_A$ &  0.08 \\
    10  &  & node & $\bmb{W}_B$ &  0.02 \\
    11  &  & node & $\bmb{W}_B$ &  0.04 \\
    12  &  & node & $\bmb{W}_B$ &  0.08 \\
    \hline
  \end{tabular}
\end{table}


\clearpage

\begin{figure}
 \begin{center}
  \myfigbig{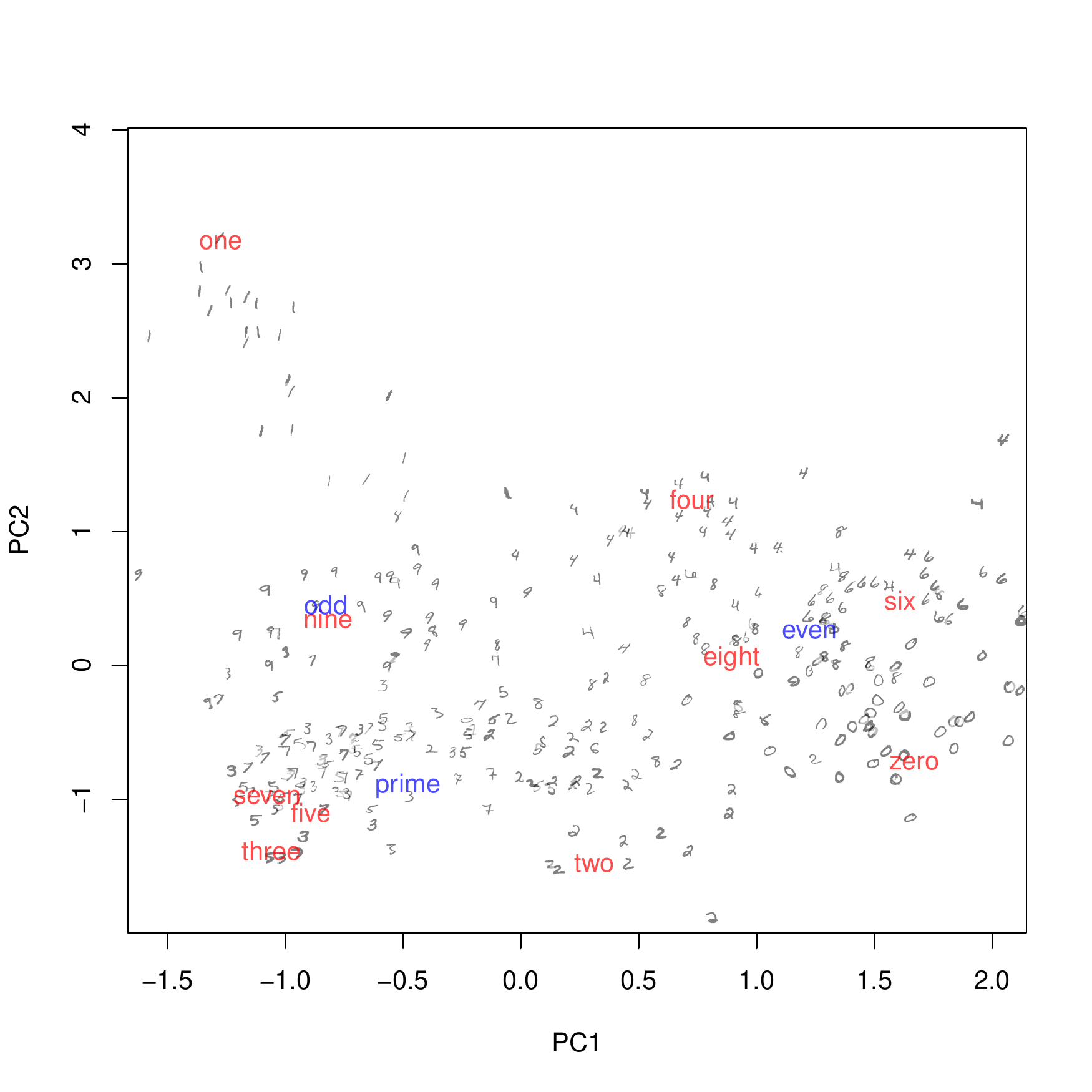}
 \end{center}
 \caption{Scatter plot of the first two components of CDMCA for the MNIST handwritten digits.
Shown are 300 digit images randomly sampled from the 60000 training images ($d=1$). Digit labels
($d=2$) and attribute labels ($d=3$) are also shown in the same ``common space'' of $K=2$.}
 \label{fig:mnist-map12}
\end{figure}  

\begin{figure}
 \begin{center}

  \subfigure[classification errors ($K=9$) ]{
  \myfig{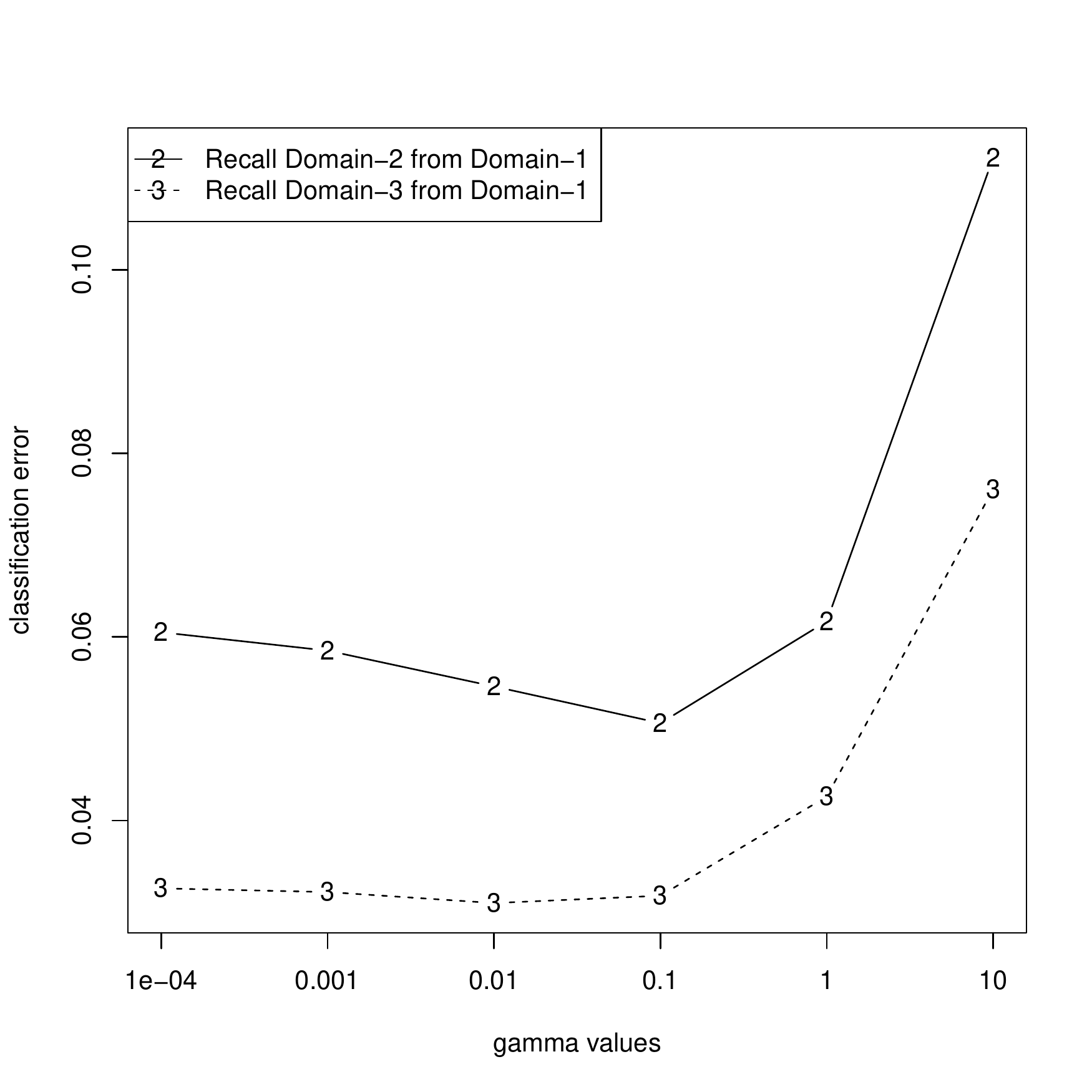}
   \label{fig:mnist-classification-error}
  }
  \subfigure[matching errors ($K=9$) ]{
  \myfig{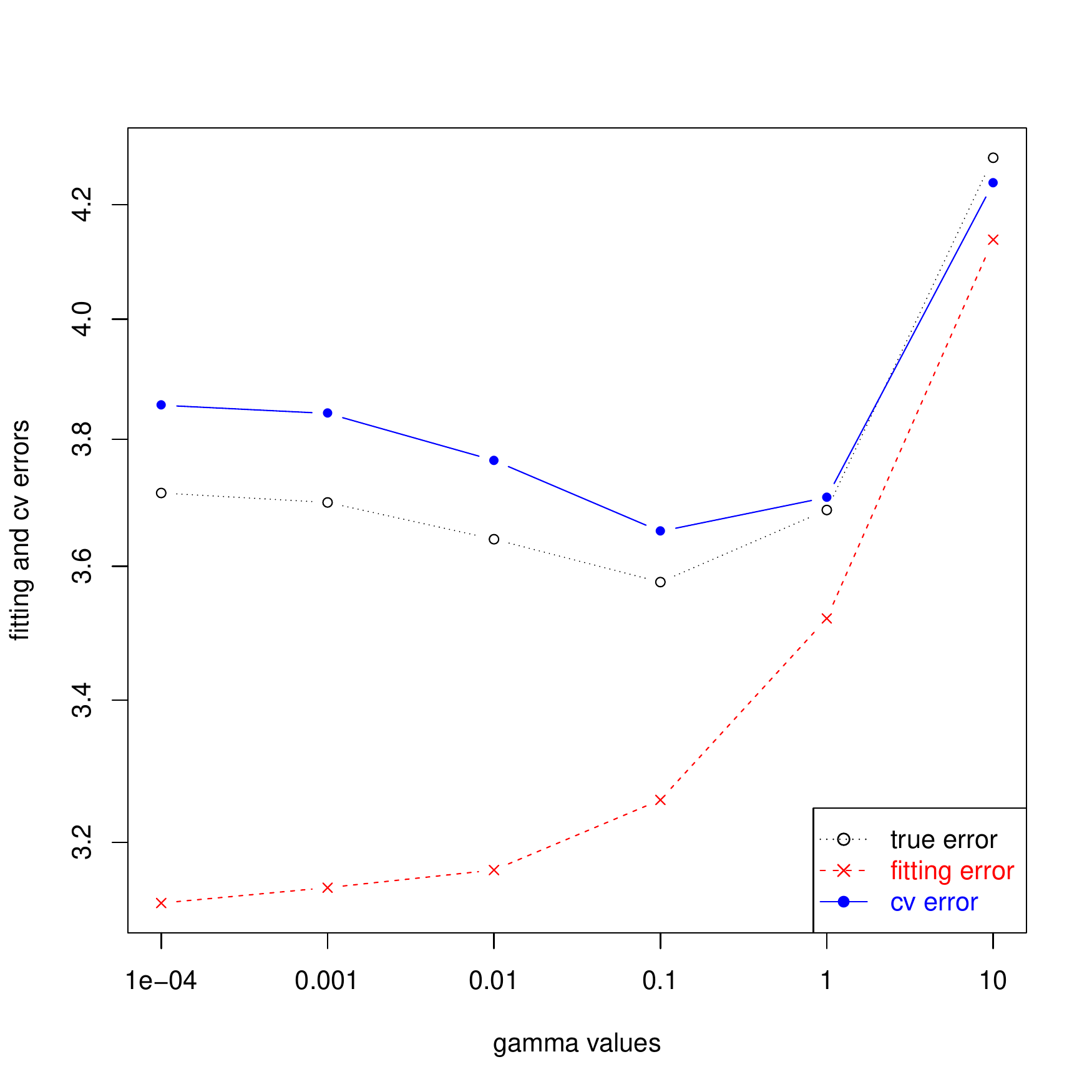}
   \label{fig:mnist-matching-error}
  }
 \end{center}
 \caption{CDMCA results for $\gamma_M=10^{-4}, 10^{-3}, 10^{-2}, 10^{-1}, 10^0, 10^{1}$.
 (a)~Classification error of predicting digit labels ($d=2$) from the 10000 test images,
 and that for predicting attribute labels ($d=3$). (b)~The true matching error of 10000 test
 images, and the fitting error and the cross-validation error computed from the 60000 training
 images.} \label{fig:mnist-error}
\end{figure}






\begin{figure}
 \begin{center}
  \subfigure[$\bmb{W}_A$]{
  \myfigc{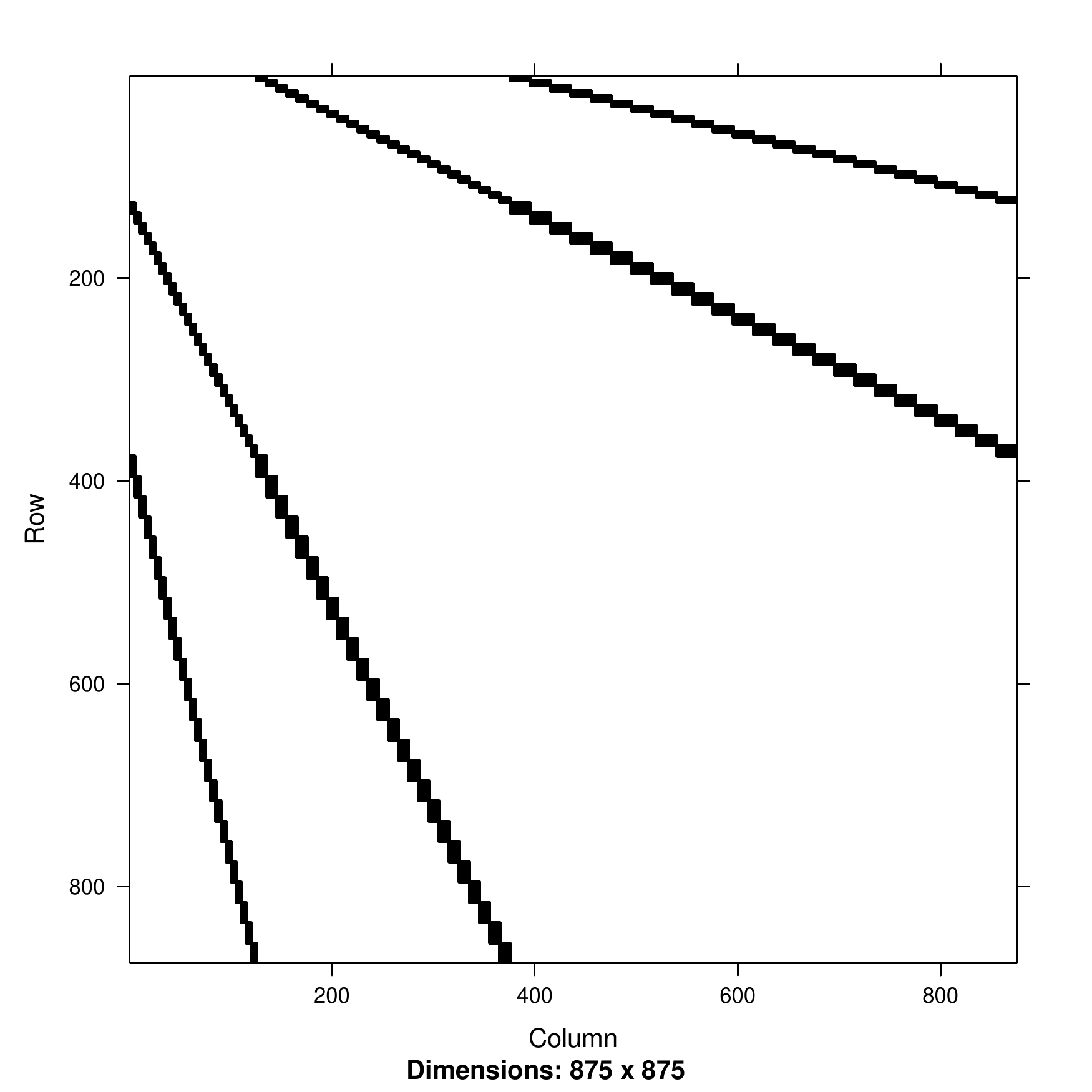}
  \label{fig:wa}
  }
  \subfigure[$\bm{W}$ of Experiment 1]{
  \myfigc{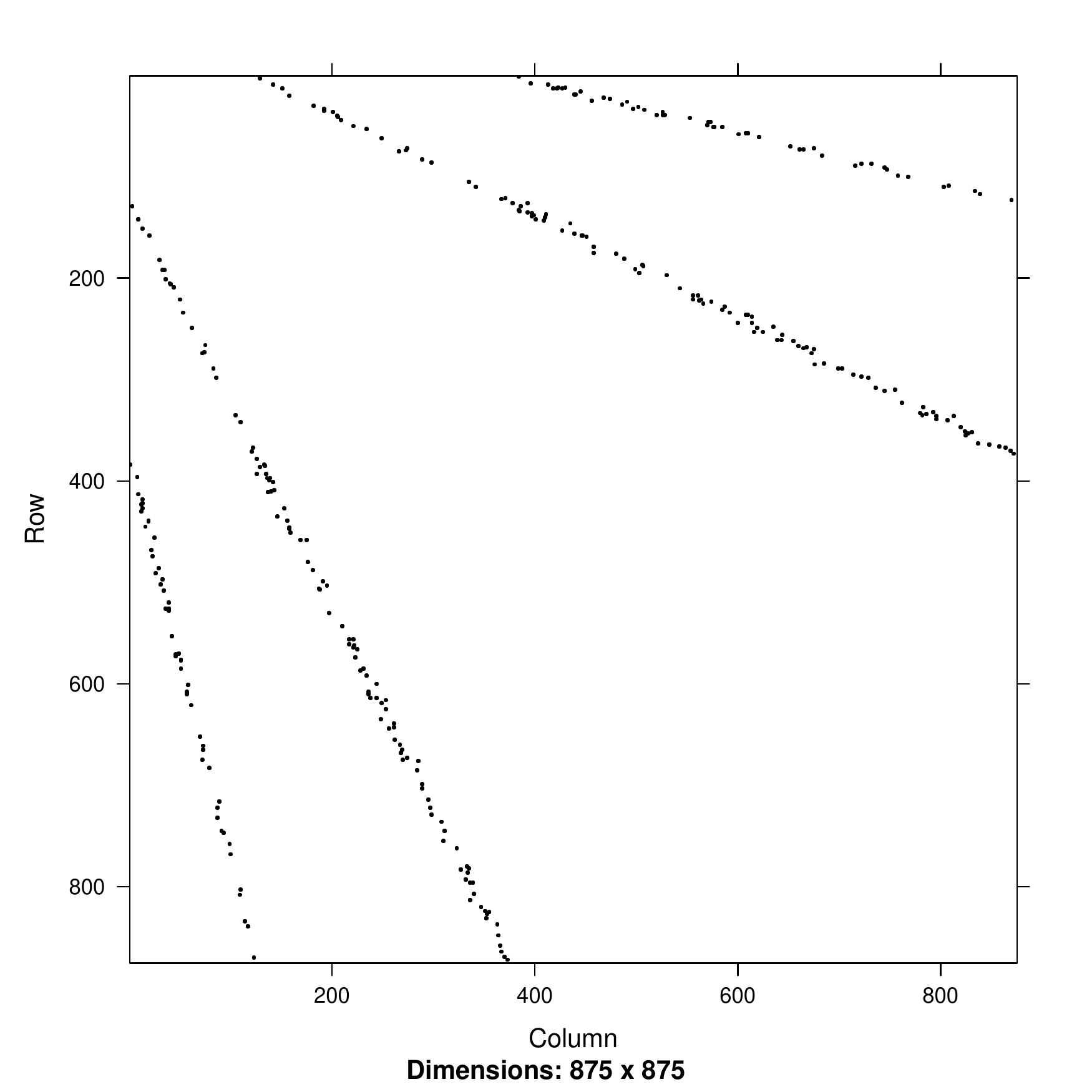}
  \label{fig:w1}
  }
  \subfigure[$\bm{W}$ of Experiment 9]{
  \myfigc{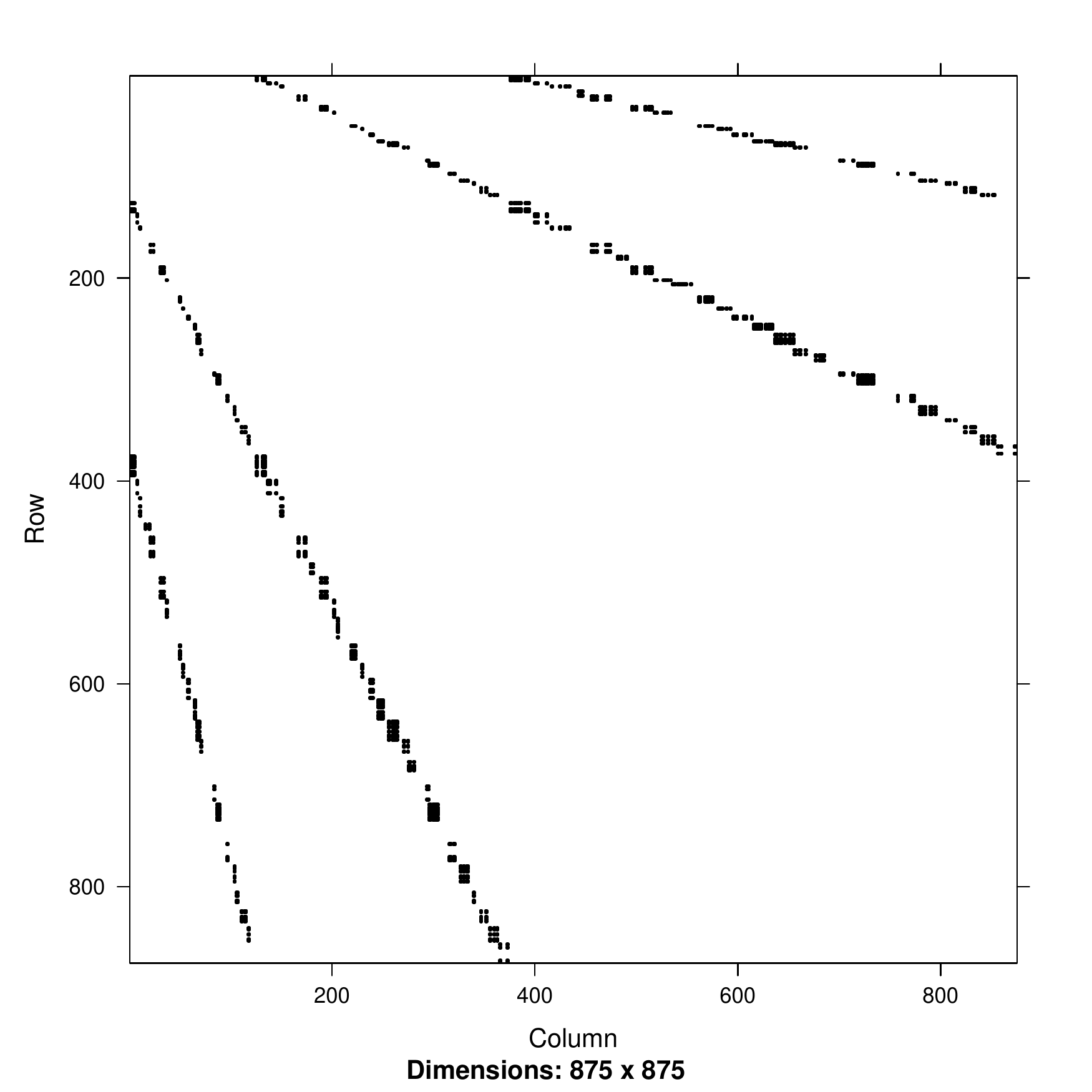}
  \label{fig:w9}
  }
 \end{center}
 \caption{Regular matching weights. (a)~A true matching weight of a regular structure.
  (b)~Link sampling from $\bmb{W}_A$. (c)~Node sampling from $\bmb{W}_A$.} \label{fig:weight-a}
\end{figure}  

\begin{figure}
 \begin{center}
  \subfigure[$\bmb{W}_B$]{
  \myfigc{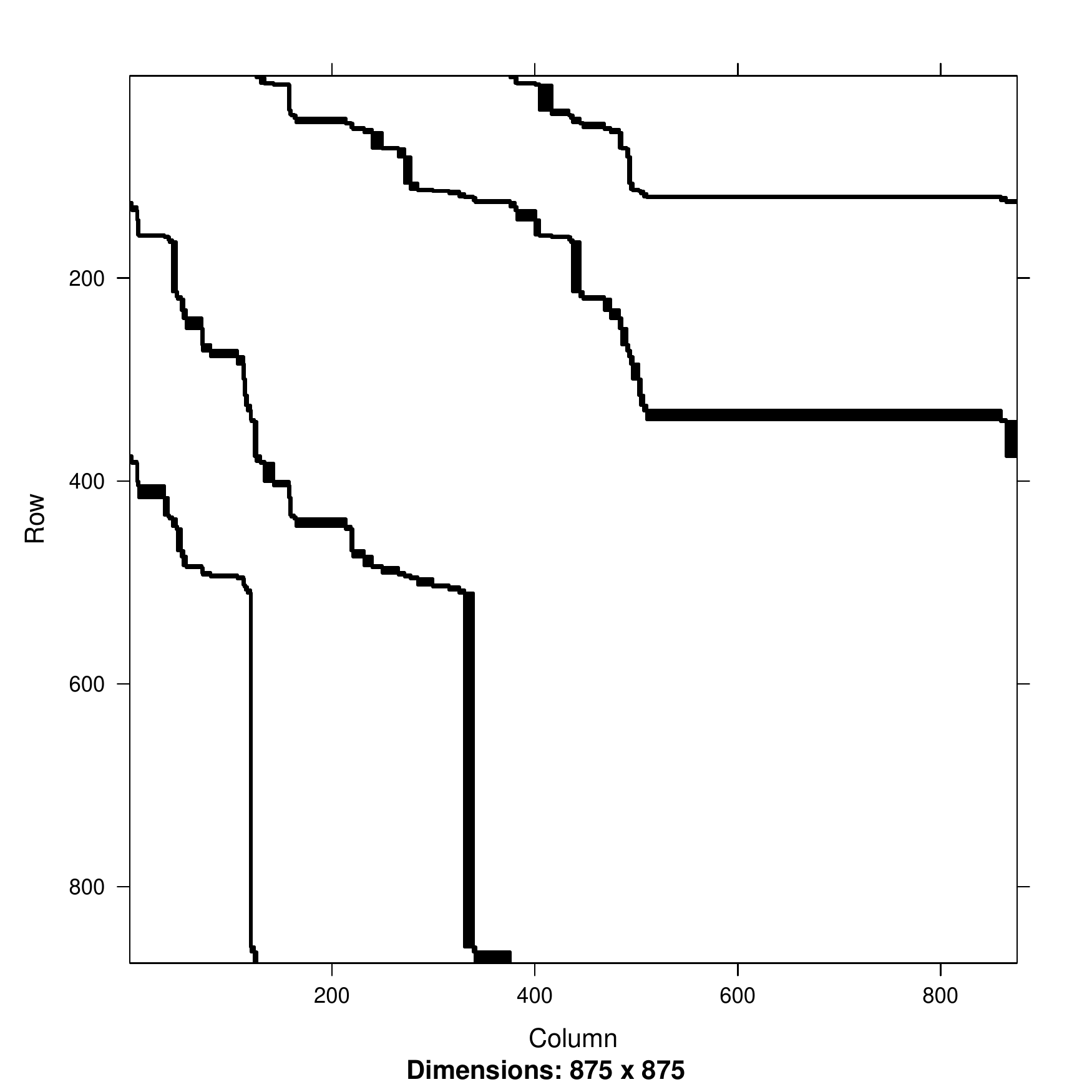}
  \label{fig:wb}
  }
  \subfigure[$\bm{W}$ of Experiment 5]{
  \myfigc{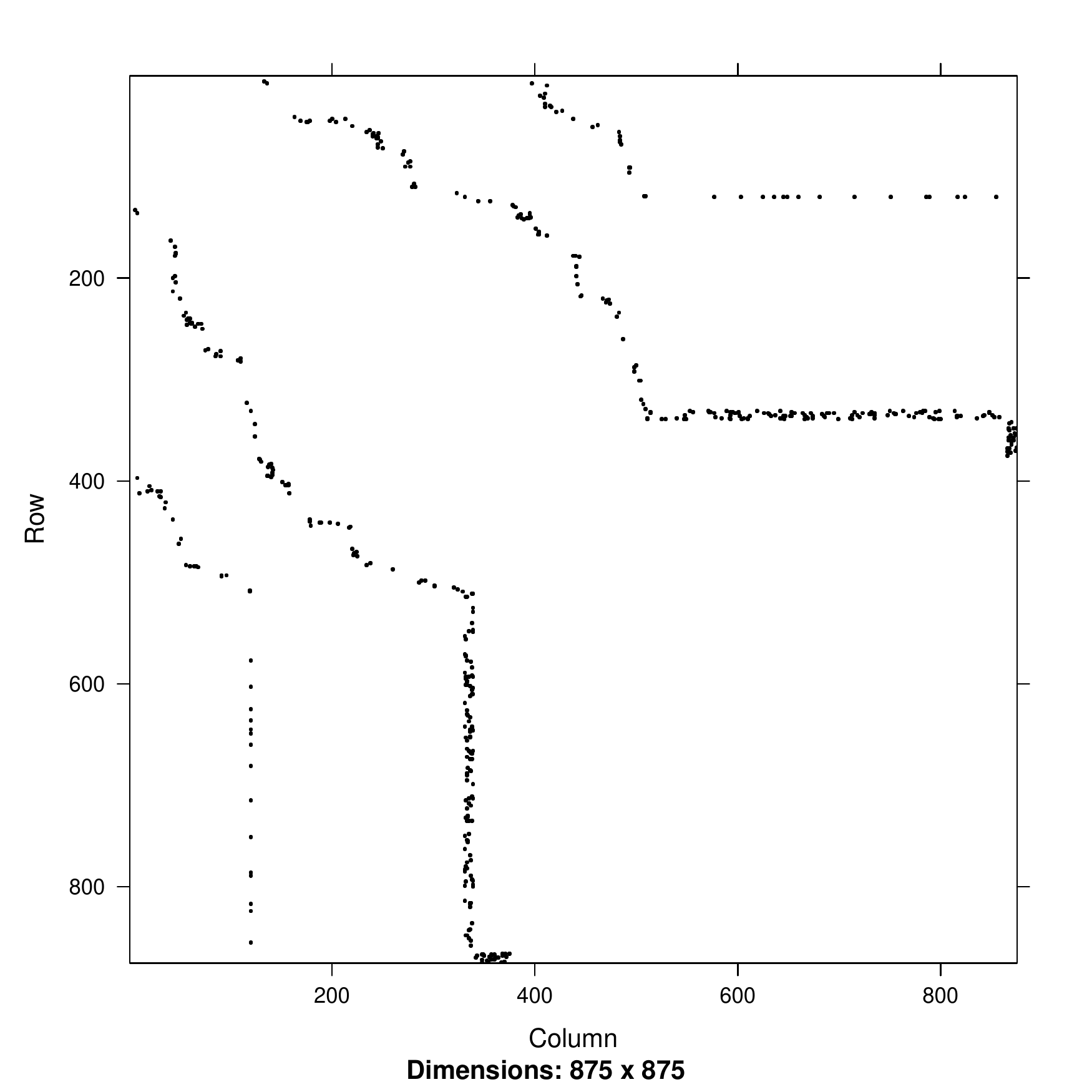}
  \label{fig:w5}
  }
  \subfigure[$\bm{W}$ of Experiment 11]{
  \myfigc{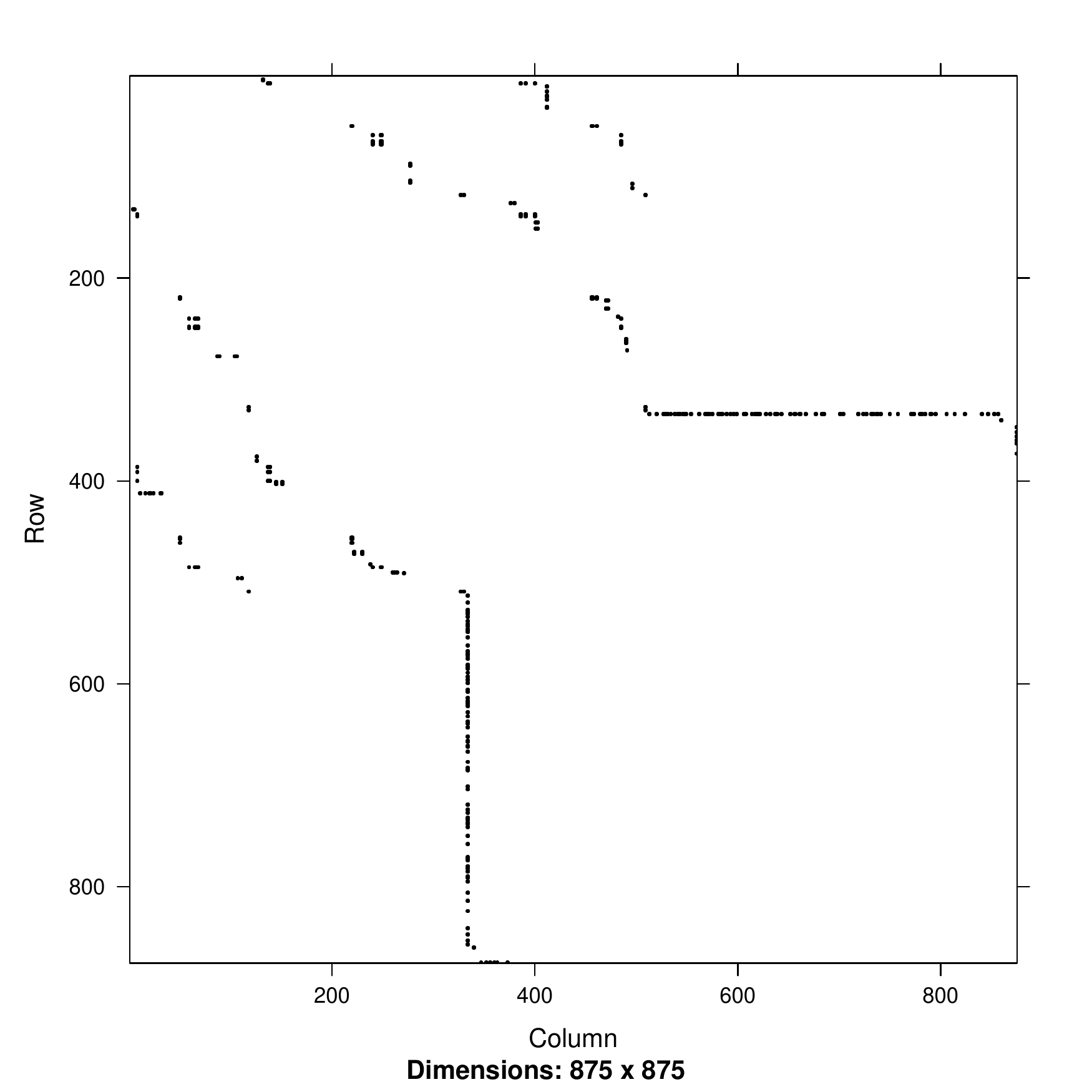}
  \label{fig:w11}
  }
 \end{center}
 \caption{Power-law matching weights.  (a)~A true matching weight of a power-law structure.
  (b)~Link sampling from $\bmb{W}_B$. (c)~Node sampling from $\bmb{W}_B$.} \label{fig:weight-b}
\end{figure}  

\begin{figure}
 \begin{center}
  \subfigure[$\gamma_M=0$]{
  \myfigc{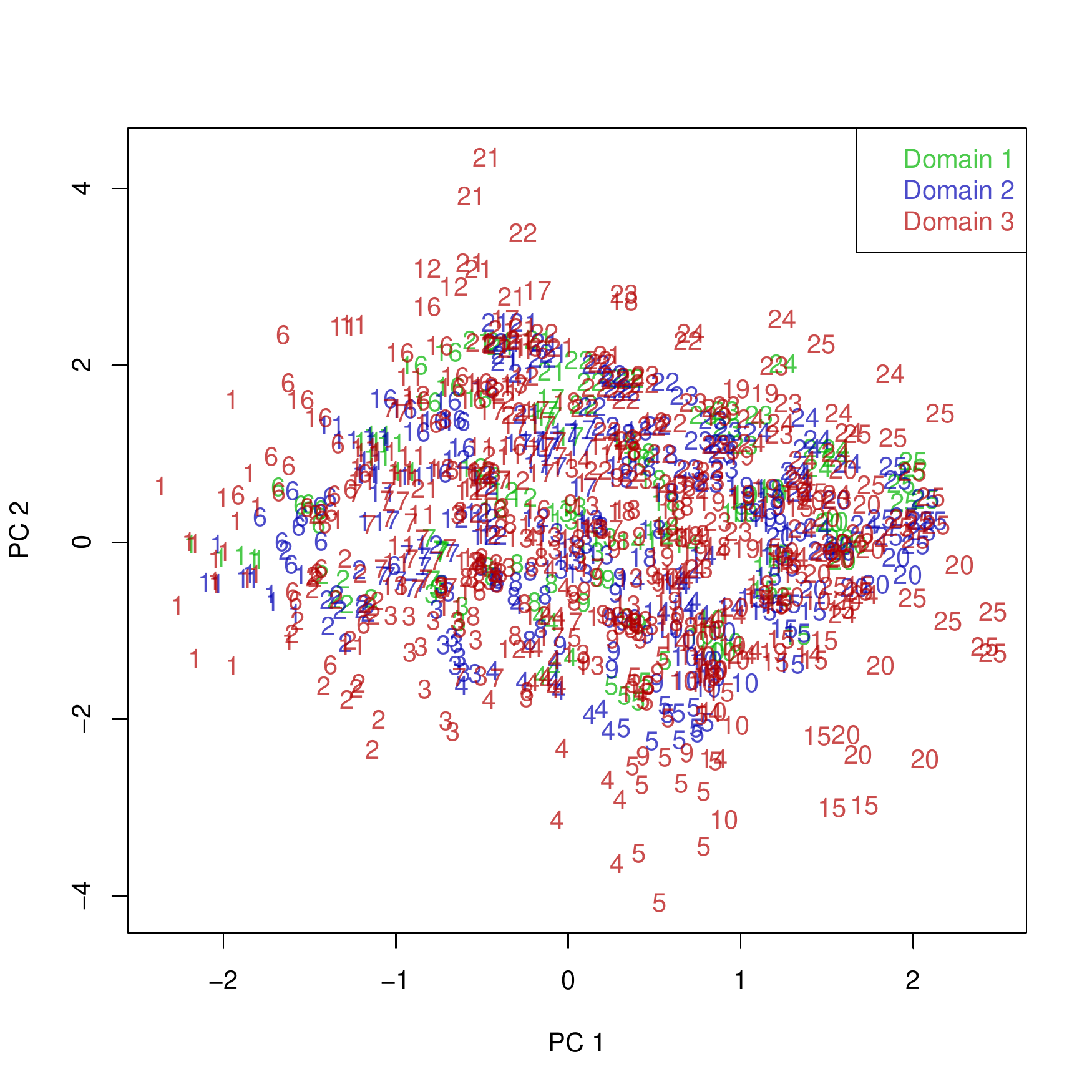}
  \label{fig:exp1-00pc}
  }
  \subfigure[$\gamma_M=0.1$]{
  \myfigc{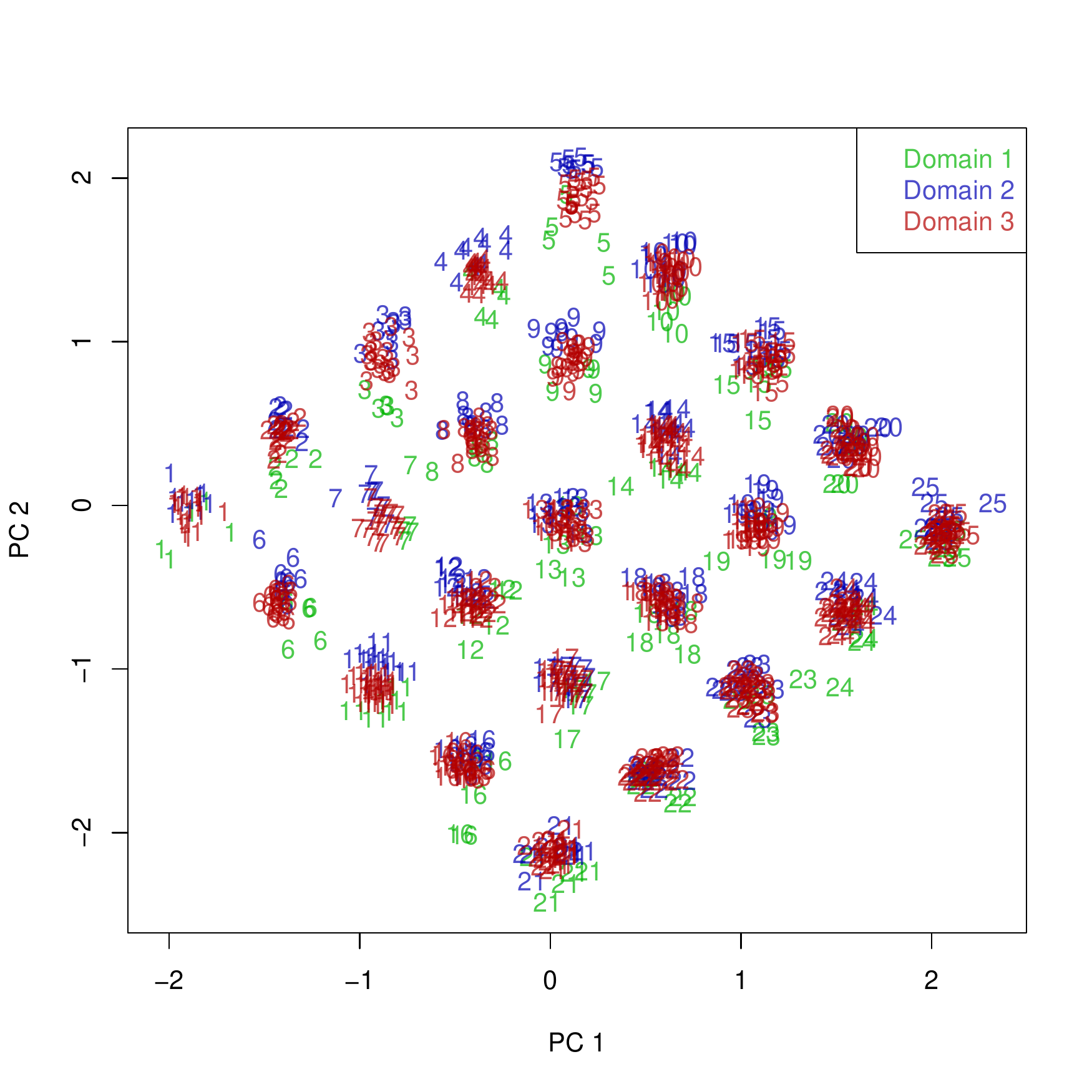}
  \label{fig:exp1-01pc}
  }
  \subfigure[matching errors (weighted)]{
  \myfigc{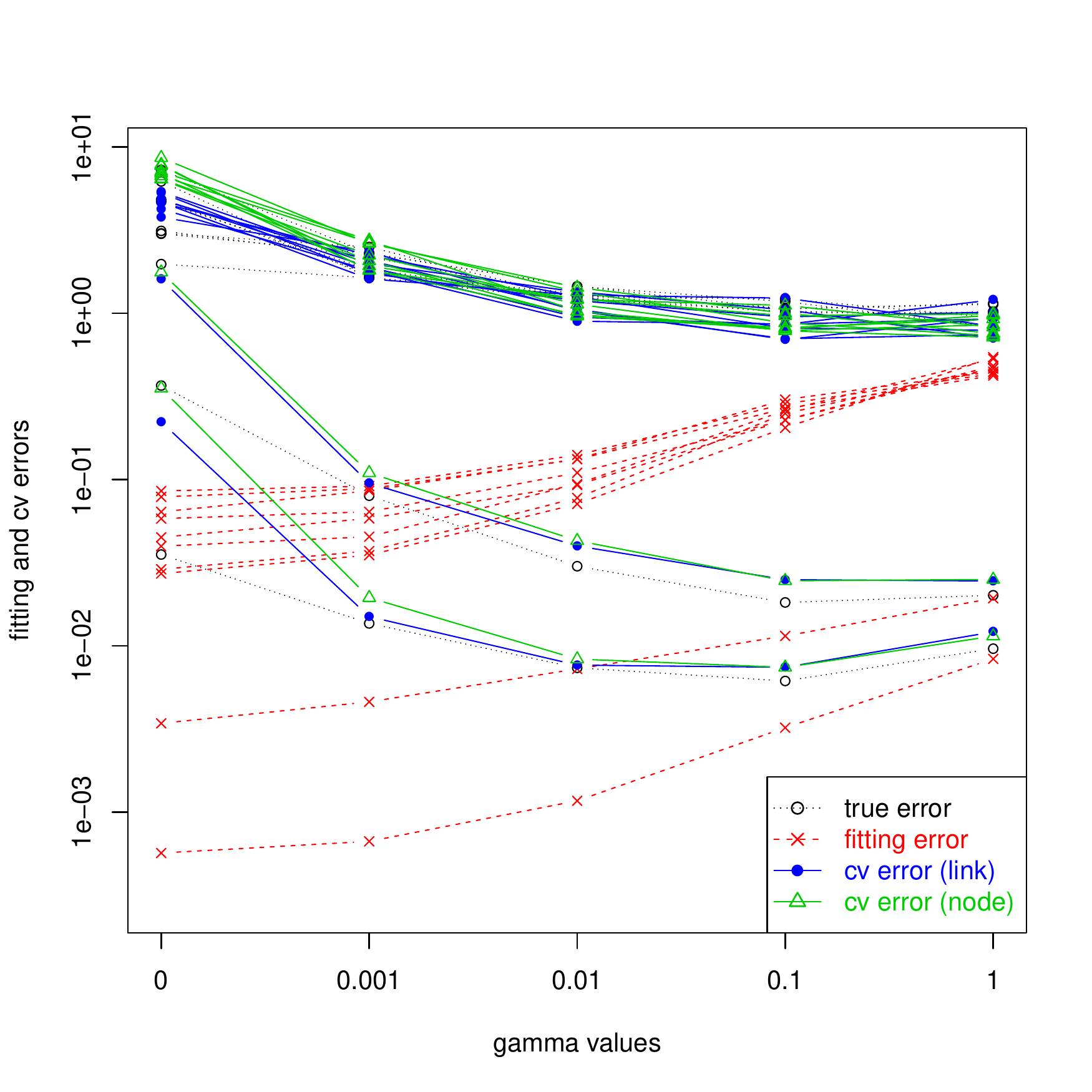}
  \label{fig:exp1-fitcv}
  }
 \end{center}
 \caption{Experiment 1. (a) and (b)~Scatter plots of the first two components of CDMCA with $\gamma_M=0$
 and $\gamma_M=0.1$ respectively. (c)~True matching error of $\bmb{W}_A$, and the other matching errors
 computed from $\bm{W}$.} \label{fig:exp1}
\end{figure}  


\begin{figure}
 \begin{center}
  \subfigure[$\gamma_M=0$]{
  \myfigc{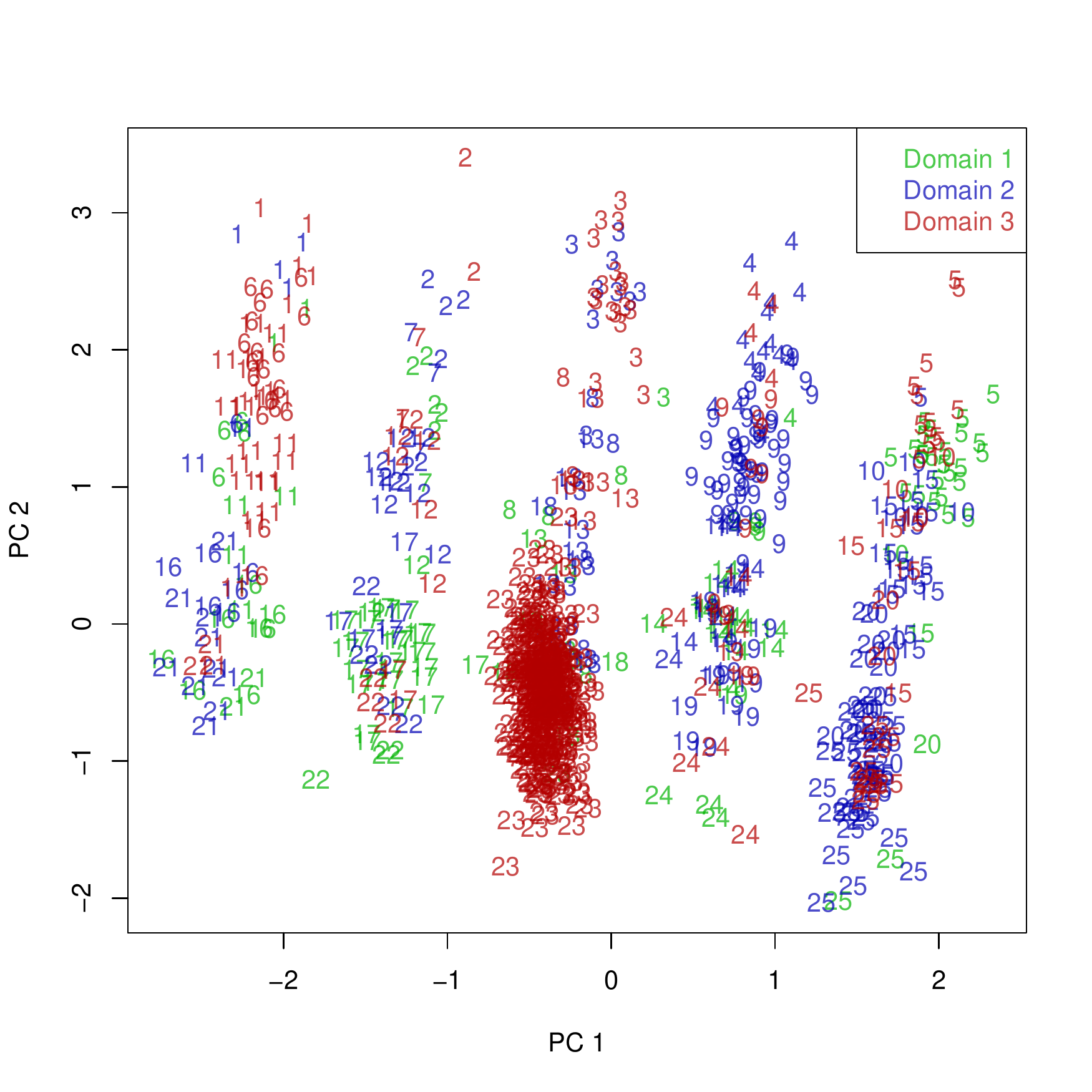}
  \label{fig:exp5-00pc}
  }
  \subfigure[$\gamma_M=0.1$]{
  \myfigc{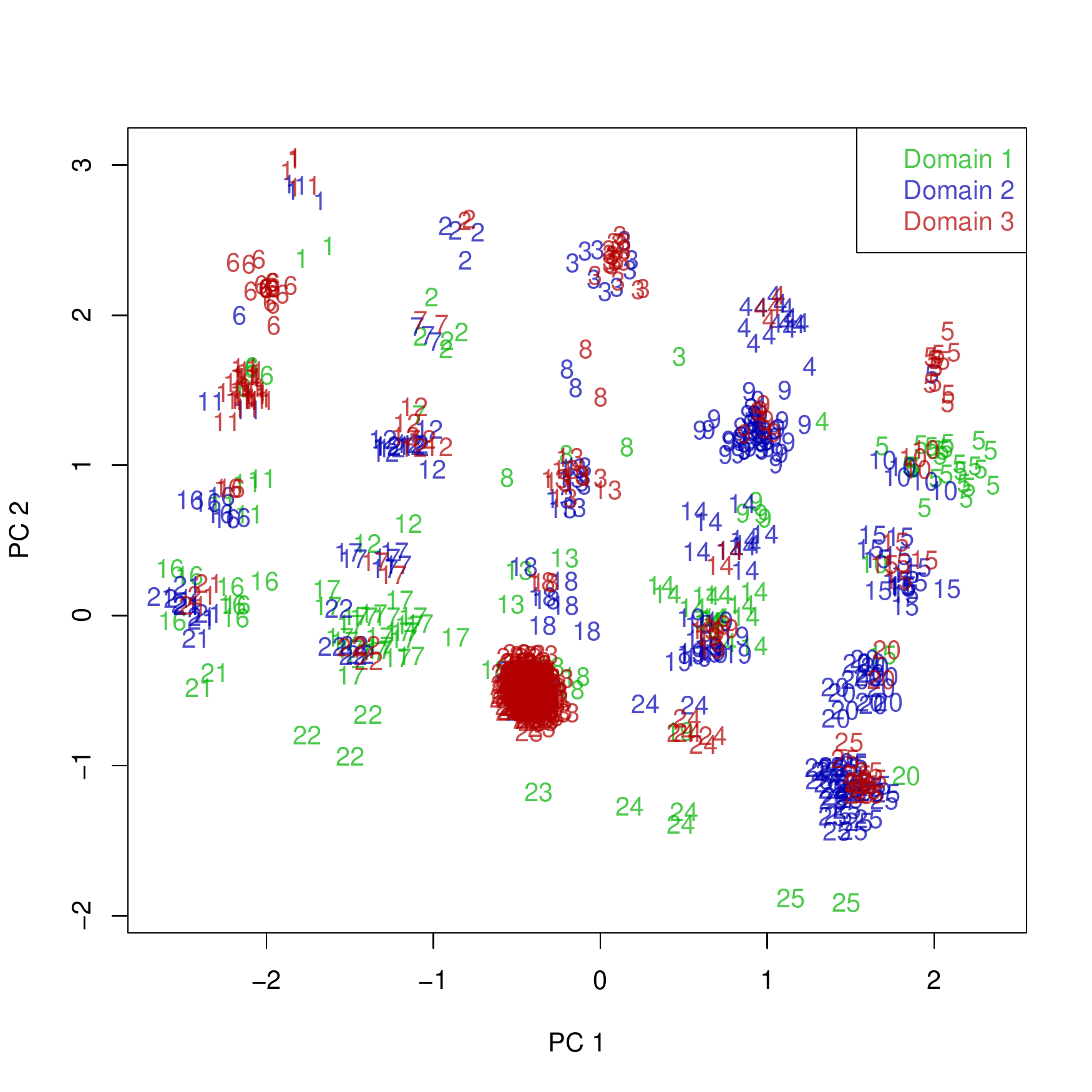}
  \label{fig:exp5-01pc}
  }
  \subfigure[matching errors (unweighted)]{
  \myfigc{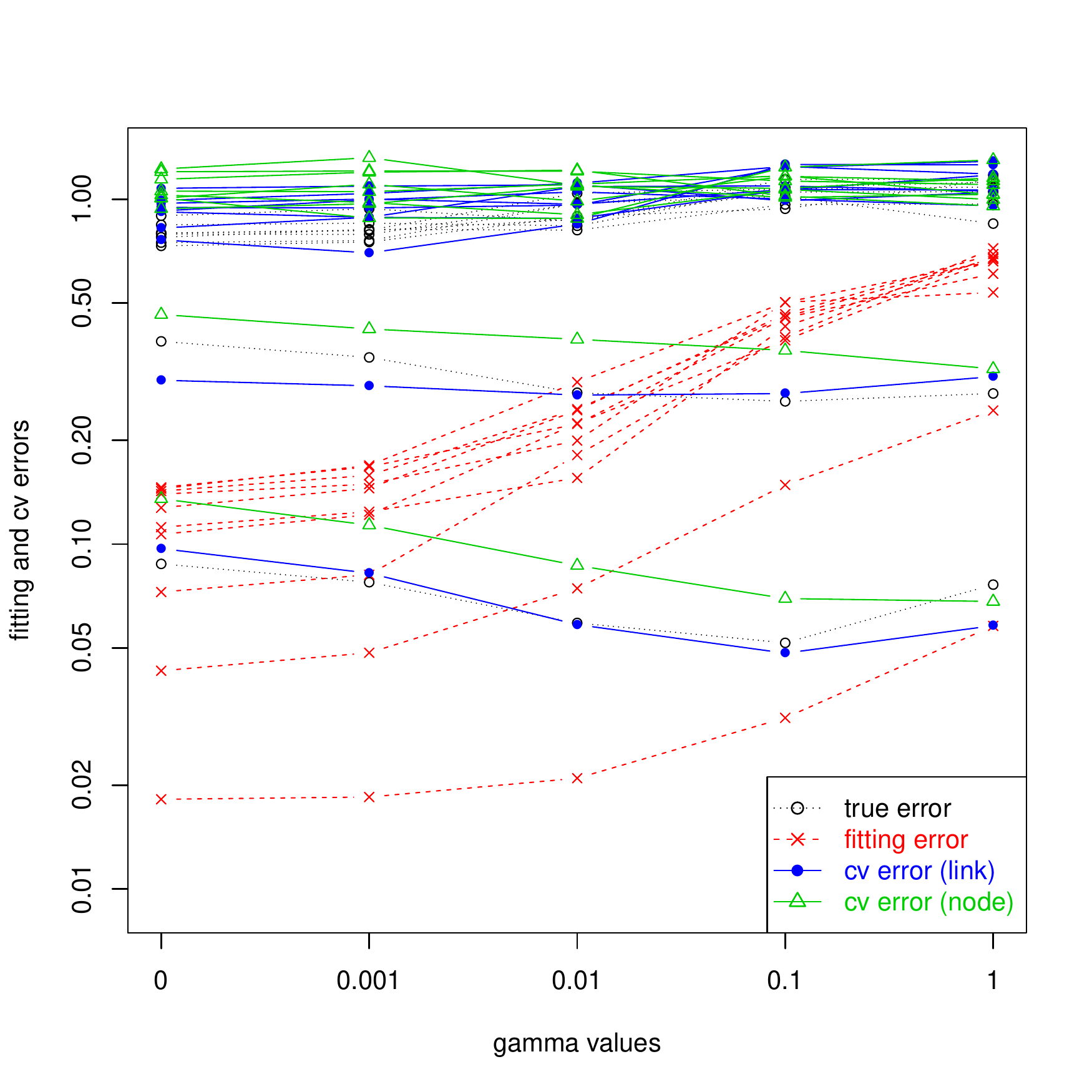}
  \label{fig:exp5-fitcv}
  }
 \end{center}
 \caption{Experiment 5. (a) and (b)~Scatter plots of the first two components of CDMCA with $\gamma_M=0$
 and $\gamma_M=0.1$ respectively. (c)~True matching error of $\bmb{W}_B$, and the other matching errors
 computed from $\bm{W}$.} \label{fig:exp5}
\end{figure}  


\begin{figure}
 \begin{center}
  \subfigure[fitting error]{
  \myfigc{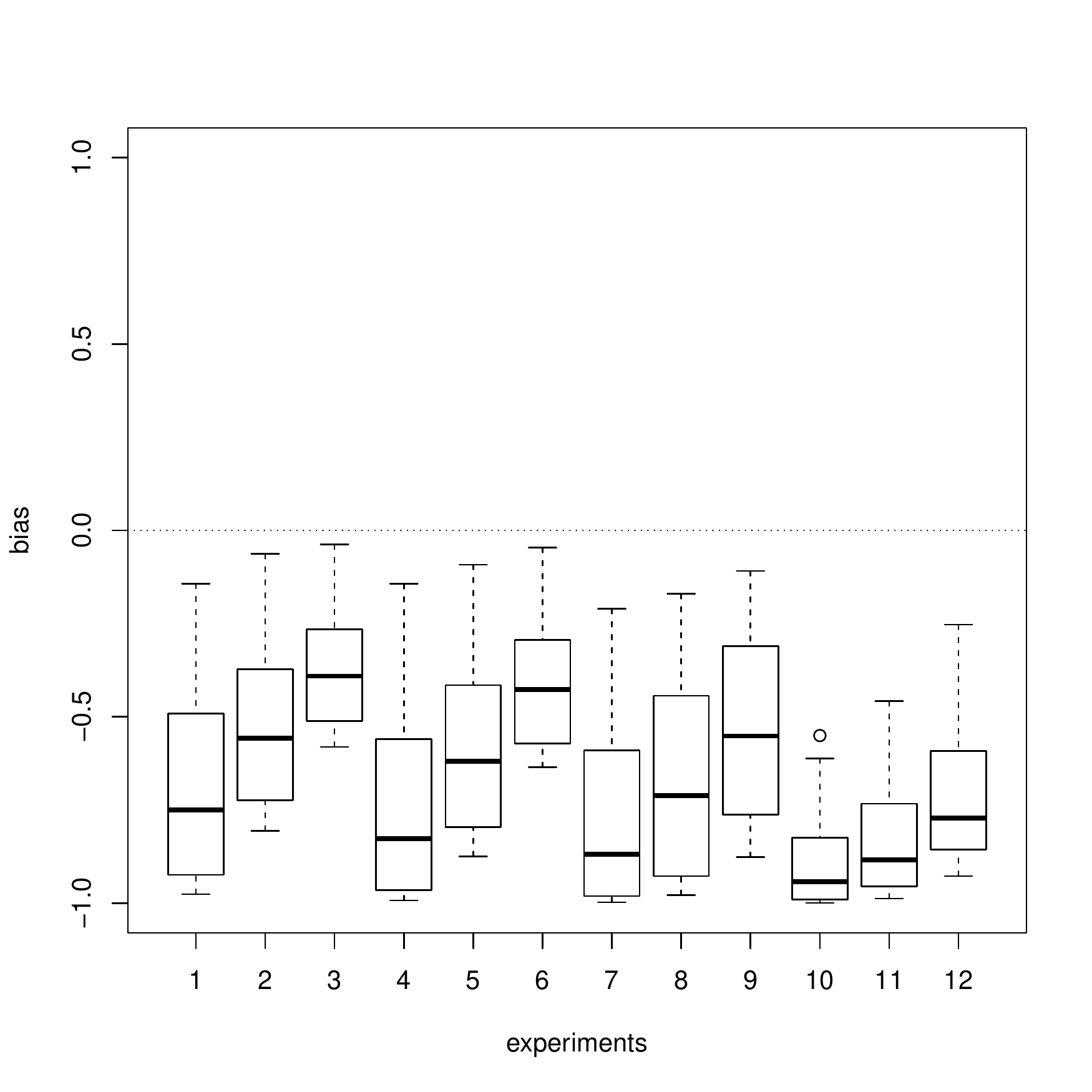}
  \label{fig:bias-fit-w}
  }
  \subfigure[cv error (link)]{
  \myfigc{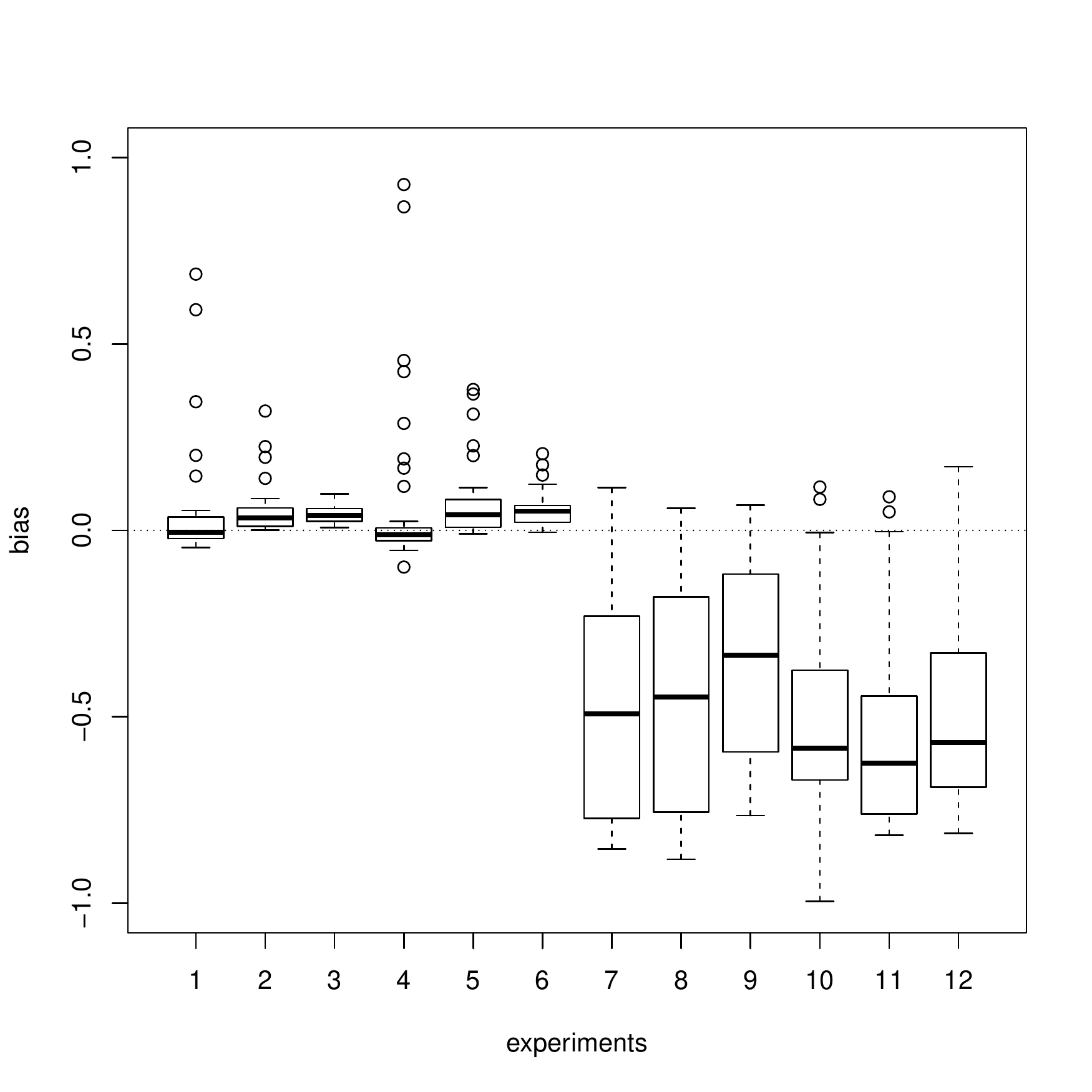}
  \label{fig:bias-cvlink-w}
  }
  \subfigure[cv error (node)]{
  \myfigc{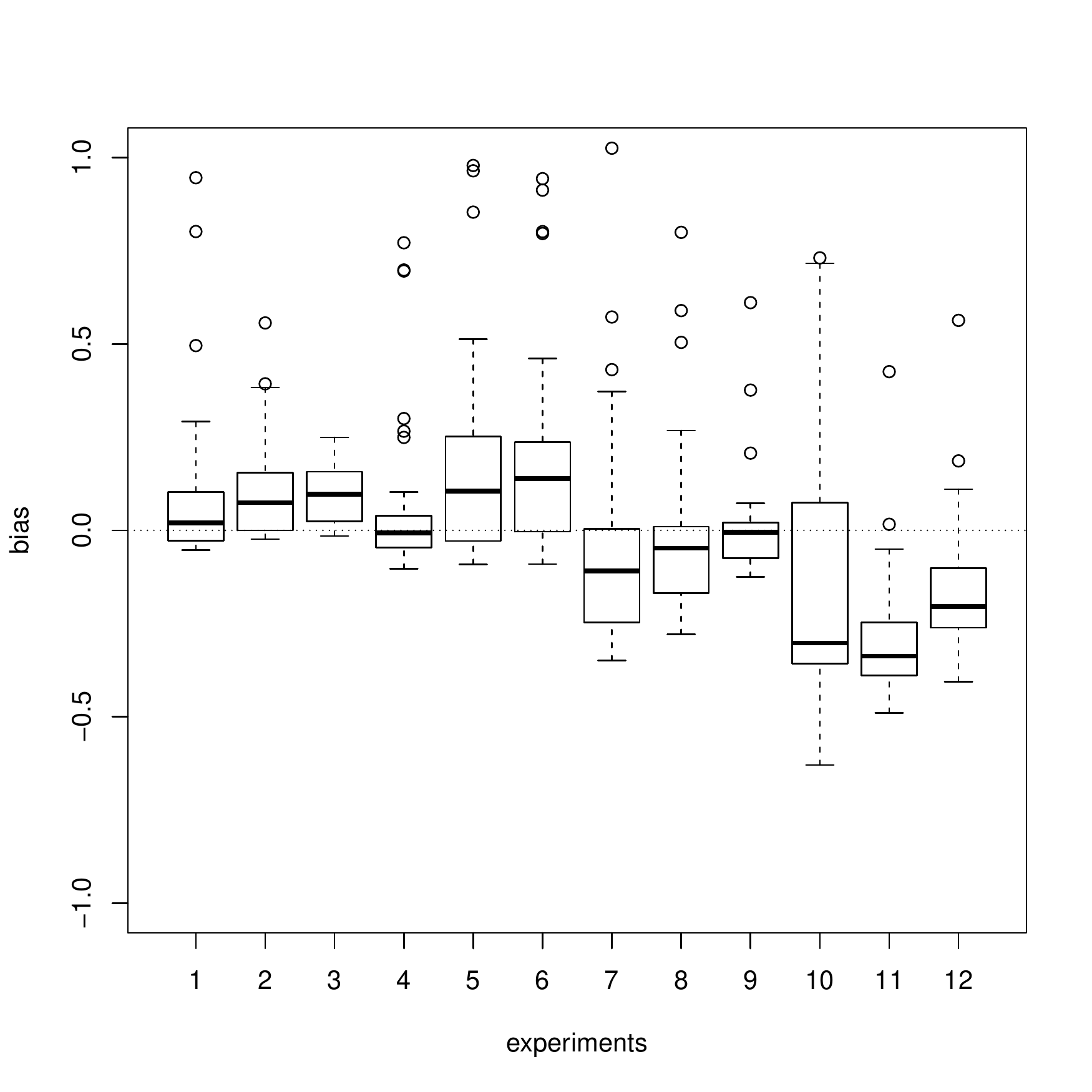}
  \label{fig:bias-cvnode-w}
  }
 \end{center}
 \caption{Bias of the matching errors of the components rescaled by the weighted variance
 (\ref{eq:ymy1}).} \label{fig:boxplots-w}
\end{figure}  

\begin{figure}
 \begin{center}
  \subfigure[fitting error]{
  \myfigc{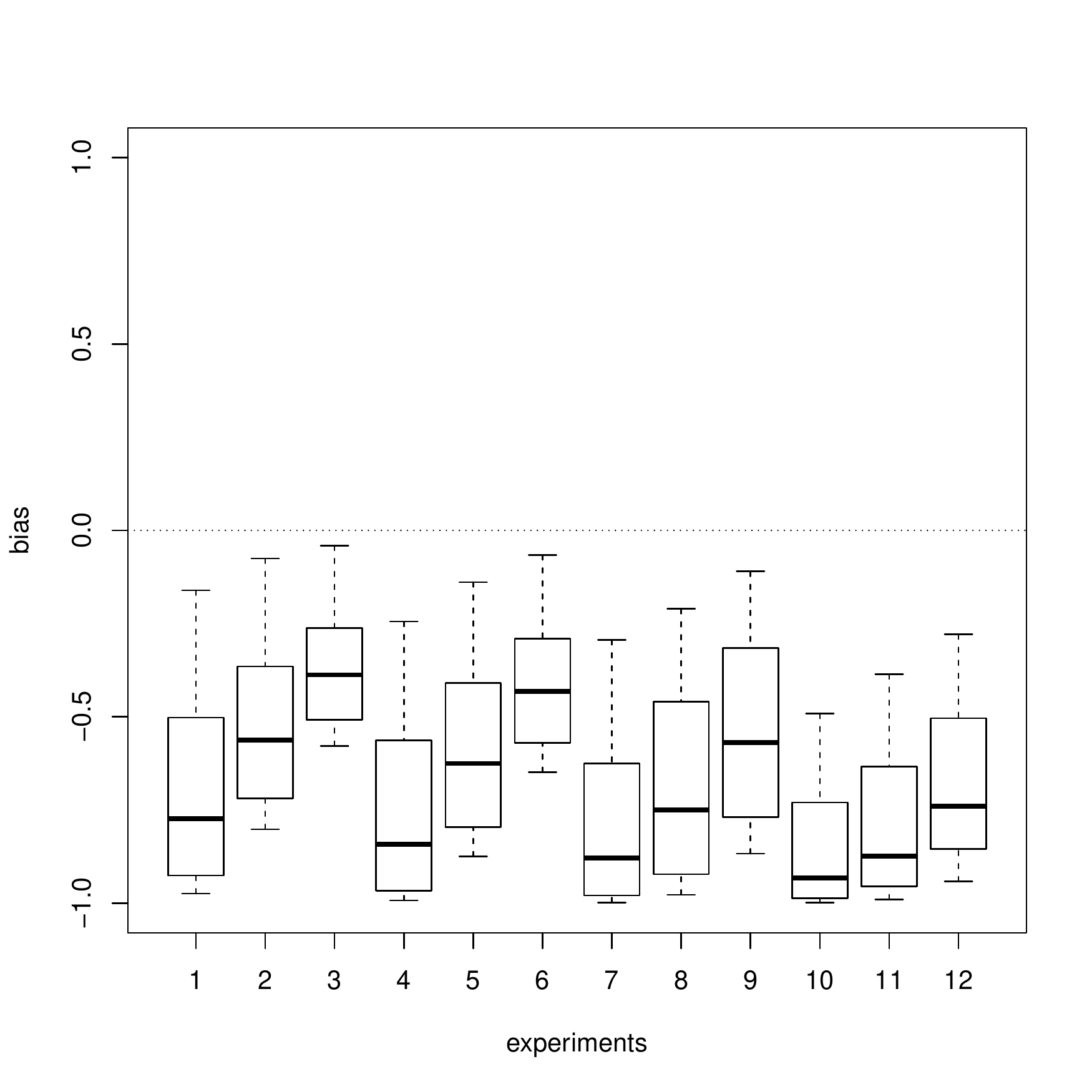}
  \label{fig:bias-fit-u}
  }
  \subfigure[cv error (link)]{
  \myfigc{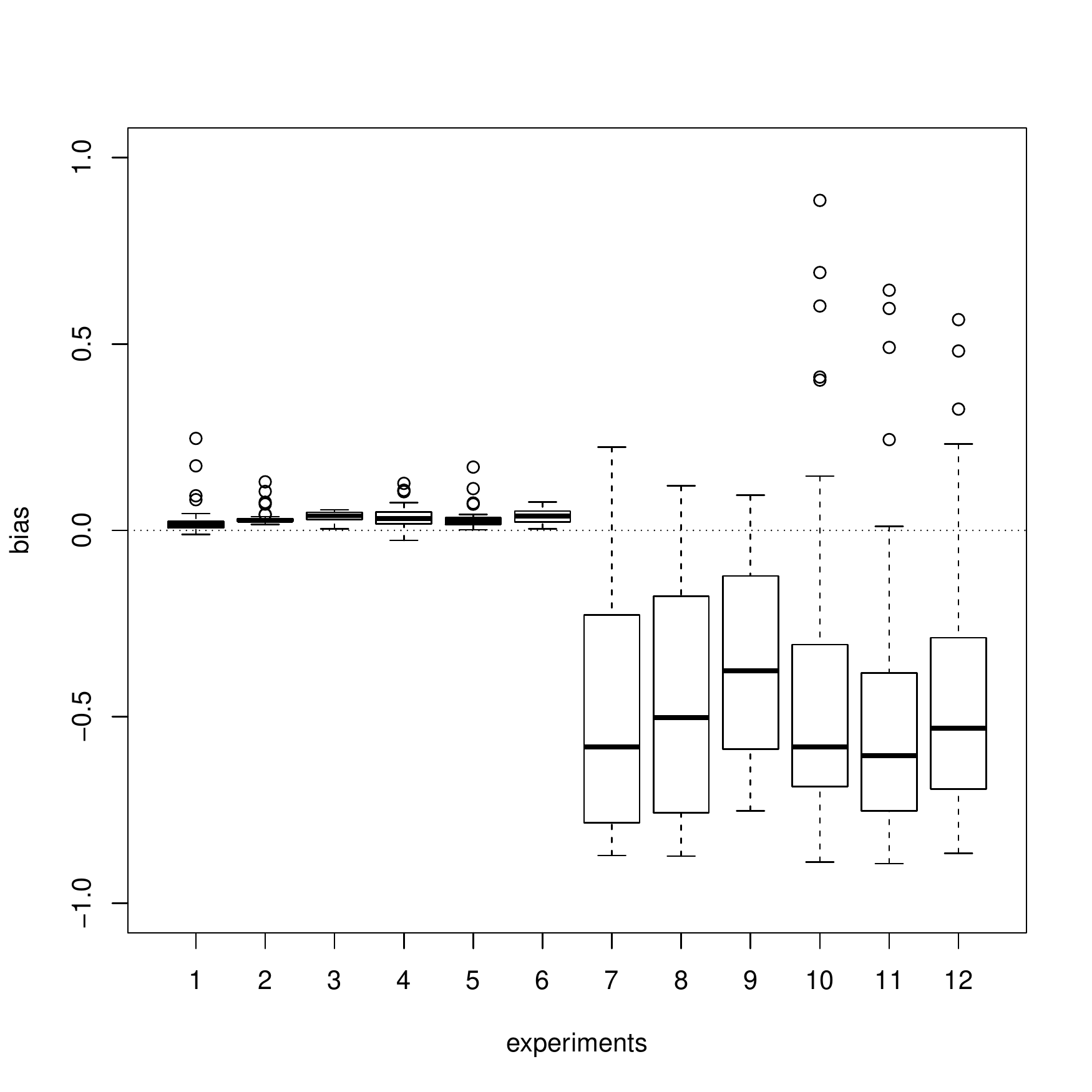}
  \label{fig:bias-cvlink-u}
  }
  \subfigure[cv error (node)]{
  \myfigc{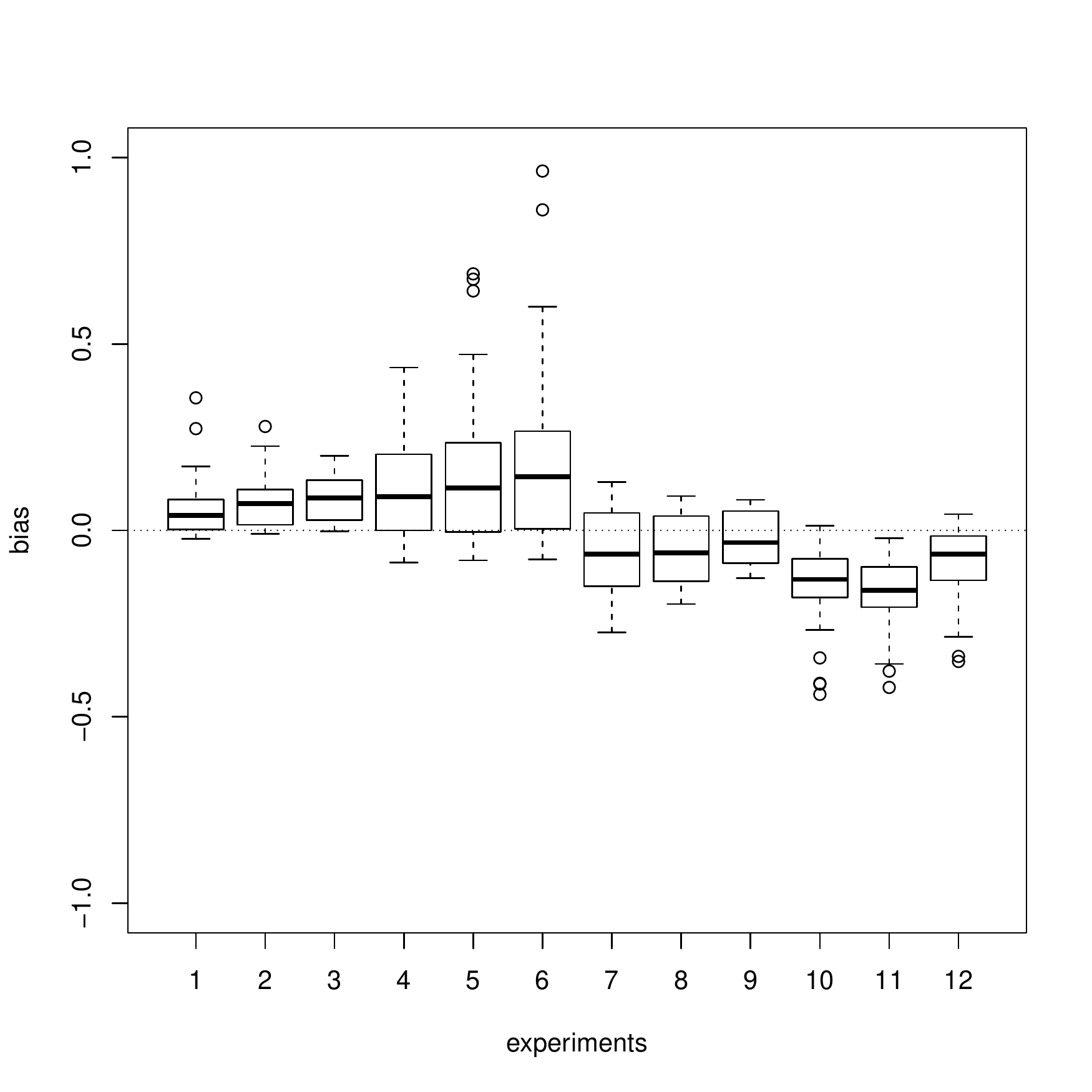}
  \label{fig:bias-cvnode-u}
  }
 \end{center}
 \caption{Bias of the matching errors of the components rescaled by the unweighted variance
 (\ref{eq:yy1}).} \label{fig:boxplots-u}
\end{figure}

\end{document}